%%%%%%%% ICML 2021 EXAMPLE LATEX SUBMISSION FILE %%%%%%%%%%%%%%%%%

\documentclass{article}

% Recommended, but optional, packages for figures and better typesetting:
\usepackage{microtype}
\usepackage{graphicx}
\usepackage{subcaption}
\usepackage{booktabs} % for professional tables
\usepackage{amsthm}
% \usepackage{tkz-graph}  

% hyperref makes hyperlinks in the resulting PDF.
% If your build breaks (sometimes temporarily if a hyperlink spans a page)
% please comment out the following usepackage line and replace
% \usepackage{icml2021} with \usepackage[nohyperref]{icml2021} above.
\usepackage{hyperref}

\newtheorem{theorem}{Theorem}
\newtheorem{lemma}{Lemma}

\usepackage{amsmath,amsfonts,bm}

\newcommand{\D}{\mathcal{D}}
\newcommand{\Proba}{\mathbb{P}}

\newcommand{\R}{\mathbb{R}}

\newcommand{\E}{\mathbb{E}}

\newcommand{\x}{\mathbf{x}}

\newcommand{\calX}{\mathcal{X}}

\newcommand{\calY}{\mathcal{Y}}

\newcommand{\calS}{\mathcal{S}}
\newcommand{\calT}{\mathcal{T}}

\newcommand{\calH}{\mathcal{H}}
\newcommand{\calR}{\mathcal{R}}
\newcommand{\calZ}{\mathcal{Z}}
\newcommand{\calN}{\mathcal{N}}
\newcommand{\RN}[1]{%
  \textup{\uppercase\expandafter{\romannumeral#1}}%
}

\newcommand{\y}{\mathbf{y}}

\newcommand{\blambda}{\boldsymbol{\lambda}}

% Attempt to make hyperref and algorithmic work together better:

% Use the following line for the initial blind version submitted for review:
%\usepackage{icml2021}

% If accepted, instead use the following line for the camera-ready submission:
\usepackage[accepted]{icml2021}

% The \icmltitle you define below is probably too long as a header.
% Therefore, a short form for the running title is supplied here:
\icmltitlerunning{Aggregating From Multiple Target-Shifted Sources}

\begin{document}

% \usetikzlibrary{positioning}
% \tikzset{
%     %Define standard arrow tip
%     >=stealth',
%     %Define style for boxes
%     punkt/.style={
%           rectangle,
%           rounded corners,
%           draw=black, very thick,
%           text width=6.5em,
%           minimum height=2em,
%           text centered},
%     % Define arrow style
%     pil/.style={
%           ->,
%           thick,
%           shorten <=2pt,
%           shorten >=2pt,}
% }

\twocolumn[
\icmltitle{Aggregating From Multiple Target-Shifted Sources}

% It is OKAY to include author information, even for blind
% submissions: the style file will automatically remove it for you
% unless you've provided the [accepted] option to the icml2021
% package.

% List of affiliations: The first argument should be a (short)
% identifier you will use later to specify author affiliations
% Academic affiliations should list Department, University, City, Region, Country
% Industry affiliations should list Company, City, Region, Country

% You can specify symbols, otherwise they are numbered in order.
% Ideally, you should not use this facility. Affiliations will be numbered
% in order of appearance and this is the preferred way.
\icmlsetsymbol{equal}{*}

\begin{icmlauthorlist}
\icmlauthor{Changjian Shui}{to}
\icmlauthor{Zijian Li}{ed}
\icmlauthor{Jiaqi Li}{goo}
\icmlauthor{Christian Gagné}{to,mila}
\icmlauthor{Charles X. Ling}{goo}
\icmlauthor{Boyu Wang}{goo,vi}
\end{icmlauthorlist}

\icmlaffiliation{to}{Université Laval}
\icmlaffiliation{goo}{Western University}
\icmlaffiliation{ed}{Guangdong University of Technology}
\icmlaffiliation{mila}{Canada CIFAR AI Chair, Mila}
\icmlaffiliation{vi}{Vector Institute}
\icmlcorrespondingauthor{Boyu Wang}{bwang@csd.uwo.ca}
\icmlcorrespondingauthor{Christian Gagné}{christian.gagne@gel.ulaval.ca}

% You may provide any keywords that you
% find helpful for describing your paper; these are used to populate
% the "keywords" metadata in the PDF but will not be shown in the document
\icmlkeywords{Domain Adaptation}

\vskip 0.3in
]

% this must go after the closing bracket ] following \twocolumn[ ...

% This command actually creates the footnote in the first column
% listing the affiliations and the copyright notice.
% The command takes one argument, which is text to display at the start of the footnote.
% The \icmlEqualContribution command is standard text for equal contribution.
% Remove it (just {}) if you do not need this facility.

\printAffiliationsAndNotice{}  % leave blank if no need to mention equal contribution
%\printAffiliationsAndNotice{\icmlEqualContribution} % otherwise use the standard text.

\begin{abstract}
Multi-source domain adaptation aims at leveraging the knowledge from multiple tasks for predicting a related target domain. A crucial aspect is to properly combine different sources based on their relations. In this paper, we analyzed the problem for aggregating source domains with different label distributions, where most recent source selection approaches fail. Our proposed algorithm differs from previous approaches in two key ways: the model aggregates multiple sources mainly through the similarity of semantic conditional distribution rather than marginal distribution; the model proposes a \emph{unified} framework to select relevant sources for three popular scenarios, i.e., domain adaptation with limited label on target domain, unsupervised domain adaptation and label partial unsupervised domain adaption. We evaluate the proposed method through extensive experiments. The empirical results significantly outperform the baselines. 
\end{abstract}

\section{Introduction}
Domain Adaptation (DA) \citep{pan2009survey} is based on the motivation that learning a new task is easier after having learned a similar task. By learning the inductive bias from a related source domain $\calS$ and then leveraging the shared knowledge upon learning the target domain $\calT$, the prediction performance can be significantly improved. Based on this, DA arises in tremendous deep learning applications such as computer vision \citep{zhang2019recent,hoffman2018cycada}, natural language processing \citep{ruder2019transfer,houlsby2019parameter} and biomedical engineering \citep{raghu2019transfusion,wang2020common}.

In various real-world applications, we want to transfer knowledge from \emph{multiple sources} $(\calS_1,\dots,\calS_T)$ to build a model for the target domain, which requires an effective selection and leveraging the \emph{most useful} sources. Clearly, solely combining all the sources and applying one-to-one single DA algorithm can lead to undesired results, as it can include irrelevant or even untrusted data from certain sources, which can severely influence the performance \citep{zhao2020multi}.

\begin{figure}
  \begin{center}
    \includegraphics[width=0.4\textwidth]{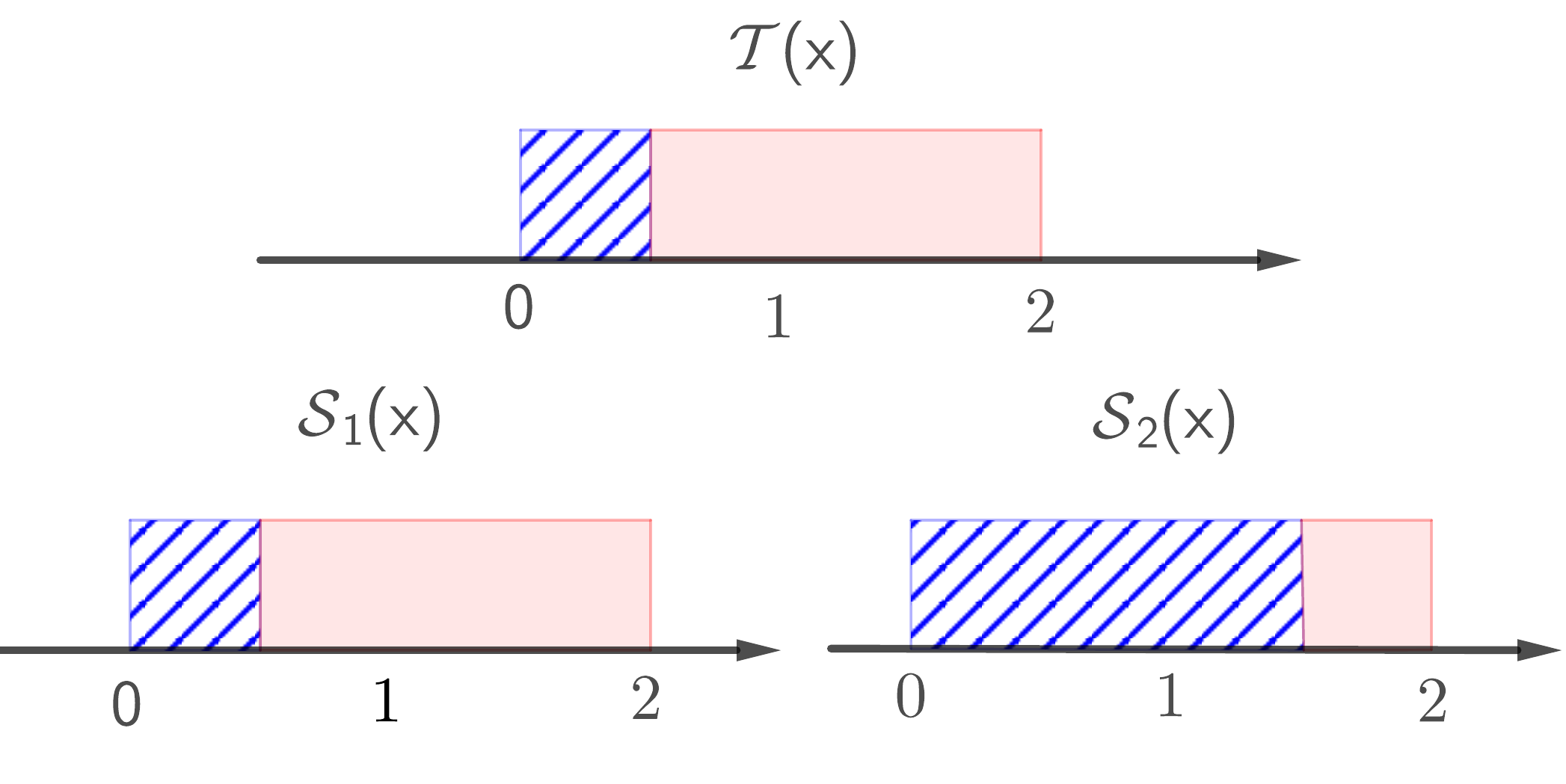}
  \end{center}
  \caption{Limitation of merely considering marginal distribution $\Proba(x)$ in the source selection. In a binary classification, we have $\calS_1(x)=\calS_2(x)=\calT(x)$, however adopting $\calS_2$ is worse than $\calS_1$ for predicting target $\calT$ due to different decision boundaries.}
  \label{fig:limit_marginal}
\end{figure}

To select related sources, most existing works \citep{zhao2018adversarial,peng2019moment,li2018domain,shui2019principled,wang2019multitask,wen2019domain} used the marginal distribution similarity ($\calS_t(x),\calT(x)$) to search the similar tasks. However, this can be problematic if their label distributions are different. As illustrated in Fig.~\ref{fig:limit_marginal}, in a binary classification, the source-target marginal distributions are identical  ($\calS_1(x)=\calS_2(x)=\calT(x)$), however, using $\calS_2$ for helping predict target domain $\calT$ will lead to a negative transfer since their decision boundaries are rather different. This is not only theoretically interesting but also practically demanding. For example, in medical diagnostics, the disease distribution between the countries can be drastically different \citep{liu2004predictive,geiss2014prevalence}. Thus applying existing approaches for leveraging related medical information from other data abundant countries to the destination country will be problematic. 

In this work, we aim to address multi-source deep DA under different label distributions with $\calS_t(y)\neq\calT(y), \calS_t(x|y)\neq\calT(x|y)$, which is more realistic and challenging. In this case, if label information on $\calT$ is absent (unsupervised DA), it is known as a underspecified problem and unsolvable \emph{in the general case} \citep{ben2010impossibility,johansson2019support}. For example, in Figure \ref{fig:limit_marginal}, it is impossible to know the preferable source if there is no label information on the target domain. Therefore, a natural extension is to assume limited label on target domain, which is commonly encountered in practice and a stimulating topic in recent research \citep{mohri2012new,wang2019transfer,saito2019semi,konstantinov2019robust,mansour2020theory}. Based on this, we propose a novel DA theory with limited label on $\calT$ (Theorem \ref{theory:w_o_rep}, \ref{theory:w_rep}), which motivates a novel source selection strategy by mainly considering the similarity of semantic conditional distribution $\Proba(x|y)$ and source re-weighted prediction loss. 

Moreover, in the \emph{specific case}, the proposed source aggregation strategy can be further extended to the unsupervised scenarios. Concretely, in our algorithm, we assume the problem satisfies the Generalized Label Shifted (GLS) condition \citep{combes2020domain}, which is related to the cluster assumption and feasible in many practical applications, as shown in Sec.~\ref{sec_man:gls}.
Based on GLS, we simply add a label distribution ratio estimator, to assist the algorithm in selecting related sources in two popular multi-source scenarios:  unsupervised DA and unsupervised label partial DA  \citep{cao2018partial} with $\text{supp}(\calT(y))\subseteq\text{supp}(\calS_t(y))$ (i.e., inherently label distribution shifted.) 

Compared with previous work, the proposed method has the following benefits:

\textbf{Better Source Aggregation Strategy}~We overcome the limitation of previous selection approaches when label distributions are different by significant improvements. Notably, the proposed approach is shown to simultaneously learn meaningful task relations and label distribution ratio.   

\textbf{Unified Method}~We provide a unified perspective to understand the source selection approach in different scenarios, in which previous approaches regarded them as separate problems. We show their relations in Fig.~\ref{fig:iterative_algo}.

\section{Related Work}
Below we list the most related work and delegate additional related work in the Appendix. 

\textbf{Multi-Source DA} has been investigated in previous literature with different aspects to aggregate source datasets. In the popular unsupervised DA, \citet{zhao2018adversarial, li2018extracting,peng2019moment, wen2019domain, hoffman2018algorithms} adopted the marginal distribution $d(\calS_t(x),\calT(x))$ of $\calH$-divergence \citep{ben2007analysis}, discrepancy \citep{mansour2009domain} and Wasserstein distance \citep{arjovsky2017wasserstein} to estimate domain relations. These works provided theoretical insights through upper bounding the target risk by the source risk, domain discrepancy of $\Proba(x)$ and an un-observable term $\eta$ -- the optimal risk on all the domains. However, as the counterexample indicates, relying on $\Proba(x)$ does not necessarily select the most related source. Therefore, \citet{konstantinov2019robust,wang2019transfer,mansour2020theory} alternatively considered the divergence between two domains with limited target label by using $\calY$-discrepancy, which is commonly faced in practice and less focused in theory. However, we empirically show it is still difficult to handle target-shifted sources.  

\textbf{Target-Shifted DA} \citep{zhang2013domain} is a common phenomenon in DA with $\calS(y)\neq\calT(y)$. Several theoretical analysis has been proposed under label shift assumption with $\calS_t(x|y)=\calT(x|y)$, e.g.\ \citet{azizzadenesheli2018regularized, garg2020unified}. \citet{pmlr-v89-redko19a} proposed optimal transport strategy for the multiple unsupervised DA by assuming $\calS_t(x|y)=\calT(x|y)$. However, this assumption is restrictive for many real-world cases, e.g., in digits dataset, the conditional distribution is clearly different between MNIST and SVHN. In addition, the representation learning based approach is \emph{not} considered in their framework. Therefore, \citet{wu2019domain,combes2020domain} analyzed DA under different assumptions in the \emph{embedding space} $\calZ$ for one-to-one unsupervised deep DA problem but did not provide guidelines of \emph{leveraging different sources} to ensure a reliable transfer, which is our core contribution. 
Moreover, the aforementioned works focus on one specific scenario, without considering its flexibility for other scenarios such as \emph{partial multi-source unsupervised DA}, where the label space in the target domain is a subset of the source domain (i.e., for some classes $\mathcal{S}_t(y)\neq 0$; $\mathcal{T}(y)= 0$) and class distributions are \emph{inherently} shifted.

\section{Problem Setup and Theoretical Insights}

Let $\calX$ denote the input space and $\calY$ the output space. We consider the predictor $h$ as a scoring function \citep{hoffman2018algorithms} with $h: \calX\times\calY \to R$ and predicted loss as $\ell: \R\to\R_{+}$ is positive, $L$-Lipschitz and upper bound by $L_{\max}$. We also assume that $h$ is $K$-Lipschitz w.r.t. the feature $x$ (given the same label), i.e. for $\forall y$, $\|h(x_1,y)-h(x_2,y)\|_2\leq K \|x_1-x_2\|_2$. We denote the expected risk w.r.t distribution $\D$: $R_{\D}(h) = \E_{(x,y)\sim\D}\,\ell(h(x,y))$ and its empirical counterpart (w.r.t. a given dataset $\hat{\D}$) $\hat{R}_{\D}(h) = \sum_{(x,y)\in\hat{\D}}\,\ell(h(x,y))$. 

In this work, we adopt the commonly used Wasserstein distance as the metric to measure domains' similarity, which is theoretically tighter than the previously adopted TV distance \cite{gong2016domain} and Jensen-Shnannon divergence. Besides, based on previous work, a common strategy to adjust the imbalanced label portions is to introduce \emph{label-distribution ratio} weighted loss with
$R^{\alpha}_{\calS}(h) = \E_{(x,y)\sim\calS}\,\alpha(y)\ell(h(x,y))$ with $\alpha(y) = \calT(y)/\calS(y)$.
We also denote $\hat{\alpha}(y)$ as its empirical counterpart, estimated from the data. 

Besides, in order to measure the task relations, we define $\blambda$ ($\blambda[t]\geq 0, \sum_{t=1}^T \blambda[t] =1$) as the \emph{task relation coefficient} vector by assigning higher weight to the more related task. 
Then we prove Theorem 1, which proposes theoretical insights of combining source domains through properly estimating $\blambda$.

\begin{theorem}\label{theory:w_o_rep}
Let $\{\hat{\calS}_t = \{(x_i,y_i)\}_{i=1}^{N_{\calS_t}}\}_{t=1}^T$ and $\hat{\calT} = \{(x_i,y_i)\}_{i=1}^{N_{\calT}}$, respectively be $T$ source and target i.i.d. samples. For $\forall h \in\calH$ with $\calH$ the hypothesis family and $\forall \blambda$, with high probability $\geq 1-4\delta$, the target risk can be upper bounded by:
\begin{small}
\begin{equation*}
\begin{split}
     & R_{\calT}(h) \leq \underbrace{\sum_{t}\blambda[t] \hat{R}^{\hat{\alpha}_t}_{\calS_t}(h)}_{(\RN{1})} 
     + \underbrace{L_{\max}d^{\sup}_{\infty} \sqrt{\sum_{t=1}^{T} \frac{\blambda[t]^2}{\beta_t}}\sqrt{\frac{\log(1/\delta)}{2N}}}_{(\RN{2})}\\
    & \qquad + \underbrace{LK \sum_{t}\blambda[t] \E_{y\sim\hat{\calT}(y)} W_1(\hat{\calT}(x|Y=y)\|\hat{\calS_t}(x|Y=y))}_{(\RN{3})}\\
    & \qquad + \underbrace{L_{\max} \sup_{t} \|\alpha_t-\hat{\alpha}_t\|_2}_{(\RN{4})} + \underbrace{\text{Comp}(N_{\calS_1}, \dots, N_{\calS_T}, N_{\calT},\delta)}_{(\RN{5})},
\end{split}
\end{equation*}
\end{small}\\[-2.5em]

where $N=\sum_{t=1}^T N_{\calS_t}$ and $\beta_t = N_{\calS_t}/N$ and $d_{\infty}^{\sup} = \max_{t\in[1,T], y\in \calY}\alpha_{t}(y)$ the maximum true label distribution ratio value. $W_1(\cdot\|\cdot)$ is the Wasserstein-1 distance with $L_2$-distance as the cost function.
$\text{Comp}(N_{\calS_1}, \dots, N_{\calS_T}, N_{\calT}, \delta)$ is a function that decreases with larger $N_{\calS_1},\dots, N_{\calT}$, given a fixed $\delta$ and hypothesis family $\calH$. (See Appendix for details)
\end{theorem}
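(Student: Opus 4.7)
The plan is to peel $R_{\calT}(h)$ into a chain of surrogates: target risk $\to$ mixture of true label-ratio-reweighted source risks $\to$ mixture with estimated ratios $\to$ empirical mixture. Each replacement contributes one of the five terms. The key identity, which makes everything work, is that whenever $\alpha_t(y)=\calT(y)/\calS_t(y)$, one has $R^{\alpha_t}_{\calS_t}(h)=\E_{y\sim\calT(y)}\E_{x\sim\calS_t(x|y)}\ell(h(x,y))$, i.e.\ the reweighting absorbs the label shift and leaves only a conditional $\calT(x|y)$ vs.\ $\calS_t(x|y)$ discrepancy. Thus for any $\blambda$ in the simplex I would write
\begin{equation*}
R_{\calT}(h)-\sum_t\blambda[t]\,R^{\alpha_t}_{\calS_t}(h)=\sum_t\blambda[t]\,\E_{y\sim\calT(y)}\bigl[\E_{\calT(x|y)}\ell(h(x,y))-\E_{\calS_t(x|y)}\ell(h(x,y))\bigr].
\end{equation*}

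Step by step, I would then proceed as follows. \textbf{(a) Term (\RN{3}).} Since $\ell$ is $L$-Lipschitz and $h(\cdot,y)$ is $K$-Lipschitz, the function $x\mapsto\ell(h(x,y))$ is $LK$-Lipschitz, so Kantorovich--Rubinstein duality yields $|\E_{\calT(x|y)}\ell(h(x,y))-\E_{\calS_t(x|y)}\ell(h(x,y))|\leq LK\,W_1(\calT(x|Y{=}y)\|\calS_t(x|Y{=}y))$; swapping populations for empiricals costs a Fournier--Guillin-type error that I would dump into Comp. \textbf{(b) Term (\RN{4}).} To pass from $\alpha_t$ to $\hat\alpha_t$, I would use $|R^{\alpha_t}_{\calS_t}(h)-R^{\hat\alpha_t}_{\calS_t}(h)|\leq L_{\max}\E_{\calS_t(y)}|\alpha_t(y)-\hat\alpha_t(y)|$ and then Cauchy--Schwarz in $\ell^2(\calY)$, using $\sum_y\calS_t(y)^2\leq 1$, to obtain $L_{\max}\|\alpha_t-\hat\alpha_t\|_2$; taking $\sup_t$ after applying $\sum_t\blambda[t]\leq 1$ gives the stated form. \textbf{(c) Term (\RN{2}).} For the empirical concentration of the mixture, I would apply a weighted Hoeffding inequality to the independent samples across all $T$ sources: each sample from source $t$ enters with coefficient $\blambda[t]/N_{\calS_t}$ and is bounded by $L_{\max}\hat\alpha_t(y)\leq L_{\max}d_{\infty}^{\sup}$, so the Hoeffding variance sums to $\sum_t N_{\calS_t}(\blambda[t]/N_{\calS_t})^2=\sum_t\blambda[t]^2/N_{\calS_t}=\frac{1}{N}\sum_t\blambda[t]^2/\beta_t$, yielding exactly term (\RN{2}). \textbf{(d) Term (\RN{5}).} The remaining uniform-in-$h$ control over $\calH$, together with the empirical-to-population Wasserstein rates from (a), are bundled into Comp via Rademacher-complexity arguments.

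The main obstacle is the Wasserstein step, specifically making the replacement of $W_1(\calT(x|Y{=}y)\|\calS_t(x|Y{=}y))$ by its empirical version fully rigorous: the per-class sample sizes are random ($\calT(y)$- and $\calS_t(y)$-multinomial), so one needs to condition on counts, apply Fournier--Guillin per class, and recombine under $\E_{y\sim\hat\calT(y)}$, all while keeping the dependence on $\delta$ clean enough to be hidden in Comp. The Hoeffding application in (c) is standard but slightly delicate because of the heterogeneous $N_{\calS_t}$ and the dependence of $\hat\alpha_t$ on the source labels, which I would circumvent by conditioning on the label-ratio estimator and charging the fluctuation of $\hat\alpha_t$ to term (\RN{4}). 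All other pieces are routine Lipschitz manipulations and convex aggregation via $\sum_t\blambda[t]=1$.
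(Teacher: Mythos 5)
Your proposal is correct and follows essentially the same route as the paper's proof: the change of measure $R^{\alpha_t}_{\calS_t}(h)=\E_{y\sim\calT(y)}\E_{x\sim\calS_t(x|y)}\ell(h(x,y))$, Kantorovich--Rubinstein duality for the $LK$-Lipschitz map $x\mapsto\ell(h(x,y))$, a McDiarmid/Hoeffding bound on the $\blambda$-weighted empirical risk yielding the $\sqrt{\sum_t\blambda[t]^2/\beta_t}$ factor, a H\"older/Cauchy--Schwarz step for the $\alpha_t$ versus $\hat\alpha_t$ gap, and Rademacher complexity plus per-class empirical Wasserstein convergence rates absorbed into the Comp term. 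The only (harmless) difference is ordering: the paper concentrates the empirical risk under the \emph{true} weights $\alpha_t$ first, so the range bound $\alpha_t(y)\le d^{\sup}_{\infty}$ is immediate, and only afterwards swaps to $\hat\alpha_t$, which sidesteps the conditioning issue you flag in step (c).
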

\paragraph{Discussions}  (1) In (\RN{1}) and (\RN{3}), the relation coefficient $\blambda$ is decided by $\hat{\alpha}_t$-weighted loss $\hat{R}^{\hat{\alpha}_t}_{\calS_t}(h)$ and conditional Wasserstein distance $\E_{y\sim\hat{\calT}(y)} W_1(\hat{\calT}(x|Y=y)\|\hat{\calS_t}(x|Y=y))$. Intuitively,  a higher $\blambda[t]$ is assigned to the source $t$ with a \emph{smaller weighted prediction loss} and \emph{a smaller weighted semantic conditional Wasserstein distance}. In other words, the source selection depends on the similarity of the conditional distribution $\Proba(x|y)$ rather than $\Proba(x)$.

(2) If each source has equal samples ($\beta_t=1/T$), then term (\RN{2}) will become $\|\blambda\|_2$,  \emph{a regularization term for the encouragement of uniformly leveraging all sources}. Term  (\RN{2}) is meaningful in the selection, because if several sources are simultaneously similar to the target, then the algorithm tends to select \emph{a set of} related domains rather than only one most related domain (without regularization). 

(3) Considering (\RN{1},\RN{2},\RN{3}), we derive a novel source selection approach through the trade-off between assigning a higher $\blambda[t]$ to the source $t$ that has a smaller weighted prediction loss and similar semantic distribution with smaller conditional Wasserstein distance, and assigning balanced $\blambda[t]$ for avoiding concentrating on one source.

(4) $\|\hat{\alpha}_t-\alpha_t\|_2$ (\RN{4}) indicates the gap between ground-truth and empirical label ratio. Therefore, if we can estimate a good label distribution ratio $\hat{\alpha}_t$, these terms can be small.  $\text{Comp}(N_{\calS_1}, \dots, N_{\calS_T}, N_{\calT}, \delta)$ (\RN{5}) is a function that reflects the convergence behavior, which decreases with larger observation numbers. If we fix $\calH, \delta$, $N$ and $N_{\calT}$, this term can be viewed as a constant.

\paragraph{Analysis in the Representation Learning} Apart from Theorem \ref{theory:w_o_rep}, we further drive theoretical analysis in the \emph{representation learning}, which motivates practical guidelines in the deep learning regime. We define a stochastic embedding $g$ and we denote its conditional distribution w.r.t.\ latent variable $Z$ (induced by $g$) as $\calS(z|Y=y) = \int_{x} g(z|x) \calS(x|Y=y) dx$. Then we have:
\begin{theorem}\label{theory:w_rep}

We assume the settings of loss, the hypothesis are the same with Theorem \ref{theory:w_o_rep}. We further denote the stochastic feature learning function $g:\calX\to\calZ$, and the hypothesis $h:\calZ\times\calY \to \R$. Then $\forall \blambda$, the target risk is upper bounded by:\\[-0.5em]
%\begin{small}
\begin{equation*}
\begin{split}
    & R_{\calT}(h, g) \leq \sum_{t}\blambda[t] R^{\alpha_t}_{\calS_t}(h, g) \\
    & \quad + LK \sum_{t}\blambda[t] \E_{y\sim\calT(y)} W_1(\calS_t(z|Y=y)\| \calT(z|Y=y)),
\end{split}
\end{equation*}
%\end{small}\\[-2.5em]

where $R_{\calT}(h,g)=\E_{(x,y)\sim\calT(x,y)}\E_{z\sim g(z|x)} \ell(h(z,y))$ is the expected risk w.r.t.\ the function $g,h$.
\end{theorem}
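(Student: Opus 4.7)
The plan is to adapt the population-level argument underlying Theorem~\ref{theory:w_o_rep} to the representation-learning setting, where the randomness now includes the stochastic embedding $g$. Since there are no sampling terms on the right-hand side, the proof will avoid all the concentration/complexity machinery and reduce to a clean Kantorovich--Rubinstein duality argument applied class-wise in the latent space $\calZ$.

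First I would use $\sum_t \blambda[t] = 1$ to write $R_{\calT}(h,g) = \sum_t \blambda[t]\,R_{\calT}(h,g)$, so it suffices to bound, for each $t$ separately,
\begin{equation*}
R_{\calT}(h,g) - R^{\alpha_t}_{\calS_t}(h,g) \leq LK\,\E_{y\sim\calT(y)} W_1\bigl(\calS_t(z|Y=y)\,\|\,\calT(z|Y=y)\bigr).
\end{equation*}
The next step is to rewrite both risks in terms of the latent-space conditionals $\calD(z|Y=y) = \int_x g(z|x)\calD(x|Y=y)\,dx$. For the target side, conditioning on $y$ gives $R_{\calT}(h,g) = \E_{y\sim\calT(y)}\E_{z\sim\calT(z|y)}\ell(h(z,y))$. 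For the source side, using $\alpha_t(y) = \calT(y)/\calS_t(y)$ and the tower property in the order $(y,x,z)$,
\begin{equation*}
R^{\alpha_t}_{\calS_t}(h,g) = \E_{y\sim\calS_t(y)}\alpha_t(y)\,\E_{x\sim\calS_t(x|y)}\E_{z\sim g(z|x)}\ell(h(z,y)) = \E_{y\sim\calT(y)}\E_{z\sim\calS_t(z|y)}\ell(h(z,y)),
\end{equation*}
so the label-ratio reweighting exactly matches the two distributions over $y$, isolating the mismatch purely to the conditionals in $\calZ$.

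The key step is then the Kantorovich--Rubinstein (dual) inequality applied for each fixed label $y$. Since $\ell$ is $L$-Lipschitz and $h(\cdot,y)$ is $K$-Lipschitz in its first argument (the Lipschitz assumption stated in the setup, now read in the $z$-variable of the representation-learning hypothesis), the map $z \mapsto \ell(h(z,y))$ is $LK$-Lipschitz. By the dual form of $W_1$ with $L_2$ cost,
\begin{equation*}
\E_{z\sim\calT(z|y)}\ell(h(z,y)) - \E_{z\sim\calS_t(z|y)}\ell(h(z,y)) \leq LK \cdot W_1\bigl(\calT(z|y)\,\|\,\calS_t(z|y)\bigr).
\end{equation*}
Taking expectation over $y\sim\calT(y)$ yields the per-source bound; multiplying by $\blambda[t]$ and summing recovers the theorem.

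The main obstacle I anticipate is purely bookkeeping: justifying the $\alpha_t$ reweighting cleanly in the stochastic-embedding setting requires swapping the order of the expectations over $(y,x,z)$ and marginalizing $g(z|x)$ against $\calS_t(x|y)$ to produce the conditional $\calS_t(z|y)$, and I must be careful that the $K$-Lipschitz assumption on $h$ is consistent with its latent-space argument $z$ (rather than the input $x$ as stated in the general setup). Once this is set up, the per-class $W_1$ duality step is immediate and the rest is a convex combination, so no delicate inequality beyond Kantorovich--Rubinstein is needed.
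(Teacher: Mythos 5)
Your proposal is correct and follows essentially the same route as the paper's proof: rewrite the $\alpha_t$-weighted source risk by marginalizing $g(z|x)$ against $\calS_t(x|y)$ so that the label marginals match and only the latent conditionals differ, then apply Kantorovich--Rubinstein duality class-wise with the $LK$-Lipschitz map $z\mapsto\ell(h(z,y))$, and finish by a convex combination over sources. The subtlety you flag about reading the $K$-Lipschitz assumption in the latent variable $z$ is exactly the (implicit) assumption the paper also makes.
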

Theorem 2 motivates the practice of deep learning, which requires to learn an embedding function $g$ that minimizes the weighted conditional Wasserstein distance and learn $(g,h)$ that minimizes the weighted source risk $R_{\calS_t}^{\alpha_t}$.

\section{Practical Algorithm in Deep Learning}\label{sec:framework_main}
From the aforementioned theoretical results, we derive novel source aggregation approaches and training strategies, which can be summarized as follows.

\textbf{Source Selection Rule}~Balance the trade-off between assigning a higher $\blambda[t]$ to the source $t$ that has a smaller weighted prediction loss and semantic conditional Wasserstein distance, and assigning balanced $\blambda[t]$. 

\textbf{Training Rules}~(1) Learning an embedding function $g$ that minimizes the weighted conditional Wasserstein distance, learning classifier $h$ that minimizes the $\hat{\alpha}_t$-weighted source risk; (2) Properly estimate the label distribution ratio $\hat{\alpha}_t$.

Based on these ideas, we proposed Wasserstein Aggregation Domain Network (WADN) to automatically learn the network parameters and select related sources, where the high-level protocol is illustrated in Fig.~\ref{fig:iterative_algo}. 

\begin{figure}[t]
  \centering
    \includegraphics[width=0.45\textwidth]{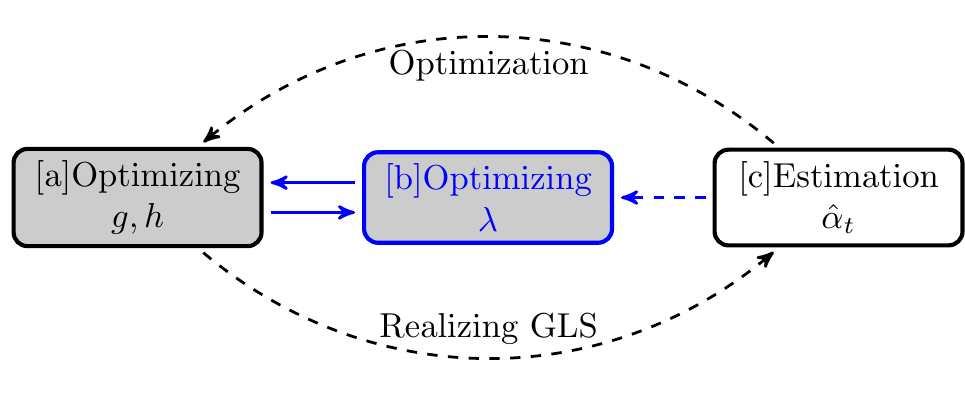}
  \caption{Illustration of proposed algorithm (WADN) and relation with other scenarios. WADN consists three components: \textbf{[a]} learning embedding function $g$ and classifier $h$; \textbf{[b]} source aggregation through properly estimating $\blambda$; \textbf{[c]} label distribution ratio ($\hat{\alpha}_t$) estimator. 
    (1) If target labels are available, then WADN only requires \textbf{[a,b]} without gradually estimating $\hat{\alpha}_t$ (dashed arrows). (2)  In the unsupervised scenarios, if we only have one source, WADN only contains \textbf{[a,c]} and recovers the single DA problem with label proportion shift, which can be solved under specific assumptions such as GLS \citep{li2019target,combes2020domain} or \citep{wu2019domain}. (3) If there are multiple sources in the unsupervised DA, WADN gradually selects the related sources through interacting with other algorithmic components. (shown in blue).}
  \label{fig:iterative_algo}
\end{figure}

\subsection{Training Rules}\label{sec_man:representation}
Based on Theorem 2, given a fixed label ratio $\hat{\alpha}_t$ and fixed $\blambda$, the goal is to find a representation function $g:\calX\to\calZ$ and a hypothesis function $h:\calZ\times\calY\to\R$ such that:
\begin{equation*}
\begin{split}
    \min_{g,h} & \sum_{t}\blambda[t] \hat{R}^{\hat{\alpha}_t}_{\calS_t}(h,g) \\
    & + C_0 \sum_{t}\blambda[t] \E_{y\sim\hat{\calT}(y)} W_1(\hat{\calS}_t(z|Y=y)\| \hat{\calT}(z|Y=y))
\end{split}
   % \label{eq:learn_g_h}
\end{equation*}

\paragraph{Explicit Conditional Loss} One can \emph{explicitly} solve the conditional optimal transport problem with $g$ and $h$ for a given $Y=y$. However, due to the high computational complexity in solving $T \times |\calY|$ optimal transport problems, the original form is practically intractable. To address this, we can approximate the conditional distribution on latent space $Z$ as Gaussian distribution with identical Covariance matrix such that $\hat{\calS}_t(z|Y=y) \approx \calN(\mathbf{C}^y_t,\mathbf{\Sigma})$ and $\hat{\calT}(z|Y=y) \approx \calN(\mathbf{C}^y,\mathbf{\Sigma})$. Then we have $W_1(\hat{\calS}_t(z|Y=y)\| \hat{\calT}(z|Y=y)) \leq \|\mathbf{C}^y_t-\mathbf{C}^y\|_2$. Intuitively, the approximation term is equivalent to the well known \emph{feature mean matching}~\citep{sugiyama2012machine}, which computes the feature centroid of each class (on the latent space $Z$) and aligns them by minimizing their $L_2$ distance. 

\paragraph{Implicit Conditional Loss} Apart from approximation, we can derive a dual term for facilitating the computation, which is equivalent to the re-weighted Wasserstein adversarial loss by the label-distribution ratio.  
\begin{lemma}\label{lemma1}
The weighted conditional Wasserstein distance can be implicitly expressed as:
\begin{equation*}
\begin{split}
    & \sum_{t}\blambda[t] \E_{y\sim \calT(y)} W_1(\calS_t(z|Y=y)\| \calT(z|Y=y)) \\
    & =  \max_{d_1,\cdots,d_T} \sum_{t}\blambda[t] [\E_{z\sim\calS_t(z)} \bar{\alpha}_t(z) d_t(z) - \E_{z\sim\calT(z)} d_t(z)],
\end{split}
\end{equation*}
where $\bar{\alpha}_{t}(z) = \mathbf{1}_{\{(z,y)\sim\calS_t\}}\alpha_t(Y=y) $, 
and $d_1,\dots,d_T:\calZ\to R_{+}$ are the $1$-Lipschitz domain discriminators~\citep{ganin2016domain}.
\end{lemma}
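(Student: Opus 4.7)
The plan is to establish the identity in three steps: first apply the Kantorovich--Rubinstein dual form of $W_1$ class by class, then change measure from the class-conditional distributions to the full source and target marginals using the label-distribution ratio $\alpha_t$, and finally collect the per-source suprema into a joint supremum over $(d_1,\ldots,d_T)$.

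For the first step, fix a source index $t$ and a label $y$. Kantorovich--Rubinstein duality (with $L_2$ ground cost) gives
$$W_1(\calS_t(z|Y=y)\|\calT(z|Y=y)) = \sup_{\|d\|_L\le 1}\E_{z\sim\calS_t(z|Y=y)}d(z) - \E_{z\sim\calT(z|Y=y)}d(z).$$
Taking expectation over $y\sim\calT(y)$ and interchanging it with the class-wise suprema --- legitimate because the optimization decouples across $y$ --- yields a single $1$-Lipschitz discriminator $d_t$ (which \emph{a priori} may depend on the label attached to $z$).

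For the second step, the elementary identity $\calT(y)=\alpha_t(y)\calS_t(y)$ lets me rewrite
$$\E_{y\sim\calT(y)}\E_{z\sim\calS_t(z|Y=y)}d_t(z)=\E_{(z,y)\sim\calS_t}\alpha_t(y)d_t(z)=\E_{z\sim\calS_t(z)}\bar{\alpha}_t(z)d_t(z),$$
which is precisely the source term on the right-hand side. The analogous identity for the target reduces to $\E_{z\sim\calT(z)}d_t(z)$, because $d_t$ depends only on $z$ and the conditional average over $y\sim\calT(y)$ collapses onto the target marginal. In the third step, the optimizations over the $T$ discriminators are independent, so $\sum_t\blambda[t]\sup_{d_t}[\,\cdot\,]=\sup_{d_1,\ldots,d_T}\sum_t\blambda[t][\,\cdot\,]$, which is the stated identity.

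The main obstacle is the reduction from the $(z,y)$-dependent Kantorovich potential produced in step one to the $z$-only discriminator $d_t:\calZ\to\R_+$ appearing in the statement: naively restricting to $z$-only functions gives only $\le$, because the optimal $1$-Lipschitz witness for different classes need not coincide. To promote this to equality one can either appeal to the generalized label shift/cluster regime invoked earlier in the paper, under which a single discriminator can simultaneously realize all class-wise suprema, or read $d_t(z)$ as shorthand for a discriminator that implicitly has access to the label of the sample carrying $z$, consistent with the notation $\bar{\alpha}_t(z)=\mathbf{1}_{\{(z,y)\sim\calS_t\}}\alpha_t(Y=y)$. I would make this interpretation explicit in the full write-up.
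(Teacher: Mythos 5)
Your proposal follows essentially the same route as the paper's proof: Kantorovich--Rubinstein duality applied to each class-conditional pair, a change of measure via $\calT(y)=\alpha_t(y)\calS_t(y)$ to pass from the conditionals to the marginals $\calS_t(z)$ and $\calT(z)$, and a final collection of the independent per-source suprema into a joint maximum. The ``main obstacle'' you flag is genuine and is in fact present in the paper's own argument, which silently moves the sum over $y$ inside the supremum over $d$ (in general one only has $\sum_y\sup_d \geq \sup_d\sum_y$), so your explicit note that equality requires either a label-aware reading of $d_t$ or a regime (e.g.\ GLS) in which a single witness realizes all class-wise suprema makes your write-up, if anything, more careful than the original.
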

Lemma \ref{lemma1} reveals that one can train $T$ domain discriminators with weighted Wasserstein adversarial loss. When the source target distributions are identical, this loss recovers the conventional Wasserstein adversarial loss \citep{arjovsky2017wasserstein}. In practice, we adopt a hybrid approach by linearly combining the explicit and implicit matching, in which empirical results show its effectiveness.
\paragraph{Estimation $\hat{\alpha}$} When the target labels are available, $\hat{\alpha}_t$ can be directly estimated from the data with $\hat{\alpha}_t(y) = \hat{\calT}(y)/\hat{\calS}(y)$ and $\hat{\alpha}_t\to \alpha_t$ can be proved from asymptotic statistics. As for the unsupervised scenarios, we will discuss in Sec. \ref{sec:uda_alpha}.

\subsection{Estimation Relation Coefficient $\blambda$} \label{sec:slove_lambda}
Inspired by Theorem 1, given a \emph{fixed} $\hat{\alpha}_t$ and $(g,h)$, we estimate $\blambda$ through optimizing the derived upper bound.
\begin{equation*}
    \begin{split}
    \min_{\blambda} \quad & \sum_{t}\blambda[t] \hat{R}^{\hat{\alpha}_t}_{\calS_t}(h,g) + C_1 \sqrt{\sum_{t=1}^{T} \frac{\blambda^2[t]}{\beta_t}}\\
    & + C_0 \sum_{t}\blambda[t] \E_{y\sim\hat{\calT}(y)} W_1(\hat{\calT}(z|Y=y)\|\hat{\calS}(z|Y=y))\\
    \text{s.t} &\quad \forall t, \blambda[t] \geq 0, \sum_{t=1}^{T}\blambda[t] = 1
    \end{split}
\end{equation*}
In practice, $\hat{R}^{\hat{\alpha}_t}_{\calS_t}(h,g)$ is the weighted empirical prediction loss and
$\E_{y\sim\hat{\calT}(y)} W_1(\hat{\calT}(z|Y=y)\|\hat{\calS}(z|Y=y))$ is approximated by the dynamic form of critic function from Lemma \ref{lemma1}. Then, solving $\blambda$ can be viewed as a standard convex optimization problem with linear constraints, which can be effectively resolved through standard convex optimizer.

\section{Extension to Unsupervised Scenarios}\label{sec_man:gls}
In this section, we extend WADN to the unsupervised multi-source DA, which is known as unsolvable if semantic conditional distribution ($\calS_t(x|y)\neq \calT(x|y)$) and label distribution ($\calS_t(y)\neq \calT(y)$) are simultaneously different and no specific conditions are considered \citep{ben2010impossibility,johansson2019support}. 

In algorithm WADN, this challenging turns to properly estimate conditional Wasserstein distance and label distribution ratio $\hat{\alpha}_t(y)$ to help estimate $\blambda$. According to Lemma \ref{lemma1}, estimating the conditional Wasserstein distance can be viewed as $\hat{\alpha}_t$-weighted adversarial loss, thus if we can correctly estimate label distribution ratio such that $\hat{\alpha}_t\to\alpha_t$, then we can properly compute the conditional Wasserstein-distance through the adversarial term.

Therefore, the problem turns to properly estimate the label distribution ratio. To this end, we assume the problem satisfies Generalized Label Shift (GLS) condition  \citep{combes2020domain}, which has been theoretically justified and empirically evaluated in the single source unsupervised DA. The GLS condition states that \emph{in unsupervised DA, there exists an optimal embedding function $g^{\star}\in\mathcal{G}$ that can ultimately achieve $\calS_t(z|y)=\calT(z|y)$ on the latent space.} \citep{combes2020domain} further pointed out that the clustering assumption \citep{chapelle2005semi} on $\calZ$ is one sufficient condition to reach GLS, which is feasible for many practical applications. 

Based on the achievability condition of GLS, the techniques of \citep{lipton2018detecting,garg2020unified} can be adopted to gradually estimate $\hat{\alpha}_t$ during learning the embedding function. Following this spirit, we add an  distribution ratio estimator for $\{\hat{\alpha}_t\}_{t=1}^{T}$, shown in Sec.~\ref{sec:uda_alpha}.

\subsection{Estimation $\hat{\alpha}_t$}\label{sec:uda_alpha}
\paragraph{Unsupervised DA} We denote $\bar{\calS}_t(y)$, $\bar{\calT}(y)$ as the predicted $t$-source/target label distribution through the hypothesis $h$, and also define $C_{\hat{\calS}_t}[y,k] = \hat{\calS}_t[\text{argmax}_{y^{\prime}} h(z,y^{\prime})=y, Y=k]$ is the $t$-source \emph{prediction confusion matrix}. According to the GLS condition, we have $\bar{\calT}(y)=\bar{\calT}_{\hat{\alpha}_t}(y)$, with $\bar{\calT}_{\hat{\alpha}_t}(Y=y) = \sum_{k=1}^{\calY} C_{\hat{\calS}_t}[y,k]\hat{\alpha}_t(k)$ the constructed target prediction distribution from the $t$-source information. (See Appendix for justification). Then we can estimate $\hat{\alpha}_t$
through matching these two distributions by minimizing $D_{\text{KL}}(\bar{\calT}(y)\|\bar{\calT}_{\hat{\alpha}_t}(y))$, which is equivalent to solve the following convex optimization:
\begin{equation}
\begin{split}
   & \min_{\hat{\alpha}_t} \quad  -\sum_{y=1}^{|\calY|} \bar{\calT}(y) \log(\sum_{k=1}^{|\calY|} C_{\hat{\calS}_t}[y,k]\hat{\alpha}_t(k)) \\ 
   & \text{s.t} \quad \forall y\in\calY, \hat{\alpha}_t(y)\geq 0, \quad \sum_{y=1}^{|\calY|} \hat{\alpha}_t(y)\hat{\calS}_t(y) = 1
\end{split}
    \label{eq:solving_alpha}
\end{equation}

\paragraph{Unsupervised Partial DA} If we have $\text{supp}(\calT(y))\subseteq \text{supp}(\calS_t(y))$, ${\alpha}_t$ will be sparse due to the non-overlapped classes. Thus, we impose such prior knowledge by adding a regularizer $\|\hat{\alpha}_t\|_1$ to the objective of Eq.~(\ref{eq:solving_alpha}) to induce the sparsity in $\hat{\alpha}_t$. 

In training the neural network, the non-overlapped classes will be automatically assigned with a small or zero $\hat{\alpha}_t$, then $(g,h)$ will be less affected by the classes with small $\hat{\alpha}_t$. 

\begin{center}
\begin{algorithm}[t]
		\caption{WADN (unsupervised scenario, one epoch)}
		\begin{algorithmic}[1] 
        \ENSURE  Label ratio $\hat{\alpha}_t$ and task relation $\blambda$. Feature Learner $g$, Classifier $h$, statistic critic function $d_1,\dots,d_T$, class centroid for source $\mathbf{C}_t^y$ and target $\mathbf{C}^y$.($t=1,\dots,T$)
        \STATE \(\triangleright\) DNN Parameter Training Stage (fixed $\alpha_t$ and $\blambda$) \(\triangleleft\)
        \FOR{mini-batch of samples $(\x_{\calS_1},\y_{\calS_1})\sim\hat{\calS}_1$, $\dots$, $(\x_{\calS_T},\y_{\calS_T})\sim\hat{\calS}_T$, $(\x_{\calT})\sim\hat{\calT}$ }
        \STATE Target predicted-label $\bar{\y}_{\calT} = \text{argmax}_{y} h(g(\x_{\calT}),y)$
        \STATE Compute unnormalized source confusion matrix on current \emph{batch} $C_{\hat{\calS}_t}[y,k]$.
        \STATE Compute feature centroid for source $C_t^y$  and target $C^y$ on current \emph{batch}; Use moving average to update source and target class centroid $\mathbf{C}_t^y$ and $\mathbf{C}^y$.
        \STATE Updating $g,h,d_1,\dots,d_T$, by optimizing:
\begin{small}
\begin{multline*}
  \min_{g,h} \max_{d_1,\dots,d_T}  \underbrace{\sum\mathop{}_{\mkern-3mu t}\blambda[t]\hat{R}^{\hat{\alpha}_t}_{\calS_t}(h, g)}_{\text{Classification Loss}} \\
  + \epsilon C_0  \underbrace{\sum\mathop{}_{\mkern-3mu t}\blambda[t] \E_{y\sim\bar{\calT}(y)}\|\mathbf{C}_t^y - \mathbf{C}^y\|_2}_{\text{Explicit Conditional Loss}} 
  \\
  + (1-\epsilon) C_0 \underbrace{\sum\mathop{}_{\mkern-3mu t}\blambda[t][\E_{z\sim\hat{\calS}_t(z)} \bar{\alpha}_t(z) d(z) - \E_{z\sim\hat{\calT}(z)} d(z)]}_{\text{Implicit Conditional Loss}}
\end{multline*}
\end{small}
        \ENDFOR
        \STATE \(\triangleright\) Estimation $\hat{\alpha}_t$ and $\blambda$ \(\triangleleft\)
		\STATE Compute normalized source confusion matrix; Solve $\{\hat{\alpha}_t\}_{t=1}^T$ w.r.t. current training epoch through Sec.\ref{sec:uda_alpha} ; Update global $\hat{\alpha}_t$ through moving average.
		\STATE Solve $\blambda$ through Sec.\ref{sec:slove_lambda} w.r.t. current training epoch; Update global $\blambda$ through moving average.
        \end{algorithmic}
        \label{WMDARN_algo_main}
\end{algorithm}
\end{center}

\begin{table*}[t]
\centering
\caption{Unsupervised DA: Accuracy $(\%)$ on \emph{Source-Shifted} Amazon Review (Left) and Digits (Right). }
\label{tab:uda_amazon_digits}
\vskip 0.1in
\resizebox{0.49\textwidth}{!}{
\begin{tabular}{c|cccc|c}
\toprule
Target & Books & DVD &  Electronics & Kitchen  & Average  \\ \midrule
Source &   68.15$_{\pm 1.37}$   &  69.51$_{\pm 0.74}$ &      82.09$_{\pm 0.88}$  &  75.30$_{\pm 1.29}$ &  73.81 \\ \midrule
DANN   &  65.59$_{\pm 1.35}$    & 67.23$_{\pm 0.71}$ &  80.49$_{\pm 1.11}$     & 74.71$_{\pm 1.53}$  &   72.00 \\ \midrule
MDAN   &    68.77$_{\pm 2.31}$  & 67.81$_{\pm 2.46}$ &  80.96$_{\pm 0.77}$   & 75.67$_{\pm 1.96}$  & 73.30 \\ \midrule
MDMN   & 70.56$_{\pm 1.05}$  &69.64$_{\pm 0.73}$     &  82.71$_{\pm 0.71}$    & 77.05$_{\pm 0.78}$  &  74.99\\ \midrule
M$^3$SDA  & 69.09$_{\pm 1.26}$ & 68.67$_{\pm 1.37}$ & 81.34$_{\pm 0.66}$ & 76.10$_{\pm 1.47}$  & 73.79 \\ \midrule
DARN   &71.21$_{\pm 1.16}$  & 68.68$_{\pm 1.12}$ &81.51$_{\pm 0.81}$      & 77.71$_{\pm 1.09}$   & 74.78  \\ \midrule
WADN   &    \textbf{73.72}$_{\pm 0.63}$  & \textbf{79.64}$_{\pm 0.34}$ & \textbf{84.64}$_{\pm 0.48}$    & \textbf{83.73}$_{\pm 0.50}$   & \textbf{80.43}  \\ \bottomrule
\end{tabular}}
\hfill
\resizebox{0.49\textwidth}{!}{
\begin{tabular}{c|cccc|c}
\toprule
Target & MNIST & SVHN  &  SYNTH  & USPS  & Average  \\ \midrule
Source &  84.93$_{\pm 1.50}$   &  67.14$_{\pm 1.40}$   &   78.11$_{\pm 1.31}$   & 86.02$_{\pm 1.12}$ & 79.05 \\ \midrule
DANN   &  86.99$_{\pm 1.53}$   &  69.56$_{\pm 2.26}$   &   78.73$_{\pm 1.30}$   & 86.81$_{\pm 1.74}$ & 80.52 \\ \midrule
MDAN   &  87.86$_{\pm 2.24}$   & 69.13$_{\pm 1.56}$     &    79.77$_{\pm 1.69}$  & 86.50$_{\pm 1.59}$ & 80.81 \\ \midrule
MDMN   &  87.31$_{\pm 1.88}$     &  69.84$_{\pm 1.59}$    & 80.27$_{\pm 0.88}$     & 86.61$_{\pm 1.41}$  & 81.00 \\ \midrule
M$^3$SDA  &  87.22$_{\pm 1.70}$   &  68.89$_{\pm 1.93}$   &   80.01$_{\pm 1.77}$   & 86.39$_{\pm 1.68}$ & 80.87 \\ \midrule
DARN   &   86.98$_{\pm 1.29}$       &   68.59$_{\pm 1.79}$   &  80.68$_{\pm 0.61}$  &86.85$_{\pm 1.78}$  &  80.78\\ \midrule
WADN   &   \textbf{89.07}$_{\pm 0.72}$      &   \textbf{71.66}$_{\pm 0.77}$     &  \textbf{82.06}$_{\pm 0.89}$   & \textbf{90.07}$_{\pm 1.10}$  & \textbf{83.22} \\ \bottomrule
\end{tabular}}
\end{table*}

\begin{figure}[b]
    \centering
    \includegraphics[scale=0.40]{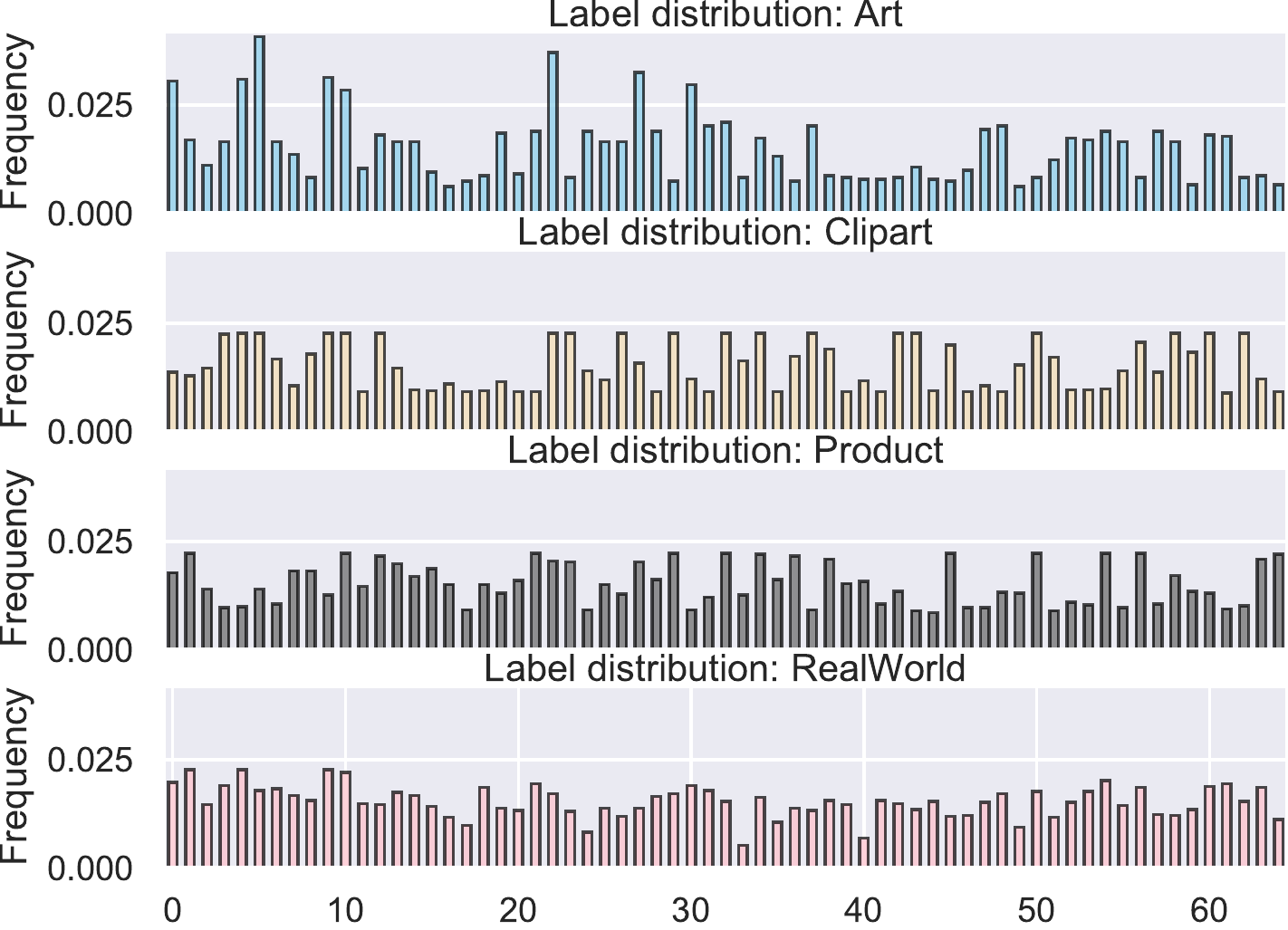}
    \caption{Label distribution on Office-Home Dataset}
    \label{fig:office-home-label}
\end{figure}

\begin{table}[!t]
\centering
\caption{Unsupervised DA: Accuracy $(\%)$ on Office-Home}
\label{tab:uda_office_home}
\vskip 0.1in
\resizebox{0.5\textwidth}{!}{
\begin{tabular}{c|cccc|c}
\toprule
Target & Art & Clipart &  Product & Real-World  & Average  \\ \midrule
Source & 49.25$_{\pm 0.60}$   & 46.89$_{\pm 0.61}$    & 66.54$_{\pm 1.72}$   & 73.64$_{\pm 0.91}$   & 59.08    \\ \midrule
DANN   & 50.32$_{\pm 0.32}$   & 50.11$_{\pm 1.16}$    & 68.18$_{\pm 1.27}$   & 73.71$_{\pm 1.63}$   & 60.58    \\ \midrule
MDAN   & 67.93$_{\pm 0.36}$   & 66.61$_{\pm 1.32}$    & 79.24$_{\pm 1.52}$   & 81.82$_{\pm 0.65}$   & 73.90    \\ \midrule
MDMN   & 68.38$_{\pm 0.58}$   & 67.42$_{\pm 0.53}$    & 82.49$_{\pm 0.56}$   & 83.32$_{\pm 1.93}$   & 75.28    \\ \midrule
M$^3$SDA  & 63.77$_{\pm 1.07}$   & 62.30$_{\pm 0.44}$     & 75.85$_{\pm 1.24}$     & 79.92$_{\pm 0.60}$  &  70.46 \\ \midrule
DARN   & 69.89$_{\pm 0.42}$     & 68.61$_{\pm 0.50}$     & 83.37$_{\pm 0.62}$     & 84.29$_{\pm 0.46}$  &  76.54 \\ \midrule
WADN   & \textbf{73.78}$_{\pm 0.43}$     & \textbf{70.18}$_{\pm 0.54}$     & \textbf{86.32}$_{\pm 0.38}$     & \textbf{87.28}$_{\pm 0.87}$  &  \textbf{79.39} \\ \bottomrule
\end{tabular}}
\end{table}

\subsection{Algorithm implementation and discussion}
We give an algorithmic description of Fig.~\ref{fig:iterative_algo}, shown in Algorithm \ref{WMDARN_algo_main}.
The high-level protocol is to \emph{iteratively} optimizes the neural-network parameters to gradually realize GLS condition with $g \to g^{\star}$ and dynamically update $\blambda$, $\hat{\alpha}_t$ to better estimate conditional distance and aggregate the sources. The GLS assumes the achievability of existing an optimal $g^{\star}$. Our iterative algorithm can achieve a stationary solution but due to the highly non-convexity of deep network, converging to the global optimal does not necessarily guarantee.

Concretely, we update the $\hat{\alpha}_t$ and $\blambda$ on the fly through a moving averaging strategy. Within one training epoch over the mini-batches, we fix the $\hat{\alpha}_t$ and $\blambda$ and optimize the network parameters $g,h$. Then at each training epoch, we re-estimate the $\hat{\alpha}_t$ and $\blambda$ by using the proposed estimator. When computing the explicit conditional loss, we empirically adopt the target pseudo-label. The implicit and explicit trade-off coefficient is set as $\epsilon=0.5$. As for optimization $\blambda$ and $\alpha_t$, it is a standard convex optimization problem and we use package CVXPY.

As for WADN with limited target label, we do not require label distribution ratio component and directly compute $\hat{\alpha}_t$.

\section{Experiments}\label{sec:experments}
In this section, we compare the proposed approaches with several baselines on the popular tasks. For all the scenarios, the following multi-source DA baselines are evaluated: (\RN{1}) \textbf{Source} method applied only labelled source data to train the model. (\RN{2}) \textbf{DANN} \citep{ganin2016domain}. We follow the protocol of \cite{wen2019domain} to merge all the source dataset as a global source domain.  (\RN{3}) \textbf{MDAN} \citep{zhao2018adversarial}; (\RN{4}) \textbf{MDMN} \citep{li2018extracting}; (\RN{5}) \textbf{M$^3$SDA} \citep{peng2019moment} adopted maximizing classifier discrepancy \citep{saito2018maximum} and 
(\RN{6}) \textbf{DARN} \citep{wen2019domain}. For the multi-source with limited target label and partial unsupervised multi-source DA, we additionally add specific baselines. All the baselines are re-implemented in the same network structure for fair comparisons. The detailed network structures, hyper-parameter settings, training details are delegated in Appendix.

We evaluate the performance on three different datasets: (1) \textbf{Amazon Review.} \citep{blitzer2007biographies} It contains four domains 
(Books, DVD, Electronics, and Kitchen) with positive and negative product reviews. We follow the common data pre-processing strategies as \citep{chen2012marginalized} to form a $5000$-dimensional bag-of-words feature. Note that the label distribution in the original dataset is uniform. \emph{To show the benefits of the proposed approach, we create a label distribution drifted task by randomly dropping $50\%$ negative reviews of all the sources while keeping the target identical.} (2) \textbf{Digits}. It consists four digits recognition datasets including MNIST, USPS \citep{hull1994database}, SVHN \citep{netzer2011reading} and Synth \citep{ganin2016domain}. \emph{We also create a label distribution drift for the sources by randomly dropping $50\%$ samples on digits 5-9 and keep target identical.}  (3) \textbf{Office-Home Dataset} \citep{venkateswara2017deep}. It contains 65 classes for four different domains: Art, Clipart, Product and Real-World. We used the ResNet50 \citep{he2016deep} pretrained from the ImageNet in PyTorch as the base network for feature learning and put a MLP for the classification. The label distributions in these four domains are different and we did not manually create a label drift, shown in Fig.~\ref{fig:office-home-label}.

\begin{table*}[t]
\centering
\caption{Multi-Source DA with Limited Target Label: Accuracy $(\%)$ on \emph{Source-Shifted} Amazon Review (Left) and Digits (Right).}
\label{tab:transfer_amazon}
\vskip 0.1in 
\resizebox{0.49\textwidth}{!}{
\begin{tabular}{c|cccc|c}
\toprule
Target & Books & DVD &  Electronics & Kitchen  & Average  \\ \midrule
Source + Tar &    72.59$_{\pm 1.89}$ &  73.02$_{\pm 1.84}$    & 81.59$_{\pm 1.58}$    & 77.03$_{\pm 1.73}$  & 76.06  \\ \midrule
DANN   &    67.35$_{\pm 2.28}$       &   66.33$_{\pm 2.42}$      & 78.03$_{\pm 1.72}$     & 74.31$_{\pm 1.71}$  & 71.50  \\ \midrule
MDAN   &    68.70$_{\pm 2.99}$       &    69.30$_{\pm 2.21}$       & 78.78$_{\pm 2.21}$ & 74.07$_{\pm 1.89}$  & 72.71  \\ \midrule
MDMN   &    69.19$_{\pm 2.09}$       &     68.71$_{\pm 2.39}$        & 81.88$_{\pm 1.46}$  & 78.51$_{\pm 1.91}$  & 74.57  \\ \midrule
M$^3$SDA  &  69.28$_{\pm 1.78}$ &    67.40$_{\pm 0.46}$     & 76.28$_{\pm 0.81}$    & 76.50$_{\pm 1.19}$  & 72.36  \\ \midrule
DARN   & 68.57$_{\pm 1.35}$     &    68.77$_{\pm 1.81}$     & 80.19$_{\pm 1.66}$    & 77.51$_{\pm 1.20}$  & 73.76 \\ \midrule
RLUS   &    71.83$_{\pm 1.71}$      &    69.64$_{\pm 2.39}$     & 81.98$_{\pm 1.04}$    & 78.69$_{\pm 1.15}$  & 75.54 \\ \midrule
MME   &    69.66$_{\pm 0.58}$  &   71.36$_{\pm 0.96}$  &   78.88$_{\pm 1.51}$   &  76.64$_{\pm 1.73}$  &  74.14 
  \\ \midrule  
WADN   & \textbf{74.83}$_{\pm 0.84}$     &    \textbf{75.05}$_{\pm 0.62}$    & \textbf{84.23}$_{\pm 0.58}$    & \textbf{81.53}$_{\pm 0.90}$  & \textbf{78.91} \\ \bottomrule
\end{tabular}}
\hfill
\resizebox{0.49\textwidth}{!}{
\begin{tabular}{c|cccc|c}
\toprule
Target & MNIST & SVHN  &  SYNTH  & USPS  & Average  \\ \midrule
Source + Tar &   79.63$_{\pm 1.74}$      & 56.48$_{\pm 1.90}$       & 69.64$_{\pm 1.38}$     & 86.29$_{\pm 1.56}$  &  73.01 \\ \midrule
DANN   & 86.77$_{\pm 1.30}$              & 69.13$_{\pm 1.09}$       & 78.82$_{\pm 1.35}$     & 86.54$_{\pm 1.03}$  &  80.32 \\ \midrule
MDAN   & 86.93$_{\pm 1.05}$              & 68.25$_{\pm 1.53}$       & 79.80$_{\pm 1.17}$     & 86.23$_{\pm 1.41}$  &  80.30 \\ \midrule
MDMN   & 77.59$_{\pm 1.36}$              & 69.62$_{\pm 1.26}$       & 78.93$_{\pm 1.64}$     & 87.26$_{\pm 1.13}$  &  78.35 \\ \midrule
M$^3$SDA  & 85.88$_{\pm 2.06}$           & 68.84$_{\pm 1.05}$       & 76.29$_{\pm 0.95}$     & 87.15$_{\pm 1.10}$  &  79.54 \\ \midrule
DARN   & 86.58$_{\pm 1.46}$              & 68.86$_{\pm 1.30}$       & 80.47$_{\pm 0.67}$     & 86.80$_{\pm 0.89}$  &  80.68 \\ \midrule
RLUS   &  87.61$_{\pm 1.08}$             & \textbf{70.50}$_{\pm 0.94}$       & 79.52$_{\pm 1.30}$     & 86.70$_{\pm 1.13}$  &  81.08  \\ \midrule
MME   & 87.24$_{\pm 0.95}$  &  65.20$_{\pm1.35}$  &  80.31$_{\pm 0.60}$  &  87.88$_{\pm 0.76}$  &  80.16 
 \\ \midrule
WADN   & \textbf{88.32}$_{\pm 1.17}$     & \textbf{70.64}$_{\pm 1.02}$       & \textbf{81.53}$_{\pm 1.11}$     & \textbf{90.53}$_{\pm 0.71}$  &  \textbf{82.75}  \\ \bottomrule
\end{tabular}}
\end{table*}

\begin{figure*}[t]
\centering
\begin{subfigure}{0.32\textwidth}
\centering
     \includegraphics[width=0.55\textwidth]{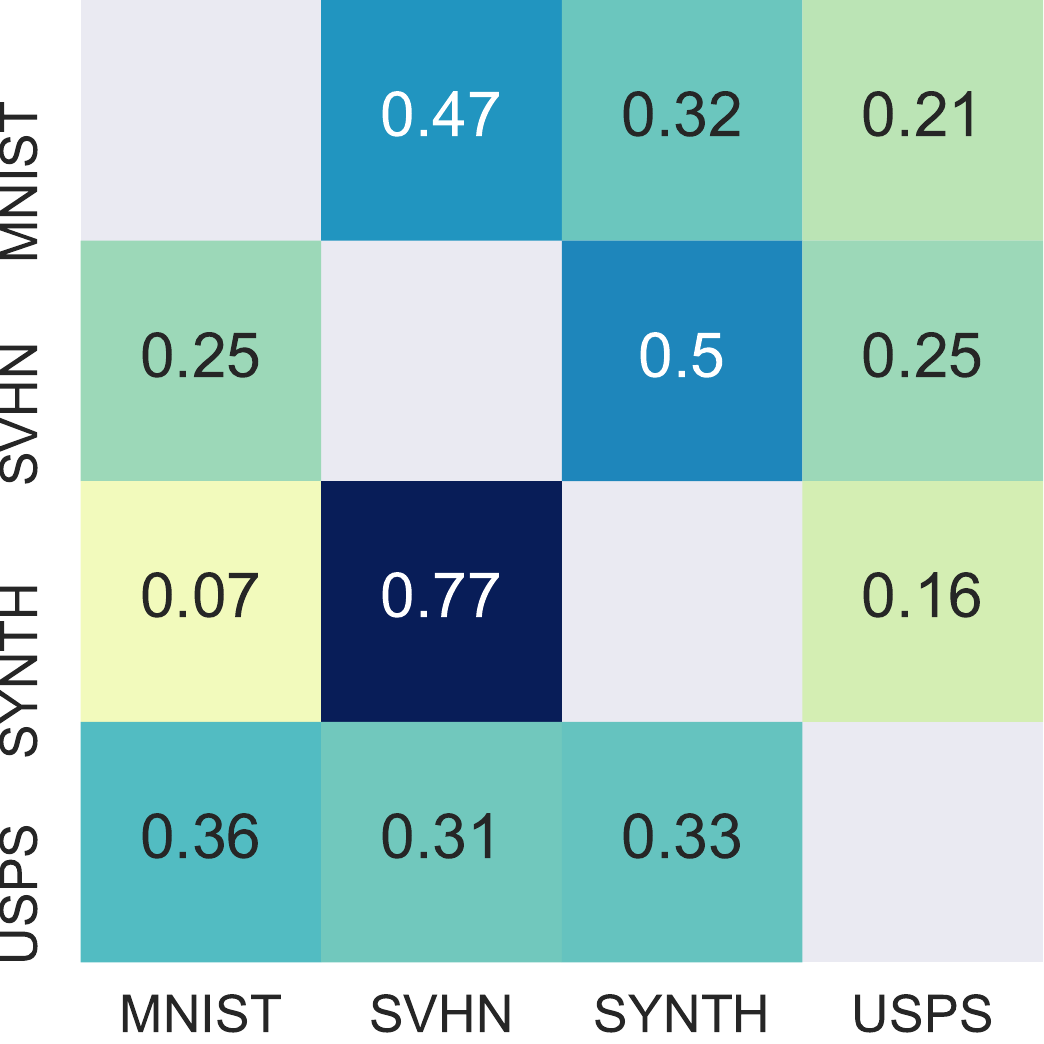}
     \caption{Visualization of $\blambda$}
 \end{subfigure}
\begin{subfigure}{0.32\textwidth}
     \includegraphics[width=1.0\textwidth]{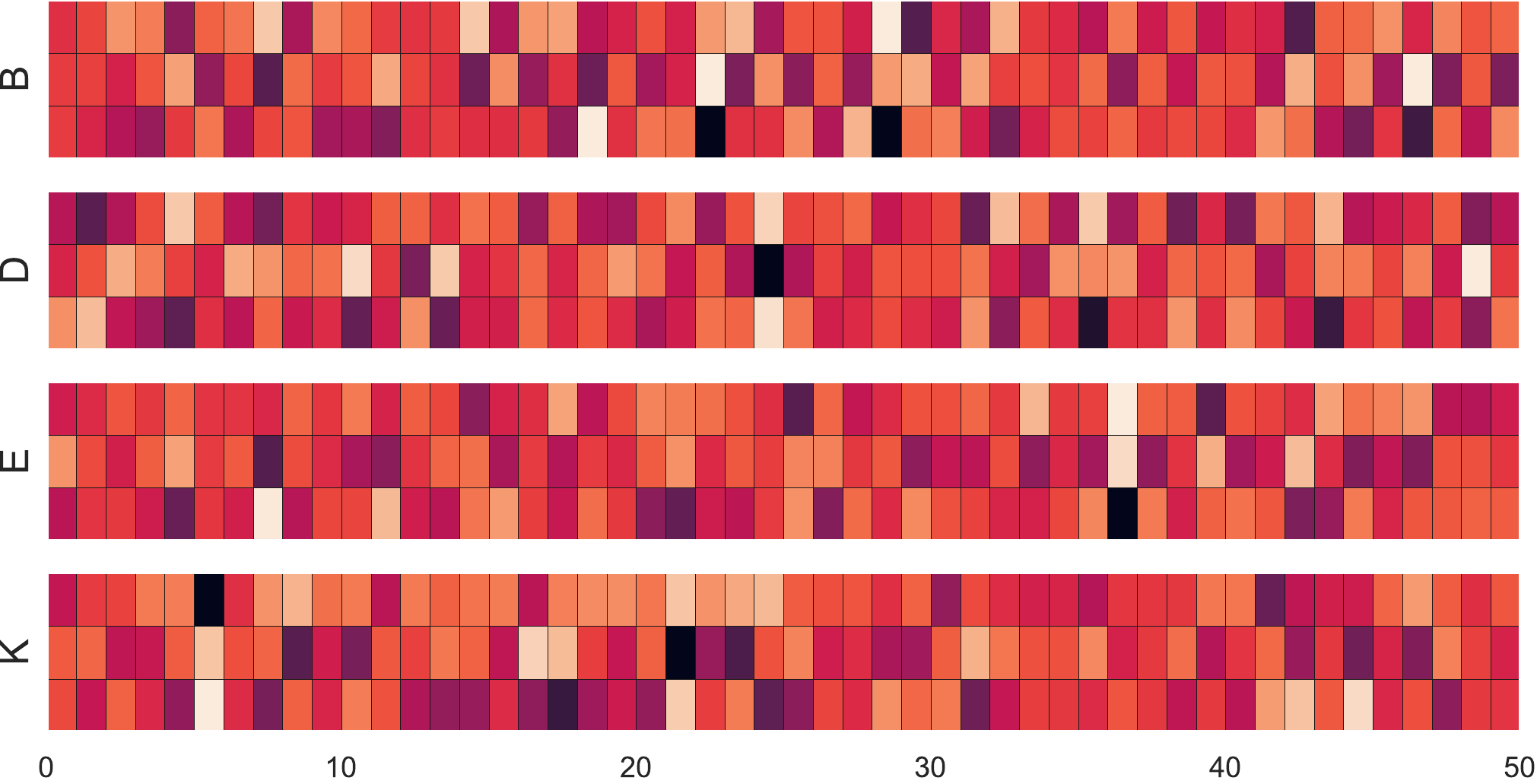}
     \caption{DARN \citep{wen2019domain}}
  \end{subfigure}
  \quad
  \begin{subfigure}{0.32\textwidth}
     \includegraphics[width=1.0\textwidth]{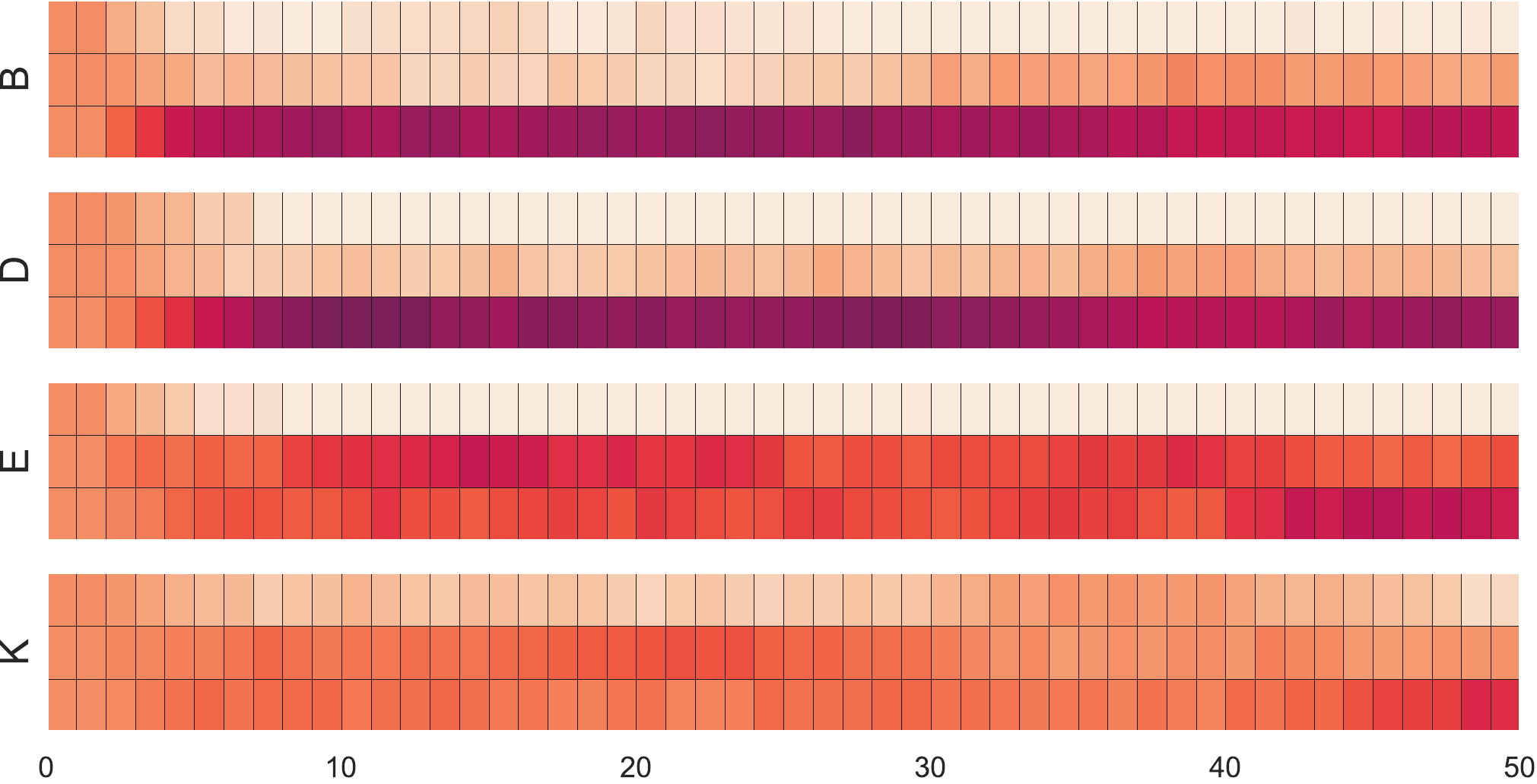}
     \caption{WADN}
  \end{subfigure}
  \caption{Understanding Aggregation Principles in Unsupervised DA. (a) Visualization of $\blambda$ on digits datset, each row corresponds to a target domain, which indicates a \emph{non-uniform} and \emph{non-symmetric} task relations.
  (b,c) The evolution of $\blambda$ with three sources of Amazon dataset (B=Books, D=DVD, E=Electronics, K=Kitchen) during the training epoch. We compare with a recent principle approach DARN, which uses $\Proba(x)$ to measure the similarity and dynamically update the $\blambda$. The results verifies the limitation of DARN under changing label distributions with relative unstable results. } 
  \label{fig:analysis_lambda}
\end{figure*}

\subsection{Unsupervised Multi-Source DA}
In the unsupervised multi-source DA, we evaluate the proposed approach on all three datasets. We use a similar hyper-parameter selection strategy as in DANN \citep{ganin2016domain}. All reported results are averaged from five runs. The detailed experimental settings are illustrated in Appendix. The empirical results are illustrated in Tab. \ref{tab:uda_amazon_digits} and \ref{tab:uda_office_home}. Since we did not change the target label distribution throughout the whole experiment, we still report the target accuracy as the metric. We report the means and standard deviations for each approach. The best approaches based on a two-sided Wilcoxon signed-rank test (significance level $p=0.05$) are shown in bold.

% \begin{table}[!t]
% \centering
% \caption{Unsupervised DA: Accuracy $(\%)$ on the Source-Shifted Digits.}
% \label{tab:uda_digits}
% \vskip 0.1in
% \resizebox{0.5\textwidth}{!}{
% \begin{tabular}{c|cccc|c}
% \hline
% Target & MNIST & SVHN  &  SYNTH  & USPS  & Average  \\ \hline\hline
% Source &  84.93$_{\pm 1.50}$   &  67.14$_{\pm 1.40}$   &   78.11$_{\pm 1.31}$   & 86.02$_{\pm 1.12}$ & 79.05 \\ \hline
% DANN   &  86.99$_{\pm 1.53}$   &  69.56$_{\pm 2.26}$   &   78.73$_{\pm 1.30}$   & 86.81$_{\pm 1.74}$ & 80.52 \\ \hline
% MDAN   &  87.86$_{\pm 2.24}$   & 69.13$_{\pm 1.56}$     &    79.77$_{\pm 1.69}$  & 86.50$_{\pm 1.59}$ & 80.81 \\ \hline
% MDMN   &  87.31$_{\pm 1.88}$     &  69.84$_{\pm 1.59}$    & 80.27$_{\pm 0.88}$     & 86.61$_{\pm 1.41}$  & 81.00 \\ \hline
% M$^3$SDA  &  87.22$_{\pm 1.70}$   &  68.89$_{\pm 1.93}$   &   80.01$_{\pm 1.77}$   & 86.39$_{\pm 1.68}$ & 80.87 \\ \hline
% DARN   &   86.98$_{\pm 1.29}$       &   68.59$_{\pm 1.79}$   &  80.68$_{\pm 0.61}$  &86.85$_{\pm 1.78}$  &  80.78\\ \hline
% WADN   &   \textbf{89.07}$_{\pm 0.72}$      &   \textbf{71.66}$_{\pm 0.77}$     &  \textbf{82.06}$_{\pm 0.89}$   & \textbf{90.07}$_{\pm 1.10}$  & \textbf{83.22} \\ \hline
% \end{tabular}}
% \end{table}

The empirical results reveal a significantly better performance ($\approx 2\%-6\%$) on different benchmarks. For understanding the aggregation principles of WADN, we visualize the task relations in digits (Fig.~\ref{fig:analysis_lambda}(a)) with demonstrating a \emph{non-uniform} $\blambda$, which highlights the importance of properly choosing the most related source rather than simply merging all the data. For example, when the target domain is SVHN, WADN mainly leverages the information from SYNTH, since they are more semantically similar, and MNIST does not help too much for SVHN, which is also observed by \cite{ganin2016domain}. Besides, Fig.~\ref{fig:analysis_lambda}(b) visualizes the evolution of $\blambda$ between WADN and recent principled approach DARN \citep{wen2019domain}, which utilized the $\Proba(x)$ information and dynamic updating to find the similar domains. Compared with WADN, $\blambda$ in DARN is \emph{unstable} during updating under drifted label distribution. 

Besides, we conduct the ablation study through evaluating the performance under different levels of source label shift in Amazon Review dataset (Fig.~\ref{fig:paper_aba_study}(a)). The results show strong practical benefits for WADN in the larger label shift. The additional analysis and results can be found in Appendix.

\subsection{Multi-Source DA with Limited Target Labels}
We adopt Amazon Review and Digits in the multi-source DA with limited target samples, which have been widely used. In the experiments, we still use shifted sources. We randomly sample only $10\%$ labeled samples (w.r.t. target dataset in unsupervised DA) as training set and the rest $90\%$ samples as the unseen target test set. We adopt the same hyper-parameters and training strategies with unsupervised DA. We specifically add two recent baselines RLUS \citep{konstantinov2019robust} and MME \citep{saito2019semi}, which also considered DA with the labeled target domain.

\begin{figure*}[t]
  \centering
  \begin{subfigure}{0.32\textwidth}
  \centering
     \includegraphics[scale=0.32]{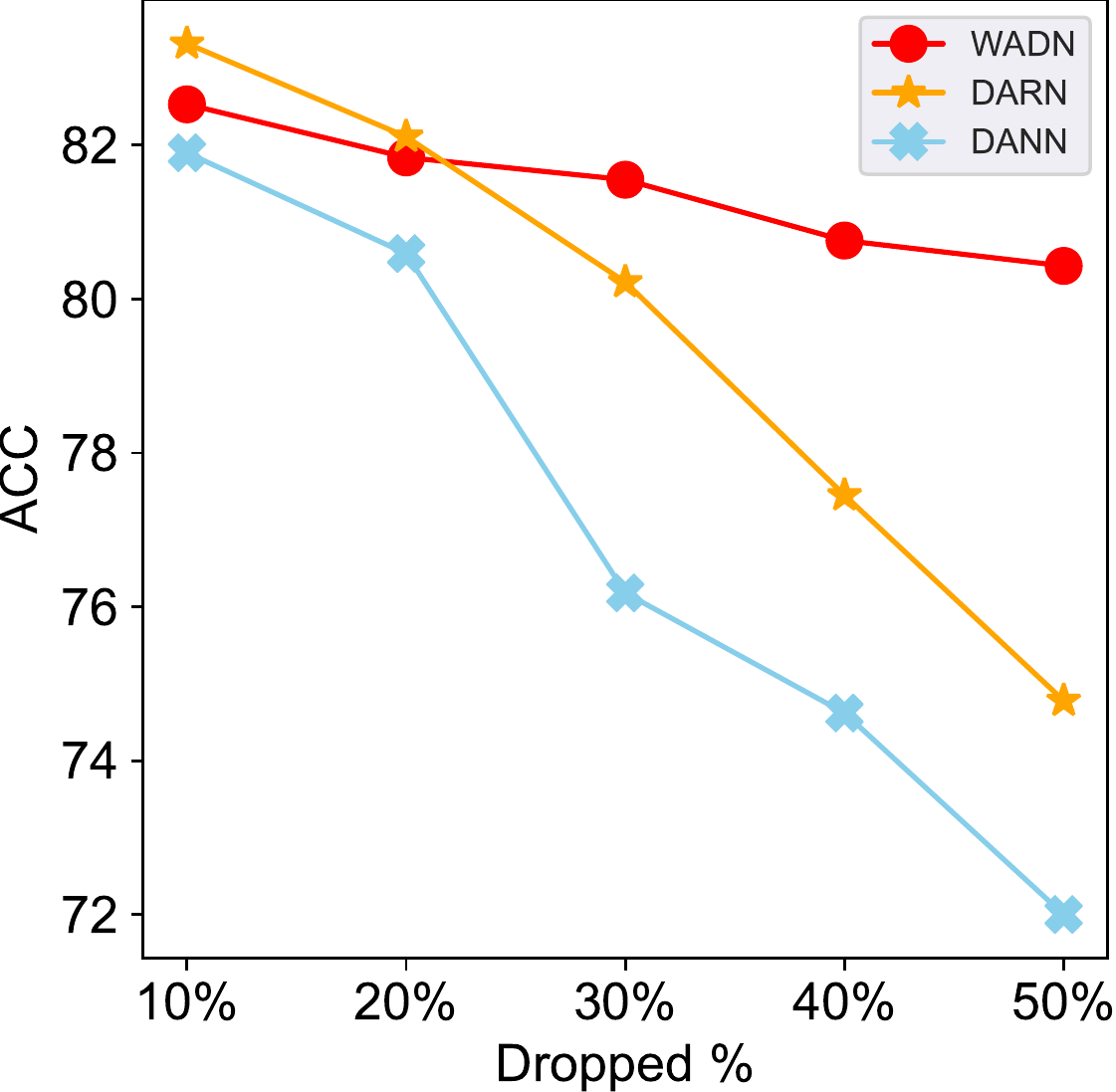}
     \caption{}
  \end{subfigure}
  \begin{subfigure}{0.32\textwidth}
  \centering
     \includegraphics[scale=0.32]{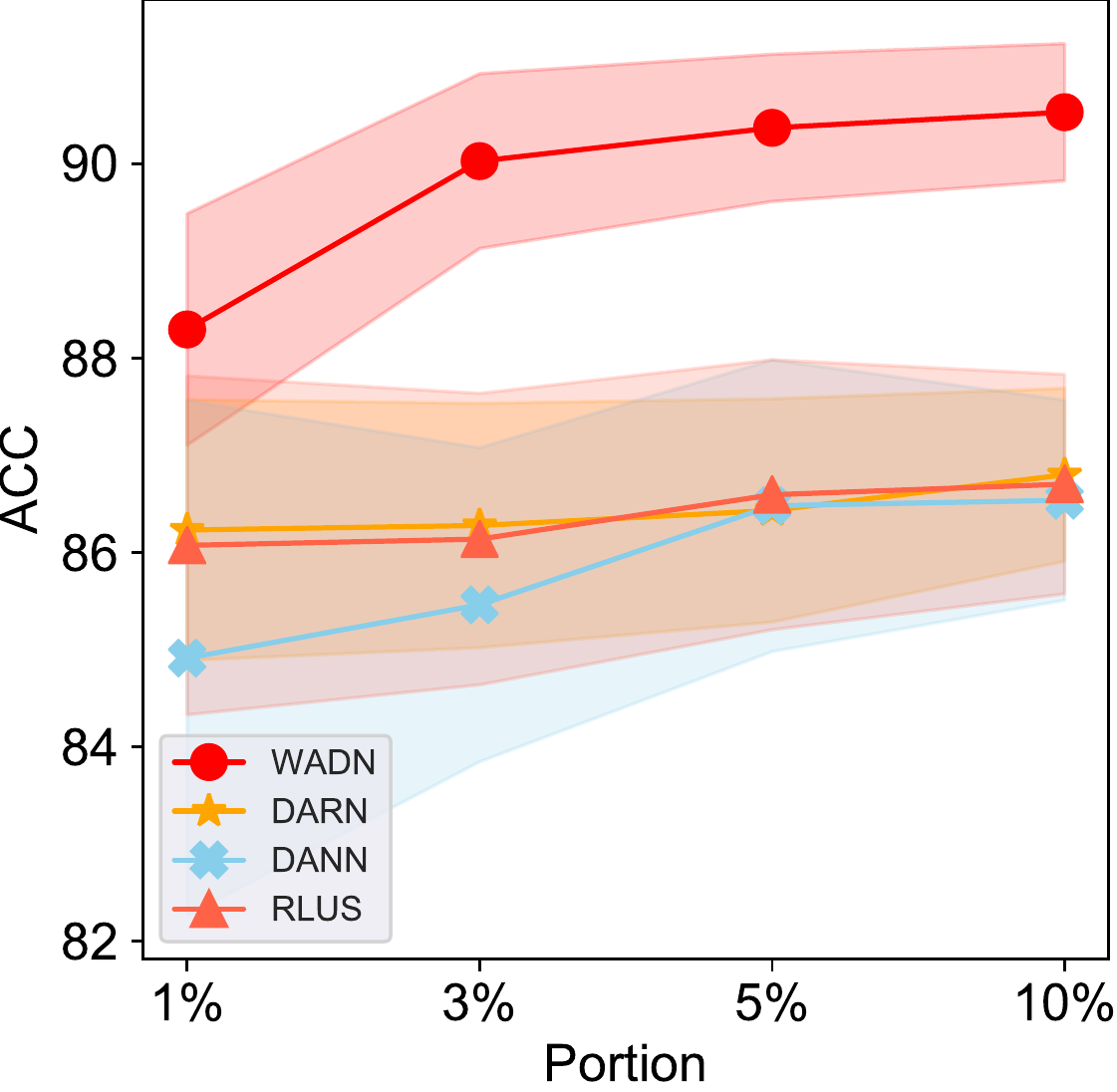}
     \caption{}
  \end{subfigure}
  \begin{subfigure}{0.32\textwidth}
  \centering
     \includegraphics[scale=0.32]{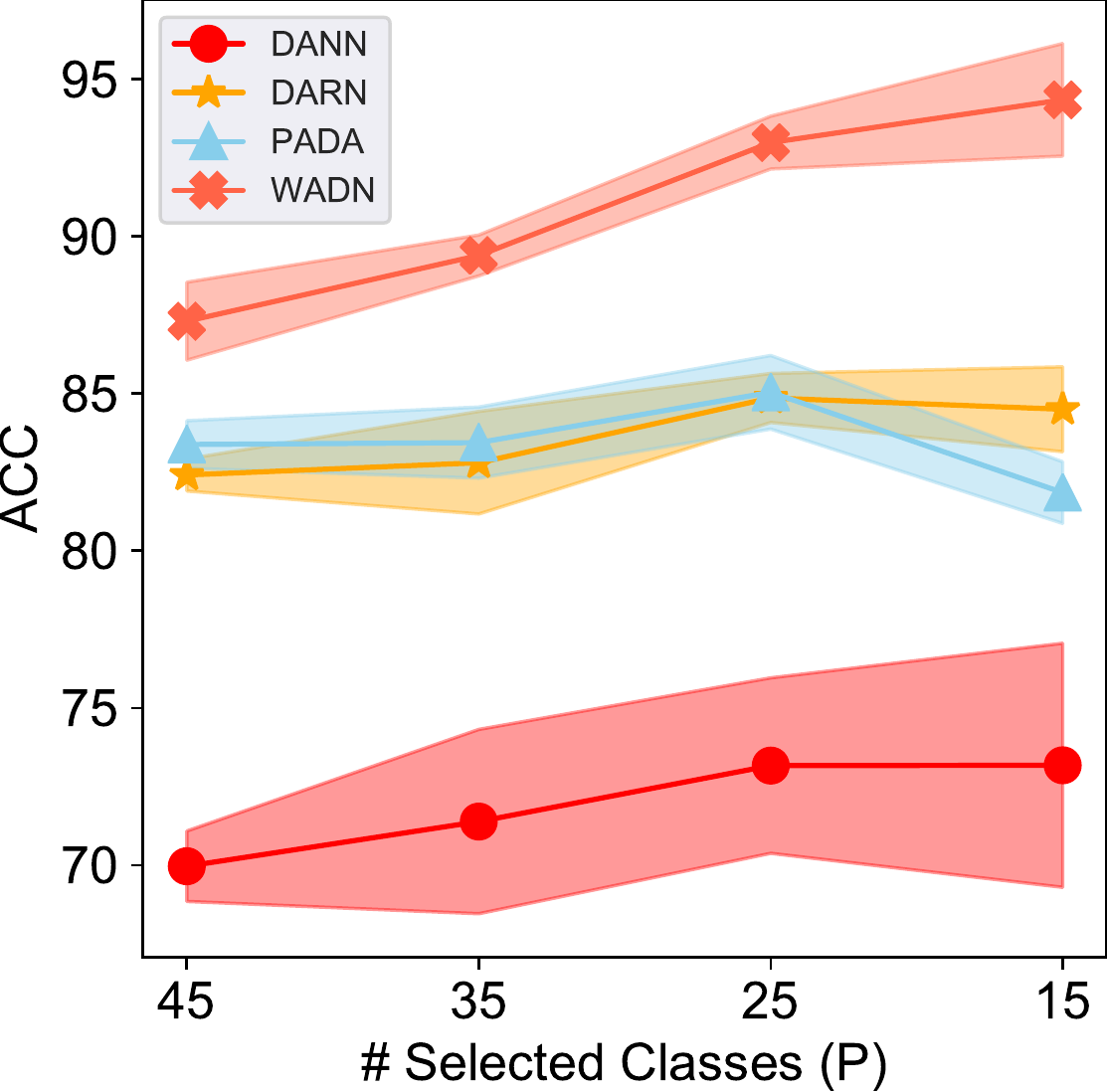}
     \caption{}
  \end{subfigure}
  \caption{Ablation study on different scenarios. (a) Unsupervised DA with Amazon Review dataset. Accuracy under different levels of label shifted sources (higher dropping rate means larger label drift). The results are reported on the average of all the domains, see the results for each domain in Appendix. (b) Multi-Source DA with limited target label in digits task with target USPS. The performance (mean $\pm$ std) of WADN is consistently better under different target samples (smaller portion indicates fewer target samples). (C) Partial Multi-source DA in office-home dataset with target domain Product. Performance (mean $\pm$ std) of different number of selected classes on the target, where WADN shows a consistent better performance under different selected sub-classes.}
  \label{fig:paper_aba_study}
\end{figure*}

The results are reported in Tab.~\ref{tab:transfer_amazon}, which also indicates strong empirical improvement.
Interestingly, on the Amazon review dataset, the previous aggregation approach RLUS is unable to select the related source when label distribution varies. 
To show the effectiveness of WADN, we test various portions of labelled samples ($1\%\sim 10\%$) on the target. The results in Fig.~\ref{fig:paper_aba_study}(b) on USPS dataset show consistently better than the baseline, even in the few target samples scenarios such as $1-3\%$.

\begin{table}[t]
\centering
\caption{Unsupervised Multi-Source Partial DA: Accuracy $(\%)$ on Office-Home (\#Source: 65, \#Target: 35)}
\label{tab:partial_office_home}
\vskip 0.1in
\resizebox{0.49\textwidth}{!}{
\begin{tabular}{c|cccc|c}
\toprule
Target & Art & Clipart &  Product & Real-World  & Average  \\ \midrule
Source &  50.56$_{\pm 1.42}$  & 49.79$_{\pm 1.14}$      & 68.10$_{\pm 1.33}$    & 78.24$_{\pm 0.76}$   & 61.67  \\ \midrule
DANN   &  53.86$_{\pm 2.23}$  & 52.71$_{\pm 2.20}$      & 71.25$_{\pm 2.44}$    & 76.92$_{\pm 1.21}$    & 63.69  \\ \midrule
MDAN   &  67.56$_{\pm 1.39}$  & 65.38$_{\pm 1.30}$      & 81.49$_{\pm 1.92}$    & 83.44$_{\pm 1.01}$   & 74.47 \\ \midrule
MDMN   &  68.13$_{\pm 1.08}$  & 65.27$_{\pm 1.93}$      & 81.33$_{\pm 1.29}$    & 84.00$_{\pm 0.64}$   & 74.68 \\ \midrule
M$^3$SDA  & 65.10$_{\pm 1.97}$   &  61.80$_{\pm 1.99}$  & 76.19$_{\pm 2.44}$    & 79.14$_{\pm 1.51}$  &  70.56 \\ \midrule
DARN   &  71.53$_{\pm 0.63}$ & 69.31$_{\pm 1.08}$       & 82.87$_{\pm 1.56}$    & 84.76$_{\pm 0.57}$   & 77.12  \\ \midrule
PADA   & 74.37$_{\pm 0.84}$      & 69.64$_{\pm 0.80}$   & 83.45$_{\pm 1.13}$    & 85.64$_{\pm 0.39}$  &  78.28 \\\midrule
WADN   & \textbf{80.06}$_{\pm 0.93}$    &\textbf{75.90}$_{\pm 1.06}$    & \textbf{89.55}$_{\pm 0.72}$  & \textbf{90.40}$_{\pm 0.39}$   & \textbf{83.98}   \\ \bottomrule
\end{tabular}}
\end{table}

\subsection{Partial Unsupervised Multi-Source DA}
In this scenario, we adopt the Office-Home dataset to  evaluate our approach, as it contains large (65) classes. We do not change the source domains and we randomly choose 35 classes from the target. We evaluate all the baselines on the same selected classes and repeat 5 times. All reported results are averaged from 3 different sub-class selections (15 runs in total), shown in Tab.~\ref{tab:partial_office_home}.  We additionally compare PADA \citep{cao2018partial} approach by merging all sources and use one-to-one partial DA algorithm. We adopt the same hyper-parameters and training strategies in unsupervised DA scenario.

The reported results are also significantly better than the current multi-source DA or one-to-one partial DA approach, which again emphasizes the benefits of WADN: properly selecting the related sources by using semantic information.

\begin{figure}[ht]
  \centering
     \includegraphics[scale=0.40]{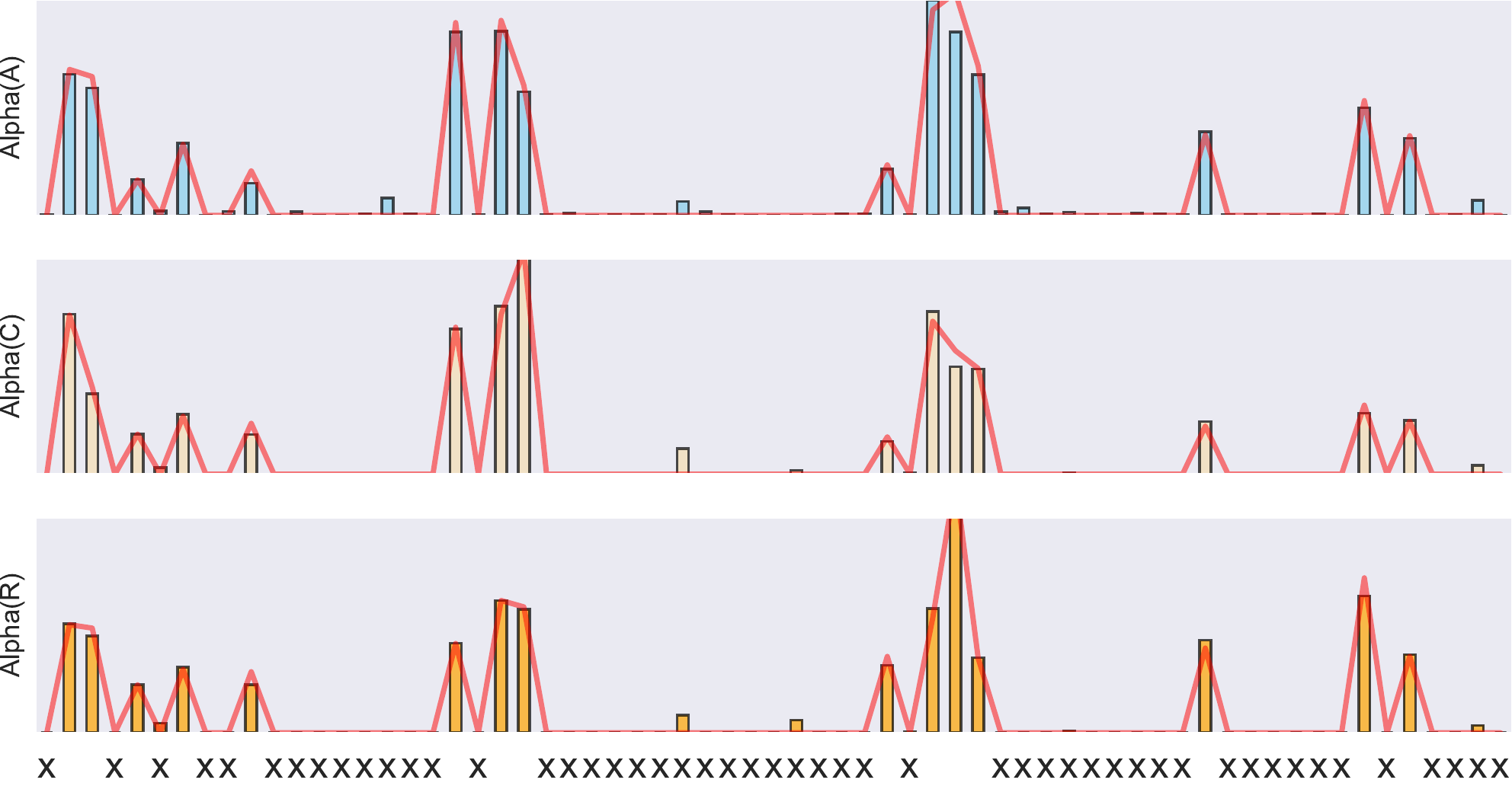}
  \caption{Analysis on Partial DA of target Product. We select 15 classes and visualize estimated $\hat{\alpha}_t$ (the bar plot). The "X" along the x-axis represents the index of \emph{dropped} 50 classes. The red curves are the true label distribution ratio. See Appendix for additional results and analysis.}
  \label{fig:office_home_analysis}
\end{figure}

Besides, we change the number of selected classes (Fig \ref{fig:paper_aba_study}(c)), the proposed WADN still indicates consistent better results by a large margin, which indicates the importance of considering $\hat{\alpha}_t$ and $\blambda$. In contrast, DANN shows unstable results on average in less selected classes. Beside, WADN shows a good estimation of the label distribution ratio (Fig \ref{fig:office_home_analysis}) and has correctly detected the non-overlapping classes, which verifies the effectiveness of the label-distribution estimator and indicates its good explainability.

\section{Conclusion}
In this paper, we proposed a novel algorithm WADN for multi-source domain adaptation problem under different label proportions. WADN differs from previous approaches in two key prospects: a better source aggregation approach when label distributions change; a unified empirical framework for three popular DA scenarios. We evaluated the proposed method by extensive experiments and showed its strong empirical results. 

\section*{Acknowledgments}
C. Shui and C. Gagné acknowledge support from NSERC-Canada and CIFAR.  B. Wang is supported by NSERC Discovery Grants Program.

\bibliography{icml}
\bibliographystyle{icml2021}

\newpage
\appendix
\onecolumn
\section{Additional Related Work}\label{sec:addtional_related_work}

\paragraph{Additional Multi-source DA Theory} has been investigated in the previous literature. In the unsupervised DA, \citep{ben2010theory,zhao2018adversarial, peng2019moment} adopted $\calH$-divergence of marginal distribution $\D(x)$ to estimate the domain relations.\citep{li2018extracting} also applied Wasserstein distance of $\D(x)$ to estimate pair-wise domain distance. \cite{mansour2009multiple,wen2019domain} used the Discrepancy distance to derive a tighter theoretical bound. The motivated practice from the aforementioned method used the feature information to learn the task relations, with the general following forms:
\begin{equation*}
    R_{\calT}(h) \leq \sum_{t}\blambda[t] R_{\calS}(h) + \sum_{t}\blambda[t] d(\calS_t(x),\calT(x)) + \beta
\end{equation*}
However, as we stated in the paper, $d(\calS_t(x),\calT(x))$ is not a proper to measure the task's relations. Besides, \cite{hoffman2018algorithms} used R\'enyi divergence that requires $\text{supp}(\calT(x))\subseteq\text{supp}(\calS(x))$, which generally does not hold in the complicated real-world applications. \cite{konstantinov2019robust,mansour2020theory} adopted $\calY$-discrepancy \citep{mohri2012new} to measure the joint distribution similarity. However, $\calY$ discrepancy is practically difficult to estimate from the data and we empirically show it is difficult to handle the target-shifted sources.

\paragraph{Multi-source DA Practice} has been proposed from various prospective. The key idea is to estimate the importance of different sources and then select the most related ones, to mitigate the influence of negative transfer. In the multi-source unsupervised DA, \citep{sankaranarayanan2018generate,balaji2019normalized,pei2018multi,zhao2019multi, zhu2019aligning, zhao2020multi, zhao2019multi, stojanov2019data, li2019target, wang2019tmda, lin2020multi} proposed different practical strategies in the classification, regression and semantic segmentation problems. In the presence of available labels on the target domain, \cite{hoffman2012discovering,tan2013multi,wei2017source,yao2010boosting,konstantinov2019robust} used generalized linear model to learn the target. \cite{christodoulidis2016multisource,li2019multi,chen-etal-2019-multi} focused on deep learning approaches and \cite{lee2019learning} proposed an ad-hoc strategy to combine to sources in the few-shot target domains. In contrast, these ideas are generally \emph{data-driven approaches} and do not propose a principled practice to understand the source combination and understand task relations.

\paragraph{Label-Partial Unsupervised DA} 
Label-Partial can be viewed as a special case of the target-shifted DA. \footnote{Since $\text{supp}(\calT(y))\subseteq \text{supp}(\calS_t(y))$ then we naturally have $\calT(y)\neq\calS_t(y)$. } Most existing works focus on one-to-one partial DA \citep{zhang2018importance,chen2020selective,bucci2019tackling,cao2019learning} by adopting the re-weighting training approach without a principled understanding. In our paper, we first analyzed this common practice and adopt the label distribution ratio as its weights, which provides a principled approach to detect the non-overlapped classes in the representation learning.  

\subsection{Other scenarios related to Multi-Source DA}
\paragraph{Domain Generalization} The domain generalization (DG) resembles multi-source transfer but aims at different goals. A common setting in DG is to learn multiple source but directly predict on the unseen target domain. The conventional DG approaches generally learn a distribution invariant features \citep{balaji2018metareg, saenko2010adapting, Motiian_2017_ICCV, ilse2019diva} or conditional distribution invariant features \citep{li2018deep,akuzawa2019adversarial}.  However, our theoretical results reveal that in the presence of label shift (i.e $\alpha_t(y)\neq 1$) and outlier tasks then learning conditional or marginal invariant features can not guarantee a small target risk. Our theoretical result enables a formal understanding about the inherent difficulty in DG problems.

% \paragraph{Few-Shot Learning} The few-shot learning \citep{finn2017model,snell2017prototypical,sung2018learning} can be viewed as a very specific scenario of multi-source transfer learning. We would like to point out the differences between the few-shot learning and our paper. (1) Few-shot learning generally involves a \textbf{very large set} of source domains $T \gg 1$ and each domain consists a \textbf{modest number} of observations $N_{\calS_t}$. In our paper, we are interested in the a \textbf{modest number} of source domains $T$ but each source domain including a \textbf{sufficient large} number of observations ($N_{\calS_t}\gg 1$). (2) In the target domain, the few-shot setting generally used K-samples ($K$ is very small) for each class for the fine-tuning. We would like to point out this setting generally violates our theoretical assumption. In our paper, we assume the target data is i.i.d. sampled from $\D(x,y)$.  It is equivalently viewed that we first i.i.d. sample $y\sim \D(y)$, then i.i.d. sample $x\sim\D(x|y)$. Generally the $\D(y)$ is \textbf{non-uniform}, thus few-shot setting are generally not applicable for our theoretical assumptions. 

\paragraph{Multi-Task Learning} The goal of multi-task learning \citep{zhang2017survey} aims to improve the prediction performance of \textbf{all} the tasks. In our paper, we aim at controlling the prediction risk of a specified target domain. We also notice some practical techniques are common such as the shared parameter \citep{zhang2012convex}, shared representation \citep{ruder2017overview}, etc.           

\section{Additional Figures}
We additionally visualize the label distributions in our experiments.
\begin{figure}[h]
  \centering
  \begin{subfigure}{0.3\textwidth}
  \centering
     \includegraphics[scale=0.3]{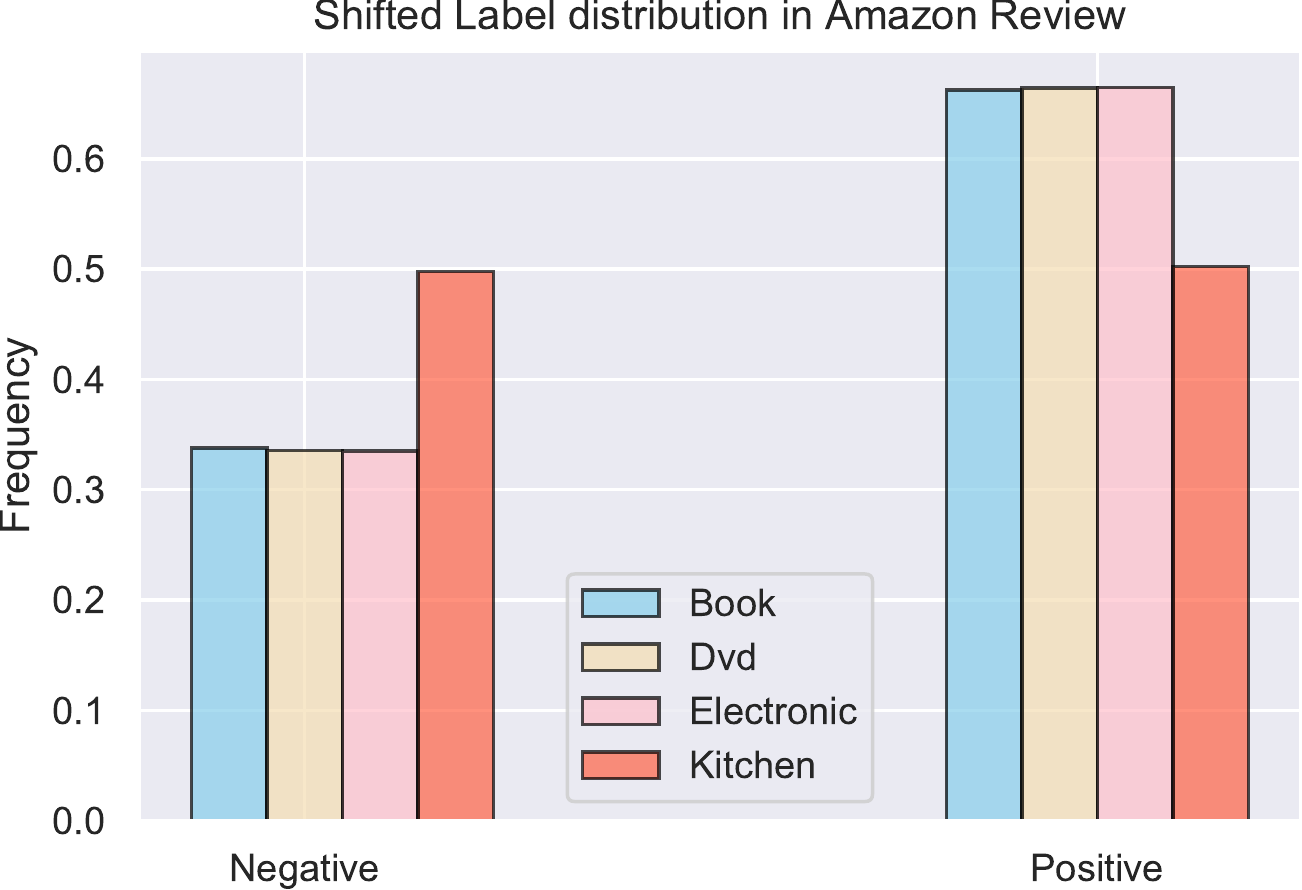}
     \caption{Amazon}
  \end{subfigure}
  \begin{subfigure}{0.3\textwidth}
  \centering
     \includegraphics[scale=0.3]{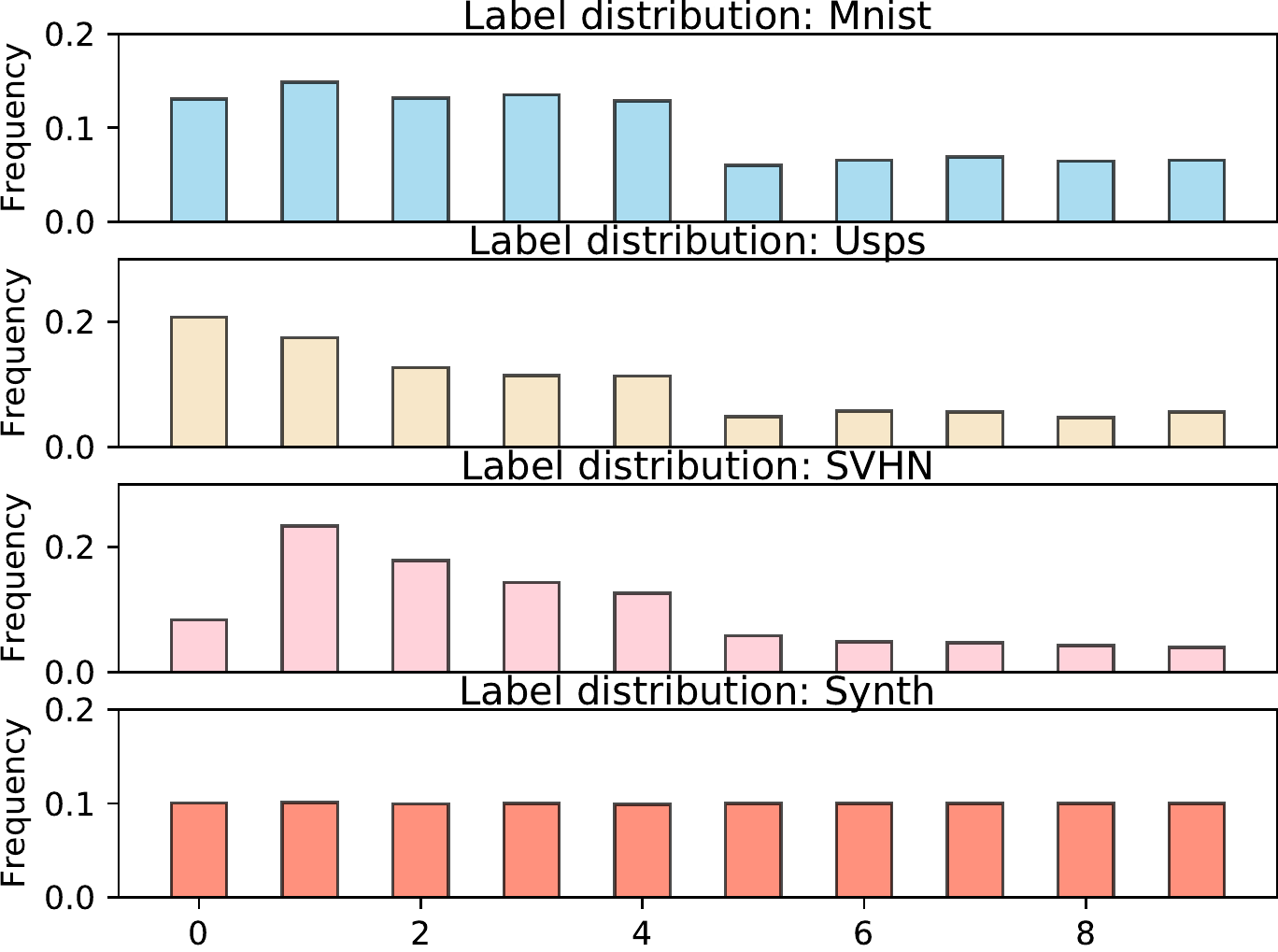}
     \caption{Digits}
  \end{subfigure}
  \begin{subfigure}{0.3\textwidth}
  \centering
     \includegraphics[scale=0.3]{figure/office-home-label.pdf}
     \caption{Office-Home}
  \end{subfigure}
  \caption{Label distribution visualization. (a) One example in Amazon Review dataset with sources: Book, Dvd, Electronic and target: Kitchen. We randomly drop $50\%$ of the negative reviews in all the sources while keeping target label distribution unchanged. (b) One example in Digits dataset with Sources: MNIST, USPS, SVHN and Target Synth. We randomly drop $50\%$ data on digits 5-9 in all sources while keeping target label distribution unchanged. (c) Office-Home dataset. The original label distribution is non-uniform. See Appendix \ref{sec: Details} for details.}
  \label{fig:drifted_label_distribution}
\end{figure}

\section{Notation Tables}
\begin{table}[htbp]
\caption{Table of Notations}
\vskip 0.1in
\begin{center}
\begin{tabular}{r c p{10cm} }
\toprule
$R_{\D}(h) = \E_{(x,y)\sim\D} \ell(h(x,y))$ &  & Expected Risk on distribution $\D$ w.r.t. hypothesis $h$ \\
$\hat{R}_{\D}(h) = \frac{1}{N} \sum_{i=1}^{N} \ell(h(x_i,y_i))$ &  & Empirical Risk on observed data $\{(x_i,y_i)\}_{i=1}^{N}$ that are i.i.d. sampled from $\D$. \\
$\alpha$ and $\hat{\alpha}_t$ &  & True and empirical label distribution ratio $\alpha(y)=\calT(y)/\calS(y)$\\        
$\hat{R}^{\alpha}_{\calS}(h) = \frac{1}{N} \sum_{i=1}^{N} \alpha(y_i)\ell(h(x_i,y_i))$ &  & Empirical Weighted Risk on observed data $\{(x_i,y_i)\}_{i=1}^{N}$. \\
$\calS(z|y)= \int_x g(z|x)S(x|Y=y) dx$ &  &  Conditional distribution w.r.t. latent variable $Z$ that induced by feature learning function $g$.\\
$W_1(\calS_t(z|y)\|\calT(z|y))$ &  & Conditional Wasserstein distance on the latent space $Z$ \\

\bottomrule
\end{tabular}
\end{center}
\label{tab:TableOfNotation}
\end{table}

\section{Proof of Theorem 1}\label{sec:appendix_proof1}

\paragraph{Proof idea} Theorem 1 consists three steps in the proof:

\begin{lemma}
If the prediction loss is assumed as $L$-Lipschitz and the hypothesis is $K$-Lipschitz w.r.t. the feature $x$ (given the same label), i.e. for $\forall Y=y$, $\|h(x_1,y)-h(x_2,y)\|_2\leq K \|x_1-x_2\|_2$. Then the target risk can be upper bounded by:
\begin{equation}
    R_{\calT}(h) \leq \sum_{t}\blambda[t] R^{\alpha_t}_{\calS}(h) + LK \sum_{t}\blambda[t] \E_{y\sim\calT(y)} W_1(\calT(x|Y=y)\|\calS(x|Y=y))
\end{equation}
\end{lemma}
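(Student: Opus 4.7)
The plan is to first reduce the multi-source bound to a per-source inequality using the fact that $\blambda$ lies on the simplex, and then to prove the per-source inequality by disintegrating over $Y$ and applying Kantorovich--Rubinstein duality in the conditional $x$-space. Since $\sum_t \blambda[t] = 1$, we have $R_{\calT}(h) = \sum_t \blambda[t] R_{\calT}(h)$, so it suffices to show for every fixed $t$ that
\begin{equation*}
R_{\calT}(h) \;\leq\; R^{\alpha_t}_{\calS_t}(h) \;+\; LK\, \E_{y\sim\calT(y)}\, W_1(\calT(x|Y=y)\|\calS_t(x|Y=y)),
\end{equation*}
from which the lemma follows by a convex combination.

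For the per-source step, the key trick is to rewrite both sides as expectations over $y \sim \calT(y)$. On the target side, $R_{\calT}(h) = \E_{y\sim\calT(y)} \E_{x\sim\calT(x|y)} \ell(h(x,y))$ by definition. On the source side, the definition $\alpha_t(y) = \calT(y)/\calS_t(y)$ together with the standard importance-weighting identity gives
\begin{equation*}
R^{\alpha_t}_{\calS_t}(h) \;=\; \E_{y\sim\calS_t(y)} \alpha_t(y) \E_{x\sim\calS_t(x|y)} \ell(h(x,y)) \;=\; \E_{y\sim\calT(y)} \E_{x\sim\calS_t(x|y)} \ell(h(x,y)).
\end{equation*}
Subtracting, the difference collapses to a pointwise-in-$y$ comparison of expectations of the same function $f_y(x) := \ell(h(x,y))$ against the two conditional distributions $\calT(x|y)$ and $\calS_t(x|y)$.

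For each fixed $y$, the composed map $x \mapsto \ell(h(x,y))$ is $LK$-Lipschitz in $x$: $h(\cdot,y)$ is $K$-Lipschitz by assumption and $\ell$ is $L$-Lipschitz, so the composition has Lipschitz constant at most $LK$. Invoking the Kantorovich--Rubinstein dual representation of $W_1$ with the $L_2$ cost then yields
\begin{equation*}
\E_{x\sim\calT(x|y)} f_y(x) - \E_{x\sim\calS_t(x|y)} f_y(x) \;\leq\; LK \cdot W_1(\calT(x|Y=y)\|\calS_t(x|Y=y)).
\end{equation*}
Taking the expectation over $y \sim \calT(y)$, multiplying by $\blambda[t]$, and summing over $t$ gives the claimed inequality.

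No step here is really an obstacle, but the one subtlety worth flagging is that the disintegration trick only works because $\alpha_t$ depends solely on $y$ (it is the marginal label ratio), so pushing it through the conditional expectation in $x$ is free; if $\alpha_t$ had any $x$ dependence, the decomposition would fail. I would also note in the write-up that the argument implicitly uses absolute continuity $\calT(y) \ll \calS_t(y)$ so that $\alpha_t$ is well defined, which is standard in the target-shifted setup; otherwise the bound would be vacuous on the unsupported labels.
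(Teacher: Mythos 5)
Your proposal is correct and follows essentially the same route as the paper's proof: disintegrate the risk over $y$, use the label-ratio $\alpha_t(y)$ to transfer the label marginal from $\calT(y)$ to $\calS_t(y)$, apply Kantorovich--Rubinstein duality to the $LK$-Lipschitz map $x \mapsto \ell(h(x,y))$ for each fixed $y$, and then take the convex combination over $t$. Your added remarks on the $y$-only dependence of $\alpha_t$ and on absolute continuity are sensible but do not change the argument.
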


\begin{proof}
The target risk can be expressed as:
\begin{equation*}
    R_{\calT}(h(x,y)) = \E_{(x,y)\sim\calT} \ell(h(x,y)) = \E_{y\sim\calT(y)} \E_{x\sim\calT(x|y)} \ell(h(x,y))
\end{equation*}
By denoting $\alpha(y) = \frac{\calT(y)}{\calS(y)}$, then we have:
\begin{equation*}
        \E_{y\sim\calT(y)} \E_{y\sim\calT(x|y)} \ell(h(x,y)) =  \E_{y\sim\calS(y)} \alpha(y) \E_{x\sim\calT(x|y)} \ell(h(x,y))
\end{equation*}
Then we aim to upper bound $\E_{x\sim\calT(x|y)} \ell(h(x,y))$. For any fixed $y$,
\begin{equation*}
     \E_{x\sim\calT(x|y)} \ell(h(x,y))-\E_{x\sim\calS(x|y)} \ell(h(x,y)) \leq |\int_{x\in\calX} \ell(h(x,y)) d(\calT(x|y)-\calS(x|y))|
\end{equation*}
Then according to the Kantorovich-Rubinstein duality, for \textbf{any} distribution coupling $\gamma\in\Pi(\calT(x|y),\calS(x|y))$, then we have:
\begin{equation*}
    \begin{split}
       & =\inf_{\gamma} |\int_{\calX\times\calX} \ell(h(x_p,y)) - \ell(h(x_q,y))d\gamma(x_p,x_q)| \\
       & \leq \inf_{\gamma} \int_{\calX\times\calX} |\ell(h(x_p,y)) - \ell(h(x_q,y))| d\gamma(x_p,x_q) \\
       & \leq L \inf_{\gamma} \int_{\calX\times\calX} |h(x_p,y)) - h(x_q,y)| d\gamma(x_p,x_q) \\
       & \leq LK \inf_{\gamma} \int_{\calX\times\calX} \|x_p-x_q\|_2 d\gamma(x_p,x_q) \\
       & = LK W_1(\calT(x|Y=y)\|\calS(x|Y=y))
    \end{split}
\end{equation*}
The first inequality is obvious; and the second inequality comes from the assumption that $\ell$ is $L$-Lipschitz; the third inequality comes from the hypothesis is $K$-Lipschitz w.r.t. the feature $x$ (given the same label), i.e. for $\forall Y=y$, $\|h(x_1,y)-h(x_2,y)\|_2\leq K \|x_1-x_2\|_2$.

Then we have:
\begin{equation*}
\begin{split}
       R_{\calT}(h) & \leq \E_{y\sim\calS(y)} \alpha(y) [\E_{x\sim\calS(x|y)} \ell(h(x,y)) + LK W_1(\calT(x|y)\|\calS(x|y))]\\
       & = \E_{(x,y)\sim\calS} \alpha(y) \ell(h(x,y)) + LK \E_{y\sim\calT(y)} W_1(\calT(x|Y=y)\|\calS(x|Y=y)) \\
       & = R^{\alpha}_{\calS}(h) + LK \E_{y\sim\calT(y)} W_1(\calT(x|Y=y)\|\calS(x|Y=y))
\end{split}
\end{equation*}

Supposing each source $\calS_{t}$ we assign the weight $\blambda[t]$ and label distribution ratio $\alpha_t(y) = \frac{\calT(y)}{\calS_t(y)}$, then by combining this $T$ source target pair, we have:
\begin{equation*}
    R_{\calT}(h) \leq \sum_{t}\blambda[t] R^{\alpha_t}_{\calS_t}(h) + LK\sum_{t}\blambda[t] \E_{y\sim\calT(y)} W_1(\calT(x|Y=y)\|\calS_t(x|Y=y))
\end{equation*}
\end{proof}

Then we will prove Theorem 1 from this result, we will derive the non-asymptotic bound, estimated from the finite sample observations. Supposing the empirical label ratio value is $\hat{\alpha}_t$, then for any simplex $\blambda$ we can prove the high-probability bound.

\subsection{Bounding the empirical and expected prediction risk}
\begin{proof}
We first bound the first term, which can be upper bounded as:
\begin{small}
\begin{equation*}
    \sup_{h} |\sum_{t}\blambda[t] R^{\alpha_t}_{\calS_t}(h) - \sum_{t}\blambda[t] \hat{R}^{\hat{\alpha}_t}_{\calS_t}(h)|\leq \underbrace{\sup_{h}|\sum_{t}\blambda[t] R^{\alpha_t}_{\calS_t}(h) - \sum_{t}\blambda[t] \hat{R}^{\alpha_t}_{\calS_t}(h)|}_{(\RN{1})} + \underbrace{\sup_{h} |\sum_{t}\blambda[t] \hat{R}^{\alpha_t}_{\calS_t}(h) - \sum_{t}\blambda[t] \hat{R}^{\hat{\alpha}_t}_{\calS_t}(h)|}_{(\RN{2})}
\end{equation*}
\end{small}

\paragraph{Bounding term $(\RN{1})$} According to the McDiarmid inequality, each item changes at most $|\frac{2\blambda[t]\alpha_t(y)\ell}{N_{\calS_t}}|$. Then we have:
\begin{equation*}
    P\left((\RN{1}) -  \E (\RN{1})\geq t\right)\leq \exp(\frac{-2t^2}{\sum_{t=1}^T \frac{4}{\beta_t N} \blambda^2[t]\alpha_t(y)^2\ell^2})=\delta 
\end{equation*}
By substituting $\delta$, at high probability $1-\delta$ we have:
\begin{equation*}
    (\RN{1}) \leq  \E (\RN{1}) + L_{\max}d^{\sup}_{\infty} \sqrt{\sum_{t=1}^{T} \frac{\blambda[t]^2}{\beta_t}} 
    \sqrt{\frac{\log(1/\delta)}{2N}}
\end{equation*}
Where $L_{\max} = \sup_{h\in\calH} \ell(h)$ and $N=\sum_{t=1}^T N_{\calS_t}$ the total source observations and $\beta_t = \frac{N_{\calS_t}}{N}$ the frequency ratio of each source. And $d_{\infty}^{\sup} = \max_{t=1,\dots,T} d_{\infty}(\calT(y)\|\calS(y)) = \max_{t=1,\dots,T} \max_{y\in[1,\calY]}\alpha_{t}(y)$, the maximum true label shift value (constant).

Bounding $\E \sup (\RN{1})$, the expectation term can be upper bounded as the form of Rademacher Complexity:
\begin{equation*}
\begin{split}
     \E (\RN{1}) & \leq  2 \E_{\sigma} \E_{\hat{\calS}_1^T} \sup_{h}  \sum_{t=1}^T \blambda[t] \sum_{(x_t,y_t)\in\hat{\calS_t}} \frac{1}{TN}\left( \alpha_t(y)\ell(h(x_t,y_t)\right) \\
     & \leq 2 \sum_{t}\blambda[t]  \E_{\sigma}\E_{\hat{\calS}_1^T} \sup_{h}  \sum_{(x_t,y_t)\in\hat{\calS_t}} \frac{1}{TN}\left( \alpha_t(y)\ell(h(x_t,y_t)\right) \\
     & \leq 2 \sup_{t} \E_{\sigma}\E_{\hat{\calS}_t} \sup_{h}  \sum_{(x_t,y_t)\in\hat{\calS_t}} \frac{1}{TN}\left[ \alpha_t(y)\ell(h(x_t,y_t))\right]\\
     & = \sup_{t} 2\calR_t(\ell,\calH) = 2\bar{R}(\ell,\calH)
\end{split}
\end{equation*}
Where $\bar{R}(\ell,\calH) = \sup_{t} \calR_t(\ell,\calH) = \sup_{t}\sup_{h\sim\calH} \E_{\hat{\calS}_t,\sigma}  \sum_{(x_t,y_t)\in\hat{\calS_t}} \frac{1}{TN}\left[ \alpha_t(y)\ell(h(x_t,y_t))\right]$, represents the Rademacher complexity w.r.t. the prediction loss $\ell$, hypothesis $h$ and \emph{true} label distribution ratio $\alpha_t$.

Therefore with high probability $1-\delta$, we have:
\begin{equation*}
    \sup_{h}|\sum_{t}\blambda[t] R^{\alpha_t}_{\calS}(h) - \sum_{t}\blambda[t] \hat{R}^{\alpha_t}_{\calS}(h)|\leq \bar{\calR}(\ell,h) + L_{\max}d^{\sup}_{\infty} \sqrt{\sum_{t=1}^{T} \frac{\blambda[t]^2}{\beta_t}} 
    \sqrt{\frac{\log(1/\delta)}{2N}}
\end{equation*}

\paragraph{Bounding Term $(\RN{2})$} For all the hypothesis $h$, we have:
\begin{equation*}
\begin{split}
    |\sum_{t}\blambda[t] \hat{R}^{\alpha_t}_{\calS_t}(h) - \sum_{t}\blambda[t] \hat{R}^{\hat{\alpha}_t}_{\calS_t}(h)|& =|\sum_{t}\blambda[t] \frac{1}{N_{\calS_t}} \sum_{i}^{N_{\calS_t}} (\alpha(y(i))-\hat{\alpha}(y(i)))\ell(h)| \\
    & =\sum_{t}\blambda[t] \frac{1}{N_{\calS_t}}|\sum_{y}^{|\calY|} (\alpha(Y=y)-\hat{\alpha}(Y=y))\bar{\ell}(Y=y)|
\end{split}
\end{equation*}
Where $\bar{\ell}(Y=y) = \sum_{i}^{N_{\calS_t}}\ell(h(x_i,y_i=y))$, represents the cumulative error, conditioned on a given label $Y=y$. According to the Holder inequality, we have:
\begin{equation*}
\begin{split}
     \sum_{t}\blambda[t] \frac{1}{N_{\calS_t}}|\sum_{y}^{|\calY|} (\alpha_t(Y=y)-\hat{\alpha}_t(Y=y))\bar{\ell}(Y=y)| & \leq \sum_{t}\blambda[t] \frac{1}{N_{\calS_t}} \|\alpha_t-\hat{\alpha}_t\|_2 \|\bar{\ell}(Y=y)\|_2 \\
     & \leq L_{\max} \sum_{t}\blambda[t] \|\alpha_t-\hat{\alpha}_t\|_2 \\
     & \leq L_{\max} \sup_{t} \|\alpha_t-\hat{\alpha}_t\|_2
\end{split}
\end{equation*}

Therefore, $\forall h\in\calH$, with high probability $1-\delta$ we have:
\begin{equation*}
    \sum_{t}\blambda[t] R^{\alpha_t}_{\calS}(h) \leq  \sum_{t}\blambda[t] \hat{R}^{\hat{\alpha}_t}_{\calS}(h) +
    2\bar{\calR}(\ell,h) + L_{\max}d^{\sup}_{\infty} \sqrt{\sum_{t=1}^{T} \frac{\blambda[t]^2}{\beta_t}} 
    \sqrt{\frac{\log(1/\delta)}{2N}} + L_{\max} \sup_{t} \|\alpha_t-\hat{\alpha}_t\|_2
\end{equation*}

\subsection{Bounding empirical Wasserstein Distance}
Then we need to derive the sample complexity of the empirical and true distributions, which can be decomposed as the following two parts. For any $t$, we have:
\begin{equation*}
\begin{split}
     & \E_{y\sim\calT(y)} W_1(\calT(x|Y=y)\|\calS_t(x|Y=y)) - \E_{y\sim\hat{\calT}(y)} W_1(\hat{\calT}(x|Y=y)\|\hat{\calS_t}(x|Y=y)) \\
     & \underbrace{\leq \E_{y\sim\calT(y)} W_1(\calT(x|Y=y)\|\calS_t(x|Y=y)) - \E_{y\sim\calT(y)} W_1(\hat{\calT}(x|Y=y)\|\hat{\calS_t}(x|Y=y))}_{(\RN{1})} \\
     & + \underbrace{\E_{y\sim\calT(y)} W_1(\hat{\calT}(x|Y=y)\|\hat{\calS_t}(x|Y=y)) - \E_{y\sim\hat{\calT}(y)} W_1(\hat{\calT}(x|Y=y)\|\hat{\calS_t}(x|Y=y))}_{(\RN{2})}
\end{split}
\end{equation*}
\paragraph{Bounding $(\RN{1})$} 
We have:
\begin{equation*}
\begin{split}  
& \E_{y\sim\calT(y)} W_1(\calT(x|Y=y)\|\calS_t(x|Y=y)) - \E_{y\sim\calT(y)} W_1(\hat{\calT}(x|Y=y)\|\hat{\calS_t}(x|Y=y)) \\
& = \sum_{y} \calT(y) \left(W_1(\calT(x|Y=y)\|\calS_t(x|Y=y)) - W_1(\hat{\calT}(x|Y=y)\|\hat{\calS_t}(x|Y=y)\right) \\
& \leq |\sum_{y}\calT(y)| \sup_{y} \left(W_1(\calT(x|Y=y)\|\calS_t(x|Y=y)) - W_1(\hat{\calT}(x|Y=y)\|\hat{\calS_t}(x|Y=y)\right) \\
& = \sup_{y} \left(W_1(\calT(x|Y=y)\|\calS_t(x|Y=y)) - W_1(\hat{\calT}(x|Y=y)\|\hat{\calS_t}(x|Y=y)\right)\\
& \leq \sup_{y}~[W_1(\calS_t(x|Y=y)\|\hat{\calS_t}(x|Y=y)) 
 + W_1(\hat{\calS_t}(x|Y=y)\|\hat{\calT}(x|Y=y)) \\
& +  W_1(\hat{\calT}(x|Y=y)\|\calT(x|Y=y))- W_1(\hat{\calT}(x|Y=y)\|\hat{\calS_t}(x|Y=y))]\\
& = \sup_{y} W_1(\calS_t(x|Y=y)\|\hat{\calS_t}(x|Y=y)) + W_1(\hat{\calT}(x|Y=y)\|\calT(x|Y=y))
\end{split}
\end{equation*}
The first inequality holds because of the Holder inequality. As for the second inequality, we use the triangle inequality of Wasserstein distance. $W_1(P\|Q)\leq W_1(P\|P_1) + W_1(P_1\|P_2) + W_1(P_2\|Q)$.

According to the convergence behavior of Wasserstein distance \citep{weed2019sharp}, with high probability $\geq 1-2\delta$ we have:
\begin{equation*}
      W_1(\calS_t(x|Y=y)\|\hat{\calS_t}(x|Y=y)) + W_1(\hat{\calT}(x|Y=y)\|\calT(x|Y=y)) 
     \leq \kappa(\delta, N^{y}_{\calS_t}, N^{y}_{\calT})
\end{equation*}
Where $k(\delta,N^{y}_{\calS_t},N^{y}_{\calT}) = C_{t,y}(N^{y}_{\calS_t})^{-s_{t,y}} + C_{y}(N^{y}_{\calT})^{-s_{y}} +  \sqrt{\frac{1}{2}\log(\frac{2}{\delta})}(\sqrt{\frac{1}{N^{y}_{\calS_t}}} + \sqrt{\frac{1}{N_t^y}})$, where $N^{y}_{\calS_t}$ is the number of $Y=y$ in source $t$ and $N^{y}_{\calT}$ is the number of $Y=y$ in target distribution. $C_{t,y}$, $C_y$ $s_{t,y}>2$, $s_y>2$ are positive constant in the concentration inequality.  This indicates the convergence behavior between empirical and true Wasserstein distance. 

If we adopt the union bound (over all the labels) by setting $\delta \gets \delta/|\calY|$, then with high probability $\geq 1-2\delta$, we have:
\begin{equation*}
    \sup_{y} W_1(\calS(x|Y=y)\|\hat{\calS}(x|Y=y)) + W_1(\hat{\calT}(x|Y=y)\|\calT(x|Y=y)) 
     \leq \kappa(\delta, N^{y}_{\calS_t}, N^{y}_{\calT})
\end{equation*}
where $\kappa(\delta,N^{y}_{\calS_t},N^{y}_{\calT}) = C_{t,y}(N^{y}_{\calS_t})^{-s_{t,y}} + C_{y}(N^{y}_{\calT})^{-s_{y}} +  \sqrt{\frac{1}{2}\log(\frac{2|\calY|}{\delta})}(\sqrt{\frac{1}{N^{y}_{\calS_t}}} + \sqrt{\frac{1}{N^{y}_{\calT}}})$

Again by adopting the union bound (over all the tasks) by setting $\delta \gets \delta/T$, with high probability $\geq 1-2\delta$, we have:
\begin{small}
\begin{equation*}
\sum_{t} \blambda[t] \E_{y\sim\calT(y)} W_1(\calT(x|Y=y)\|\calS(x|Y=y)) - \sum_{t}\blambda[t] \E_{y\sim\calT(y)} W_1(\hat{\calT}(x|Y=y)\|\hat{\calS}(x|Y=y)) \leq \sup_{t} \kappa(\delta, N^{y}_{\calS_t}, N^{y}_{\calT})
\end{equation*}
\end{small}
Where $\kappa(\delta, N^{y}_{\calS_t}, N^{y}_{\calT}) = C_{t,y}(N^{y}_{\calS_t})^{-s_{t,y}} + C_{y}(N^{y}_{\calT})^{-s_{y}} + \sqrt{\frac{1}{2}\log(\frac{2T|\calY|}{\delta})}(\sqrt{\frac{1}{N^{y}_{\calS_t}}} + \sqrt{\frac{1}{N^{y}_{\calT}}})$.

\paragraph{Bounding $(\RN{2})$} We can bound the second term:
\begin{equation*}
\begin{split}
    &  \E_{y\sim\calT(y)} W_1(\hat{\calT}(x|Y=y)\|\hat{\calS_t}(x|Y=y)) - \E_{y\sim\hat{\calT}(y)} W_1(\hat{\calT}(x|Y=y)\|\hat{\calS_t}(x|Y=y)) \\
    & \leq \sup_{y} W_1(\hat{\calT}(x|Y=y)\|\hat{\calS_t}(x|Y=y)) |\sum_{y}\calT(y)-\hat{\calT}(y)| \\
    & \leq C^t_\text{max} |\sum_{y}\calT(y)-\hat{\calT}(y)| 
\end{split}
\end{equation*}
Where $C^t_\text{max} = \sup_{y} W_1(\hat{\calT}(x|Y=y)\|\hat{\calS}(x|Y=y))$ is a positive and bounded constant. 
Then we need to bound  $|\sum_{y}\calT(y)-\hat{\calT}(y)|$, by adopting MicDiarmid's inequality, we have at high probability $1-\delta$:
\begin{align*}
    |\sum_{y}\calT(y)-\hat{\calT}(y)| & \leq \E_{\hat{\calT}}|\sum_{y}\calT(y)-\hat{\calT}(y)| + \sqrt{\frac{\log(1/\delta)}{2N_{\calT}}} \\
    & = 2 \E_{\sigma} \E_{\hat{\calT}} \sum_{y}\sigma\hat{\calT}(y) + \sqrt{\frac{\log(1/\delta)}{2N_{\calT}}}
\end{align*}
Then we bound $\E_{\sigma} \E_{\hat{\calT}} \sum_{y}\sigma\hat{\calT}(y)$. We use the properties of Rademacher complexity [Lemma 26.11, \citep{shalev2014understanding}] and notice that $\hat{\calT}(y)$ is a probability simplex, then we have:
\begin{align*}
    \E_{\sigma} \E_{\hat{\calT}} \sum_{y}\sigma\hat{\calT}(y) \leq \sqrt{\frac{2\log(2|\calY|)}{N_{\calT}}}
\end{align*}
Then we have $|\sum_{y}\calT(y)-\hat{\calT}(y)| \leq \sqrt{\frac{2\log(2|\calY|)}{N_{\calT}}} + \sqrt{\frac{\log(1/\delta)}{2N_{\calT}}} $

Then using the union bound and denoting $\delta\gets \delta/T$, with high probability $\geq 1-\delta$ and for any simplex $\blambda$, we have:
\begin{equation*}
\begin{split}
\sum_{t}\blambda[t] \E_{y\sim\calT(y)} W_1(\hat{\calT}(x|Y=y)\|\hat{\calS_t}(x|Y=y)) & \leq \sum_{t}\blambda[t] \E_{y\sim\hat{\calT}(y)} W_1(\hat{\calT}(x|Y=y)\|\hat{\calS_t}(x|Y=y)) \\
    &  C_{\max}(\sqrt{\frac{2\log(2|\calY|)}{N_{\calT}}} + \sqrt{\frac{\log(T/\delta)}{2N_{\calT}}})
\end{split}
\end{equation*}
where $C_{\max} = \sup_{t} C^t_{\max}$.

Combining together, we can derive the PAC-Learning bound, which is estimated from the finite samples (with high probability $1-4\delta$):
\begin{equation*}
\begin{split}
     R_{\calT}(h) & \leq \sum_{t}\blambda_t \hat{R}^{\hat{\alpha}_t}_{\calS_t}(h) + 
    LH \sum_{t}\blambda_t \E_{y\sim\hat{\calT}(y)} W_1(\hat{\calT}(x|Y=y)\|\hat{\calS}(x|Y=y))
    + L_{\max}d^{\sup}_{\infty} \sqrt{\sum_{t=1}^{T} \frac{\blambda_t^2}{\beta_t}}\sqrt{\frac{\log(1/\delta)}{2N}} \\
    & + 2\bar{\calR}(\ell,h) 
     + L_{\max} \sup_{t} \|\alpha_t-\hat{\alpha}_t\|_2 + \sup_{t} \kappa(\delta, N^{y}_{\calS_t}, N^{y}_{\calT}) + C_{\max}(\sqrt{\frac{2\log(2|\calY|)}{N_{\calT}}} + \sqrt{\frac{\log(T/\delta)}{2N_{\calT}}}) 
\end{split}
\end{equation*}
Then we denote $\text{Comp}(N_{\calS_1},\dots,N_{\calT}, \delta) = 2\bar{\calR}(\ell,h) + \sup_{t} \kappa(\delta, N^{y}_{\calS_t}, N^{y}_{\calT}) + C_{\max}(\sqrt{\frac{2\log(2|\calY|)}{N_{\calT}}} + \sqrt{\frac{\log(T/\delta)}{2N_{\calT}}})$ as the convergence rate function that decreases with larger $N_{\calS_1},\dots, N_{\calT}$. 
Bedsides, $\bar{\calR}(\ell,h)=\sup_{t} \calR_t(\ell,\calH)$ is the re-weighted Rademacher complexity. Given a fixed hypothesis with finite VC dimension \footnote{If the hypothesis is the neural network, the Rademacher complexity can still be bounded analogously through recent theoretical results in deep neural-network}, it can be proved  $\bar{\calR}(\ell,h) = \min_{N_{\calS_1},\dots,N_{\calS_T}} \mathcal{O}(\sqrt{\frac{1}{N_{\calS_t}}})$ i.e \citep{shalev2014understanding}. 
\end{proof}

\section{Proof of Theorem 2}

We first recall the stochastic feature representation $g$ such that $g:\calX \to \calZ$ and \emph{scoring hypothesis} h $h:\calZ\times\calY \to \R$ and the prediction loss $\ell$ with $\ell: \R\to\R$. \footnote{Note this definition is different from the conventional binary classification with binary output, and it is more suitable in the multi-classification scenario and cross entropy loss \citep{hoffman2018algorithms}. For example, if we define $l=-\log(\cdot)$ and $h(z,y)\in (0,1)$ as a scalar score output. Then $\ell(h(z,y))$ can be viewed as the cross-entropy loss for the neural-network.}

\begin{proof}
The marginal distribution and conditional distribution w.r.t. latent variable $Z$ that are induced by $g$, which can be reformulated as:
\begin{equation*}
    \calS(z) = \int_{x} g(z|x)\calS(x) dx \quad\quad  \calS(z|y) = \int_{x} g(z|x)  \calS(x|Y=y) dx
\end{equation*}

In the multi-class classification problem, we additionally define the following distributions:
\begin{align*}
     & \mu^{k}(z) = \calS(Y=k,z)  = \calS(Y=k)  \calS(z|Y=k) \\
     & \pi^{k}(z) = \calT(Y=k,z)  = \calT(Y=k)  \calT(z|Y=k)
\end{align*}
Based on \citep{nguyen2009surrogate} and $g(z|x)$ is a stochastic representation learning function, the loss conditioned a fixed point $(x,y)$ w.r.t. $h$ and $g$ is $\E_{z\sim g(z|x)} \ell(h(z,y))$. Then taking the expectation over the $\calS(x,y)$ we have:
\footnote{An alternative understanding is based on the Markov chain. In this case it is a DAG with 
$Y \xleftarrow{\calS(y|x)} X \xrightarrow{g} Z$, $X\xrightarrow{\calS(y|x)} Y \xrightarrow{h} S \xleftarrow{h} Z \xleftarrow{g} X$. (S is the output of the scoring function). Then the expected loss over the all random variable can be equivalently written as $\int \Proba(x,y,z,s)~\ell(s) ~d(x,y,z,s) = \int \Proba(x) \Proba(y|x) \Proba(z|x) \Proba(s|z,y) \ell(s) = \int \Proba(x,y) \Proba(z|x) \Proba(s|z,y) \ell(s) d(x,y)d(z)d(s)$. Since the scoring $S$ is determined by $h(x,y)$, then $\Proba(s|y,z)=1$. According to the definition we have $\Proba(z|x)=g(z|x)$, $\Proba(x,y)=\calS(x,y)$, then the loss can be finally expressed as $\E_{\calS(x,y)}\E_{g(z|x)}\ell(h(z,y))$} 
\begin{equation*}
\begin{split}
    R_{\calS}(h,g) & =  \E_{(x,y)\sim\calS(x,y)}\E_{z\sim g(z|x)} \ell(h(z,y)) \\
    & = \sum_{k=1}^{|\calY|}\calS(y=k) \int_{x}\calS(x|Y=k)\int_{z} g(z|x) \ell(h(z,y=k)) dzdx \\
    & = \sum_{k=1}^{|\calY|} \calS(y=k) \int_{z} [\int_{x}\calS(x|Y=k) g(z|x)dx] \ell(h(z,y=k))dz \\
    & = \sum_{k=1}^{|\calY|} \calS(y=k) \int_{z} \calS(z|Y=k)\ell(h(z,y=k)) dz \\
    & = \sum_{k=1}^{|\calY|} \int_{z} \calS(z,Y=k)\ell(h(z,y=k)) dz\\
    & = \sum_{k=1}^{|\calY|} \int_{z} \mu^{k}(z) \ell(h(z,y=k)) dz
\end{split}
\end{equation*}
Intuitively, the expected loss w.r.t. the joint distribution $\calS$ can be decomposed as the expected loss on the label distribution $\calS(y)$ (weighted by the labels) and conditional distribution $\calS(\cdot|y)$ (real valued conditional loss). 

Then the expected risk on the $\calS$ and $\calT$ can be expressed as:
\begin{align*}
     & R_{\calS}(h,g) = \sum_{k=1}^{|\calY|} \int_{z} \ell(h(z,y=k)) \mu^{k}(z) dz  \\
    &  R_{\calT}(h,g) = \sum_{k=1}^{|\calY|} \int_{z} \ell(h(z,y=k)) \pi^{k}(z) dz 
\end{align*}

By denoting $\alpha(y) = \frac{\calT(y)}{\calS(y)}$, we have the $\alpha$-weighted loss:
\begin{equation*}
\begin{split}
     R^{\alpha}_{\calS}(h, g) =  & \calT(Y=1) \int_{z} \ell(h(z,y=1)) \calS(z|Y=1) + \calT(Y=2)\int_{z}\ell(h(z,y=2))\calS(z|Y=2) \\
     & + \dots + \calT(Y=k)\int_{z}\ell(h(z,y=k))\calS(z|Y=k) dz
\end{split}
\end{equation*}
Then we have:
\begin{equation*}
\begin{split}
    R_{\calT}(h, g) - R^{\alpha}_{\calS}(h,g) & \leq \sum_{k}\calT(Y=k) \int_{z}\ell(h(z,y=k)) d|\calS(z|Y=k) - \calT(z|Y=k)| \\
\end{split}
\end{equation*}
Under the same assumption, we have the loss function $\ell(h(z,Y=k))$ is KL-Lipschitz w.r.t. the cost $\|\cdot\|_2$ (given a fixed $k$). Therefore by adopting the same proof strategy (Kantorovich-Rubinstein duality) in Lemma 2, we have
\begin{equation*}
    \begin{split}
    & \leq KL \calT(Y=1)W_1(\calS(z|Y=1)\|\calT(z|Y=1)) 
     + \dots + KL \calT(Y=k) W_1(\calS(z|Y=k)\|\calT(z|Y=k)) \\
        & = KL \E_{y\sim\calT(y)} W_1(\calS(z|Y=y)\|\calT(z|Y=y))
    \end{split}
\end{equation*}
Therefore, we have:
\begin{equation*}
    R_{\calT}(h, g) \leq  R^{\alpha}_{\calS}(h, g) +  LK \E_{y\sim\calT(y)}  W_1(\calS(z|Y=y)\|\calT(z|Y=y))
\end{equation*}

Based on the aforementioned result, we have $\forall t=1,\dots,T$ and denote $\calS = \calS_t$ and $\alpha(y) = \alpha_t(y) = \calT(y)/\calS_t(y)$:
\begin{equation*}
    \blambda[t]  R_{\calT}(h, g) \leq  \blambda[t] R^{\alpha_t}_{\calS_t}(h, g) +  LK  \blambda[t] \E_{y\sim\calT(y)}  W_1(\calS_t(z|Y=y)\|\calT(z|Y=y))
\end{equation*}
Summing over $t=1,\dots,T$, we have:
\begin{equation*}
     R_{\calT}(h, g) \leq  \sum_{t=1}^T \blambda[t] R^{\alpha_t}_{\calS_t}(h, g) +  LK \sum_{t=1}^T \blambda[t] \E_{y\sim\calT(y)}  W_1(\calS_t(z|Y=y)\|\calT(z|Y=y))
\end{equation*}
\end{proof}

\section{Approximation $W_1$ distance}\label{sec:approx_w1}
According to Jensen inequality, we have 
\begin{equation*}
   W_1(\hat{\calS}_t(z|Y=y)\| \hat{\calT}(z|Y=y)) \leq \sqrt{[W_2(\hat{\calS}_t(z|Y=y)\| \hat{\calT}(z|Y=y))]^2} 
\end{equation*}
Supposing $\hat{\calS}_t(z|Y=y) \approx \calN(\mathbf{C}^y_t,\mathbf{\Sigma})$ and $\hat{\calT}(z|Y=y) \approx \calN(\mathbf{C}^y,\mathbf{\Sigma})$, then we have:
\begin{equation*}
    [W_2(\hat{\calS}_t(z|Y=y)\| \hat{\calT}(z|Y=y)]^2  = \|\mathbf{C}^y_t-\mathbf{C}^y\|_2^2 + \text{Trace}(2\mathbf{\Sigma} - 2 (\mathbf{\Sigma}\mathbf{\Sigma})^{1/2}) = \|\mathbf{C}^y_t-\mathbf{C}^y\|_2^2
\end{equation*}
We would like to point out that assuming the identical covariance matrix is more computationally efficient during the matching. This is advantageous and reasonable in the deep learning regime: we adopted the mini-batch (ranging from 20-128) for the neural network parameter optimization, in each mini-batch the samples of each class are \textbf{small}, then we compute the empirical covariance/variance matrix will be surely \textbf{biased} to the ground truth variance  
and induce a much higher complexity to optimize. By the contrary, the empirical mean is \textbf{unbiased} and computationally efficient, we can simply use the moving the moving average to efficiently update the estimated mean value (with a unbiased estimator). The empirical results verify the effectiveness of this idea.

\section{Proof of Lemma 1}
For each source $\calS_t$, by introducing the duality of Wasserstein-1 distance, for $y\in\calY$, we have:
\begin{equation*}
\begin{split}
     W_1(\calS_t(z|y)\|\calT(z|y)) & = \sup_{\|d\|_{L}\leq 1} \E_{z\sim\calS_t(z|y)} d(z) - \E_{z\sim\calT(z|y)} d(z)\\
     & =  \sup_{\|d\|_{L}\leq 1} \sum_{z} \calS_t(z|y) d(z) - \sum_{z} \calT(z|y) d(z)\\
     & = \frac{1}{\calT(y)} \sup_{\|d\|_{L}\leq 1}  \frac{\calT(y)}{\calS_t(y)} \sum_{z} \calS_t(z,y)d(z) - \sum_{z} \calT(z,y)d(z)
\end{split}
\end{equation*}
Then by defining $\bar{\alpha}_{t}(z) = \mathbf{1}_{\{(z,y)\sim\calS_t\}} \frac{\calT(Y=y)}{\calS_t(Y=y)} =  \mathbf{1}_{\{(z,y)\sim\calS_t\}}\alpha_t(Y=y)$, we can see for each pair observation $(z,y)$ sampled from the same distribution, then $\bar{\alpha}_{t}(Z=z) = \alpha_{t}(Y=y)$. Then we have:
\begin{equation*}
    \begin{split}
        \sum_{y} \calT(y) W_1(\calS_t(z|y)\|\calT(z|y)) & = \sum_{y} \sup_{\|d\|_{L}\leq 1} \{\sum_{z} \alpha_t(y) \calS_t(z,y)d(z) - \sum_{z} \calT(z,y)d(z)\} \\
        & = \sup_{\|d\|_{L}\leq 1} \sum_{z} \bar{\alpha}_t(z) \calS_t(z) d(z) - \sum_{z} \calT(z) d(z) \\
        & = \sup_{\|d\|_{L}\leq 1} \E_{z\sim\calS_t(z)} \bar{\alpha}_t(z) d(z) - \E_{z\sim\calT(z)} d(z)
    \end{split}
\end{equation*}

We propose a simple example to understand $\bar{\alpha}_t$: supposing three samples in $\calS_t = \{(z_1,Y=1), (z_2,Y=1) , (z_3,Y=0)\}$ then $\bar{\alpha}_t(z_1)=\bar{\alpha}_t(z_2)=\alpha_t(1)$ and $\bar{\alpha}_t(z_3)=\alpha_t(0)$. Therefore, the conditional term is equivalent to the label-weighted Wasserstein adversarial learning. 
We plug in each source domain as weight $\blambda[t]$ and domain discriminator as $d_t$, we finally have Lemma 1.

\section{Derive the label distribution ratio Loss}\label{sec:derive_ratio_loss}
In GLS, we have $\calT(z|y)\approx\calS_t(z|y)$, $\forall t$, then we suppose the predicted target distribution as $\bar{\calT}(y)$. By simplifying the notation, we define $f(z) = \text{argmax}_{y} h(z,y)$ the most possible prediction label output, then we have: 
\begin{align*}
    \bar{\calT}(y) & = \sum_{k=1}^{\calY} \calT(f(z)=y|Y=k)\calT(Y=k) = \sum_{k=1}^{\calY} \calS_t(f(z)=y|Y=k)\calT(Y=k) \\
    & = \sum_{i=1}^{\calY} \calS_t(f(z)=y, Y=k)\alpha_t(k) =\bar{\calT}_{\alpha_t}(y)
\end{align*}
The first equality comes from the definition of target label prediction distribution, $\bar{\calT}(y) = \E_{\calT(z)} \mathbf{1}\{f(z)=y\} = \calT(f(z)=y) = \sum_{k=1}^{\calY} \calT(f(z)=y,Y=k) =\sum_{k=1}^{\calY} \calT(f(z)=y|Y=k)\calT(Y=k)$.

The second equality $\calT(f(z)=y|Y=k) = \calS_t(f(z)=y|Y=k)$ holds since $\forall t$, $\calT(z|y)\approx\calS_t(z|y)$, then for the shared hypothesis $f$, we have $\calT(f(z)=y|Y=k) = \calS_t(f(z)=y|Y=k)$.

The term $\calS_t(f(z)=y,Y=k)$ is the (expected) source prediction confusion matrix, and we denote its empirical (observed) version as $\hat{\calS}_t(f(z)=y,Y=k)$.

Based on this idea, in practice we want to find a $\hat{\alpha}_t$ to match the two predicted distribution $\bar{\calT}$ and $\bar{\calT}_{\hat{\alpha}_t}$. If we adopt the KL-divergence as the metric, we have:
\begin{align*}
    \min_{\hat{\alpha}_t} D_{\text{KL}}(\bar{\calT}\|\bar{\calT}_{\hat{\alpha}_t})
    & = \min_{\hat{\alpha}_t} \E_{y\sim\bar{\calT}} \log(\frac{\bar{\calT}(y)}{\bar{\calT}_{\hat{\alpha}_t}(y)}) 
     = \min_{\hat{\alpha}_t} - \E_{y\sim\bar{\calT}} \log(\bar{\calT}_{\hat{\alpha}_t}(y)) \\
     & = \min_{\hat{\alpha}_t} - \sum_{y} \bar{\calT}(y) \log(\sum_{k=1}^{\calY} \calS_t(f(z)=y, Y=k)\hat{\alpha}_t(k))
\end{align*}
We should notice the nature constraints of label ratio: $\{\hat{\alpha}_t(y) \geq 0, \sum_{y}\hat{\alpha}_{t}(y)\hat{\calS}_t(y) = 1 \}$.  Based on this principle, we proposed the optimization problem to estimate each label ratio. We adopt its empirical counterpart, the empirical confusion matrix $C_{\hat{\calS}_t}[y,k] = \hat{\calS}_t [f(z)=y,Y=k]$, then the optimization loss can be expressed as:
\begin{align*}
    \min_{\hat{\alpha}_t} & \quad\quad  -\sum_{y=1}^{|\calY|} \bar{\calT}(y) \log(\sum_{k=1}^{|\calY|} C_{\hat{\calS}_t}[y,k]\hat{\alpha}_t(k)) \\
    & \text{s.t.} \quad \forall y\in\calY, \hat{\alpha}_t(y)\geq 0, \quad \sum_{y}\hat{\alpha}_t(y)\hat{\calS}_t(y) = 1
\end{align*}

\section{Label Partial Multi-source unsupervised DA}\label{sec:multi_partial_loss}

The key difference between multi-conventional and partial unsupervised DA is the estimation step of $\hat{\alpha}_t$. In fact, we only add a sparse constraint for estimating each $\hat{\alpha}_t$:
\begin{equation}
\begin{split}
    \min_{\hat{\alpha}_t} & \quad\quad  -\sum_{y=1}^{|\calY|} \bar{\calT}(y) \log(\sum_{k=1}^{|\calY|} C_{\hat{\calS}_t}[y,k]\hat{\alpha}_t(k)) +C_2\|\hat{\alpha}_t\|_1 \\
    & \text{s.t.} \quad \forall y\in\calY, \hat{\alpha}_t(y)\geq 0, \quad \sum_{y}\hat{\alpha}_t(y)\hat{\calS}_t(y) = 1
\end{split}
\label{eq:solve_partial}
\end{equation}
Where $C_2$ is the hyper-parameter to control the level of target label sparsity, to estimate the target label distribution. In the paper, we denote $C_2 = 0.1$.

\section{Explicit and Implicit conditional learning}\label{sec:algorithm}
Inspired by Theorem 2, we need to learn the function $g:\calX\to\calZ$ and $h:\calZ\times\calY\to\R$ to minimize:
\begin{equation*}
    \min_{g,h} \sum_{t}\blambda[t] \hat{R}^{\hat{\alpha}_t}_{\calS_t}(h, g) + C_0 \sum_{t}\blambda[t] \E_{y\sim\hat{\calT}(y)} W_1(\hat{\calS}_t(z|Y=y)\| \hat{\calT}(z|Y=y))
\end{equation*}
This can be equivalently expressed as:
\begin{equation*}
    \begin{split}
         \min_{g,h} \sum_{t}\blambda[t]  \hat{R}^{\alpha_t}_{\calS_t}(h, g) & + \epsilon C_0  \sum_{t}\blambda[t] \E_{y\sim\hat{\calT}(y)} W_1(\hat{\calS}_t(z|Y=y)\| \hat{\calT}(z|Y=y)) \\
        & + (1-\epsilon) C_0 \sum_{t}\blambda[t] \E_{y\sim\hat{\calT}(y)} W_1(\hat{\calS}_t(z|Y=y)\| \hat{\calT}(z|Y=y)) \\
    \end{split}
\end{equation*}
Due to the explicit and implicit approximation of conditional distance, we then optimize an alternative form:
\begin{equation}
\begin{split}
     \min_{g,h} \max_{d_1,\dots,d_T} \underbrace{\sum_{t}\blambda[t]\hat{R}^{\hat{\alpha}_t}_{\calS_t}(h, g)}_{\text{Classification Loss}} & + \epsilon C_0  \underbrace{\sum_{t}\blambda[t] \E_{y\sim\hat{\calT}(y)}\|\mathbf{C}_t^y - \mathbf{C}^y\|_2}_{\text{Explicit Conditional Loss}} \\
        & + (1-\epsilon) C_0 \underbrace{\sum_{t}\blambda[t][\E_{z\sim\hat{\calS}_t(z)} \bar{\alpha}^t(z) d(z) - \E_{z\sim\hat{\calT}(z)} d(z)]}_{\text{Implicit Conditional Loss}} \\
\end{split}
\label{equation:full_loss}
\end{equation}

\begin{figure}[!t]
    \centering
    \includegraphics[scale=0.45]{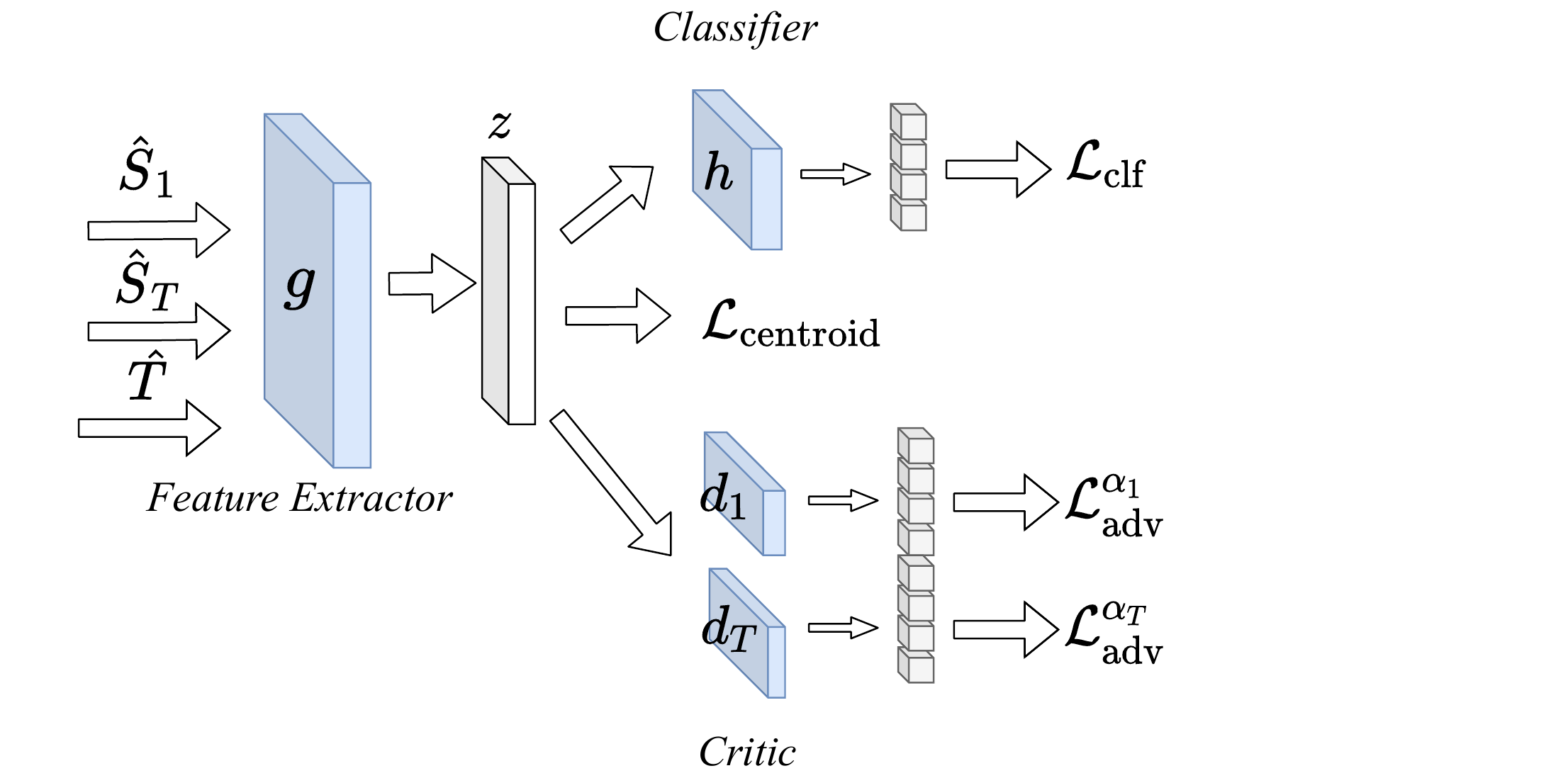}
    \caption{Network Structure of Proposed Approach. It consists three losses: the weighted Classification losses; the centroid matching for explicit conditional matching; the weighted adversarial loss for implicit conditional matching, showed in Eq. (\ref{equation:full_loss})}
    \label{fig:network}
\end{figure}

Where 
\begin{itemize}
    \item $\mathbf{C}_t^y = \sum_{(z_t, y_t) \sim \hat{\calS}_t} \mathbf{1}_{\{y_t = y\}} z_t $ the centroid of label $Y=y$ in source $\calS_t$.
    \item $\mathbf{C}^y = \sum_{(z_t, y_p) \sim \hat{\calT}} \mathbf{1}_{\{y_p = y\}} z_t $ the centroid of pseudo-label $Y=y_p$ in target $\calS_t$. (If it is the unsupervised DA scenarios).
    \item $\bar{\alpha}_{t}(z) = \mathbf{1}_{\{(z,y)\sim\calS_t\}} \hat{\alpha}_{t}(Y=y)$, namely if each pair observation $(z,y)$ from the distribution, then $\bar{\alpha}_{t}(Z=z) = \hat{\alpha}_{t}(Y=y)$. 
    \item $d_1,\cdots,d_T$ are domain discriminator (or critic function) restricted within $1$-Lipschitz function.
    \item $\epsilon\in[0,1]$ is the adjustment parameter in the trade-off of explicit and implicit learning. We fix $\epsilon=0.5$ in the experiments.
    \item $\hat{\calT}(y)$ empirical target label distribution. (In the unsupervised DA scenarios, we approximate it by predicted target label distribution $\bar{\calT}(y)$.) 
\end{itemize}

\paragraph{Gradient Penalty} In order to enforce the Lipschitz property of the statistic critic function, we adopt the gradient penalty term \citep{gulrajani2017improved}. More concretely, given two samples $z_s\sim\calS_t(z)$ and $z_t\sim\calT(z)$ we generate an interpolated sample $z_{\text{int}} = \xi z_s + (1-\xi) z_t$ with $\xi\sim\text{Unif}[0,1]$. Then we add a gradient penalty $\|\nabla d(z_{\text{int}})\|^2_2 $ as a regularization term to control the Lipschitz property w.r.t. the discriminator $d_1,\cdots, d_T$.

\section{Algorithm Descriptions}\label{sec:algo_description}
We propose a detailed pipeline of the proposed algorithm in the following, shown in Algorithm \ref{WMDARN_algo} and \ref{WMT_algo}. As for updating $\blambda$ and $\alpha_t$, we iteratively solve the convex optimization problem after each training epoch and updating them by using the moving average technique.

For solving the $\blambda$ and $\alpha_t$, we notice that frequently updating these two parameters in the mini-batch level will lead to an instability result during the training. \footnote{In the label distribution shift scenarios, the mini-batch datasets are highly labeled imbalanced. If we evaluate $\alpha_t$ over the mini-batch, it can be computationally expensive and unstable.} As a consequence, we compute the accumulated confusion matrix, weighted prediction risk, and conditional Wasserstein distance for the whole training epoch and then solve the optimization problem. We use CVXPY to optimize the two standard convex losses. \footnote{The optimization problem w.r.t. $\alpha_t$ and $\blambda$ is not large scale, then using the standard convex solver is fast and accurate.}

\paragraph{Comparison with different time and memory complexity.} We discuss the time and memory complexity of our approach.

Time complexity: In computing each batch we need to compute $T$ re-weighted loss, $T$ domain adversarial loss and $T$ explicit conditional loss. Then our computational complexity is still $\mathcal(O)(T)$ during the mini-batch training, which is comparable with recent SOTA such as MDAN and DARN.  In addition, after each training epoch we need to estimate $\alpha_t$ and $\blambda$, which can have time complexity $\mathcal{O}(T|\calY|)$ with each epoch. (If we adopt SGD to solve these two convex problems). Therefore, the our proposed algorithm is time complexity $\mathcal{O}(T|\calY|)$. The extra $\calY$ term in time complexity is due to the approach of label shift in the designed algorithm. 

Memory Complexity: Our proposed approach requires $\mathcal{O}(T)$ domain discriminator and $\mathcal{O}(T|\calY|)$ class-feature centroids. By the contrary, MDAN and DARN require $\mathcal{O}(T)$ domain discriminator and M3SDA and MDMN require $\mathcal{O}(T^2)$ domain discriminators. Since our class-feature centroids are defined in the latent space ($z$), then the memory complexity of the class-feature centroids can be much smaller than domain discriminators.

\begin{center}
\begin{algorithm}[h]
		\caption{Wasserstein Aggregation Domain Network (unsupervised scenarios, one iteration)}
		\begin{algorithmic}[1] 
		\REQUIRE Labeled source samples $\hat{\calS}_1,\dots,\hat{\calS}_T$, Target samples $\hat{\calT}$
        \ENSURE  Label distribution ratio $\hat{\alpha}_t$ and task relation simplex $\blambda$. Feature Learner $g$, Classifier $h$, Statistic critic function $d_1,\dots,d_T$, class centroid for source $\mathbf{C}_t^y$ and target $\mathbf{C}^y$ ($\forall t=[1,T],y\in\calY$).
        \STATE \(\triangleright\triangleright\triangleright\) DNN Parameter Training Stage (fixed $\alpha_t$ and $\blambda$) \(\triangleleft\triangleleft\triangleleft\)
        \FOR{mini-batch of samples $(\x_{\calS_1},\y_{\calS_1})\sim\hat{\calS}_1$, $\dots$, $(\x_{\calS_T},\y_{\calS_T})\sim\hat{\calS}_T$, $(\x_{\calT})\sim\hat{\calT}$ }
        \STATE Predict target pseudo-label $\bar{\y}_{\calT} = \text{argmax}_{y} h(g(\x_{\calT}),y)$
        \STATE Compute source confusion matrix for each batch (un-normalized)\\
        ~~~~~~ $C_{\hat{\calS}_t} = \#[\text{argmax}_{y^{\prime}} h(z,y^{\prime})=y, Y=k]$ ($t=1,\dots,T$)
        \STATE Compute the \emph{batched}~class centroid for source $C_t^y$  and target $C^y$.
        \STATE Moving Average for update source/target class centroid: (We set $\epsilon_1 = 0.7$)
        \STATE \quad\quad\quad Source class centroid update \quad $\mathbf{C}_t^y = \epsilon_1 \times \mathbf{C}_t^y + (1-\epsilon_1) \times C_t^y $  
        \STATE \quad\quad\quad Target class centroid update \quad $\mathbf{C}^y = \epsilon_1 \times \mathbf{C}^y + (1-\epsilon_1) \times C^y $ 
        \STATE Updating $g,h,d_1,\dots,d_T$ (SGD and Gradient Reversal), based on Eq.(\ref{equation:full_loss})
        \ENDFOR
        \STATE \(\triangleright\triangleright\triangleright\) Estimation $\hat{\alpha}_t$ and $\blambda$ \(\triangleleft\triangleleft\triangleleft\)
		\STATE Compute the global(normalized) source confusion matrix \\
		$C_{\hat{\calS}_t} = \hat{\calS}_t[\text{argmax}_{y^{\prime}} h(z,y^{\prime})=y, Y=k]$ ($t=1,\dots,T$)
		\STATE Solve $\alpha_t$ (denoted as $\{\alpha_t^{\prime}\}_{t=1}^T$) (Or Eq.(\ref{eq:solve_partial})) in the partial scenario).
		\STATE Update $\alpha_t$ by moving average: $\alpha_t = \epsilon_1 \times \alpha_t  + (1-\epsilon_1) \times \alpha_t^{\prime}$
		\STATE Compute the weighted loss and weighted centroid distance, then solve $\blambda$ (denoted as  $\blambda^{\prime}$) from Sec. 2.3.
		\STATE Updating $\blambda$ by moving average: $\blambda = 0.8 \times \blambda + 0.2 \times \blambda^{\prime} $
        \end{algorithmic}
        \label{WMDARN_algo}
\end{algorithm}
\end{center}

\begin{center}
\begin{algorithm}[h]\label{algo:training_uda}
		\caption{Wasserstein Aggregation Domain Network (Limited Target Data, one iteration)}
		\begin{algorithmic}[1] 
		\REQUIRE Labeled source samples $\hat{\calS}_1,\dots,\hat{\calS}_T$, Target samples $\hat{\calT}$,  Label shift ratio $\alpha_t$
        \ENSURE Task relation simplex $\blambda$. Feature Learner $g$, Classifier $h$, Statistic critic function $d_1,\dots,d_T$, class centroid for source $\mathbf{C}_t^y$ and target $\mathbf{C}^y$ ($\forall t=[1,T],y\in\calY$).
        \STATE \(\triangleright\triangleright\triangleright\) DNN Parameter Training Stage (fixed $\blambda$) \(\triangleleft\triangleleft\triangleleft\)
        \FOR{mini-batch of samples $(\x_{\calS_1},\y_{\calS_1})\sim\hat{\calS}_1$, $\dots$, $(\x_{\calS_T},\y_{\calS_T})\sim\hat{\calS}_T$, $(\x_{\calT})\sim\hat{\calT}$ }
        \STATE Compute the \emph{batched}~class centroid for source $C_t^y$  and target $C^y$.
        \STATE Moving Average for update source/target class centroid: (We set $\epsilon_1 = 0.7$)
        \STATE \quad\quad\quad Source class centroid update \quad $\mathbf{C}_t^y = \epsilon_1 \times \mathbf{C}_t^y + (1-\epsilon_1) \times C_t^y $ 
        \STATE \quad\quad\quad Target class centroid update \quad $\mathbf{C}^y = \epsilon_1 \times \mathbf{C}^y + (1-\epsilon_1) \times C^y $ 
        \STATE Updating $g,h,d_1,\dots,d_T$ (SGD and Gradient Reversal), based on Eq.(\ref{equation:full_loss}).
        \ENDFOR
        \STATE \(\triangleright\triangleright\triangleright\) Estimation $\blambda$ \(\triangleleft\triangleleft\triangleleft\)
		\STATE Solve $\blambda$ by Sec. 2.3. (denoted as  $\blambda^{\prime}$) 
		\STATE Updating $\blambda$ by moving average: $\blambda = \epsilon_1 \times \blambda + (1-\epsilon_1) \times \blambda^{\prime}$
        \end{algorithmic}
        \label{WMT_algo}
\end{algorithm}
\end{center}

\newpage
\section{Dataset Description and Experimental Details}\label{sec: Details}

\subsection{Amazon Review Dataset} 
We used the amazon review dataset \citep{blitzer2007biographies}. It contains four domains 
(Books, DVD, Electronics, and Kitchen) with positive (label "1") and negative product reviews (label "0"). The data size is 6465 (Books), 5586 (DVD), 7681 (Electronics), and 7945 (Kitchen). We follow the common data pre-processing strategies \cite{chen2012marginalized}: use the bag-of-words (BOW) features then extract the top-5000 frequent unigram and bigrams of all the reviews.

We also noticed the original data-set are label balanced $\D(y=0)=\D(y=1)$. To enhance the benefits of the proposed approach, we create a new dataset with label distribution drift. Specifically, 
in the experimental settings, we randomly drop $50\%$ data with label "0" (negative reviews) for all the source data while keeping the target identical, showing in Fig (\ref{fig:amazon_label}).

We choose the MLP model with 
\begin{itemize}
    \item feature representation function $g$: $[5000,1000]$ units
    \item Task prediction and domain discriminator function $[1000,500,100]$ units, 
\end{itemize}

We choose the dropout rate as $0.7$ in the hidden and input layers. The hyper-parameters are chosen based on cross-validation. The neural network is trained for $50$ epochs and the mini-batch size is 20 per domain. The optimizer is Adadelta with a learning rate of 0.5. 

\paragraph{Experimental Setting} We use the amazon Review dataset for two transfer learning scenarios (limited target labels and unsupervised DA). We first randomly select 2K samples for each domain. Then we create a drifted distribution of each source, making each source $\approx1500$ and target sample still 2K. 

In the unsupervised DA, we use these labeled source tasks and \emph{unlabelled} target task, which aims to predict the labels on the target domain. 

In the conventional transfer learning, we random sample only $10\%$ dataset ($\approx 200$ samples) as the target training set and the rest $90\%$ samples as the target test set. 

We select $C_0=0.01$ and $C_1 = 1$ for these two transfer scenarios. In both practical settings, we set the maximum training epoch as 50.

\begin{figure}[h]
  \centering
  \begin{subfigure}{0.45\textwidth}
  \centering
     \includegraphics[scale=0.45]{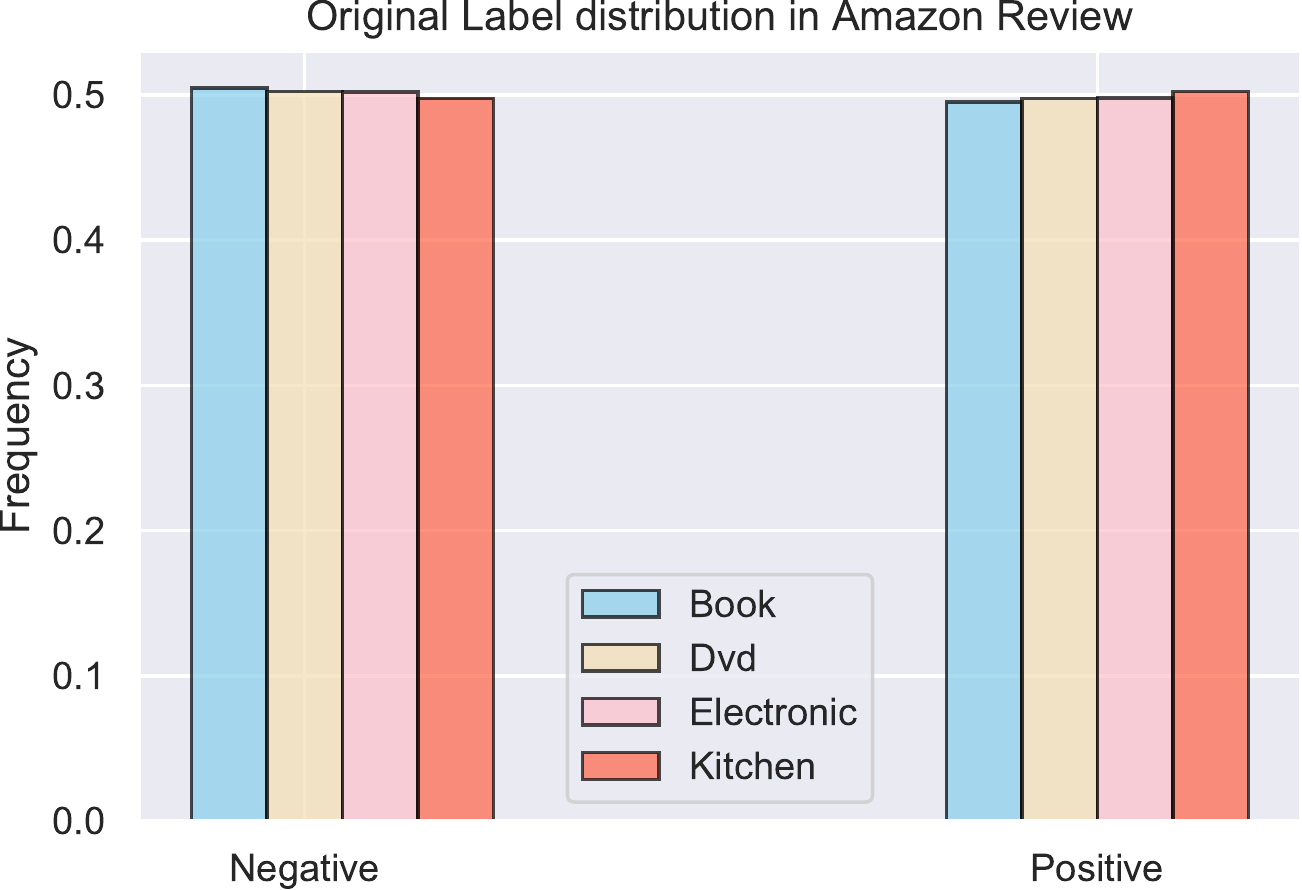}
     \caption{}
  \end{subfigure}
  \begin{subfigure}{0.45\textwidth}
  \centering
     \includegraphics[scale=0.45]{figure/amazon-label-shifted.pdf}
     \caption{}
  \end{subfigure}
  \caption{Amazon Review dataset (a) Original Label Training Distribution; (b) Label-Shifted distribution with sources tasks: Book, Dvd, Electronic, and target task Kitchen. We randomly drop $50\%$ of the negative reviews for all the source distribution while keeping the target label distribution unchanged.}
  \label{fig:amazon_label}
\end{figure}

\subsection{Digit Recognition} 
We follow the same settings of \cite{ganin2016domain} and we use four-digit recognition datasets in the experiments MNIST, USPS, SVHN, and Synth. MNIST and USPS are the standard digits recognition task. Street View House Number (SVHN) \cite{ganin2016domain} is the digit recognition dataset from house numbers in Google Street View Images. 
Synthetic Digits (Synth) \cite{ganin2016domain} is a synthetic dataset that by various transforming SVHN dataset. 

We also visualize the label distribution in these four datasets. The original datasets show an almost uniform label distribution on the MNIST as well as Synth, (showing in Fig. \ref{fig:digits_label} (a)). 
In our paper, we generate a label distribution drift on the source datasets for each multi-source transfer learning.
{\color{blue}{Concretely, we drop $50\%$ of the data on digits 5-9 of all the sources while we keep the target label distribution unchanged.}} (Fig. \ref{fig:digits_label} (b) illustrated one example with sources: Mnist, USPS, SVHN, and Target Synth. We drop the labels only on the sources.) 

MNIST and USPS images are resized to 32 $\times$ 32 and represented as 3-channel color images to match the shape of the other three datasets. Each domain has its own given training and test sets when downloaded. Their respective training sample sizes are 60000, 7219, 73257, 479400, and the respective test sample sizes are 10000, 2017, 26032, 9553. 

The model structure is shown in Fig.~\ref{network_str}. There is no dropout and the hyperparameters are chosen based on cross-validation. It is trained for 60 epochs and the mini-batch size is 128 per domain. The optimizer is Adadelta with a learning rate of 1.0. We adopted $\gamma=0.5$ for MDAN and $\gamma = 0.1$ for DARN in the baseline \citep{wen2019domain}.

\paragraph{Experimental Setting} We use the Digits dataset for two transfer learning scenarios (limited target labels and unsupervised DA). Notice the USPS data has only 7219 samples and the digits dataset is relatively simple. We first randomly select 7K samples for each domain. We create a drifted distribution of each source, making each source $\approx 5300$, and the target sample still 7K. 

In the unsupervised DA, we use these labeled source tasks and \emph{unlabelled} target task, which aims to predict the labels on the target domain. 

In the transfer learning with limited data, we random sample only $10\%$ dataset ($\approx 700$ samples) as the target training set and the rest $90\%$ samples as the target test set.

We select $C_0=0.01$ and $C_1$ as the maximum prediction loss $C_1= \max_{t} {R^{\alpha_t}(h)}$ as the hyper-parameters across these two scenarios. The maximum training epoch is $60$.

\begin{figure}[h]
    \centering
    \begin{enumerate}
     \item Feature extractor: with 3 convolution layers. 
    
    'layer1': 'conv': [3, 3, 64], 'relu': [], 'maxpool': [2, 2, 0],
    
    'layer2': 'conv': [3, 3, 128], 'relu': [], 'maxpool': [2, 2, 0],
    
    'layer3': 'conv': [3, 3, 256], 'relu': [], 'maxpool': [2, 2, 0],
    
    \item Task prediction: with 3 fully connected layers.
    
    'layer1': 'fc': [*, 512], 'act\_fn': 'relu',
    
    'layer2': 'fc': [512, 100], 'act\_fn': 'relu',
    
    'layer3': 'fc': [100, 10],
    
    \item Domain Discriminator:  with 2 fully connected layers.
    
    \emph{reverse\_gradient}()
    
    'layer1': 'fc': [*, 256], 'act\_fn': 'relu',

    'layer2': 'fc': [256, 1], 
\end{enumerate}
    \caption{Neural Network Structure in the digits recognition \citep{ganin2016domain}}
    \label{network_str}
\end{figure}

\begin{figure}[h]
  \centering
  \begin{subfigure}{0.45\textwidth}
  \centering
     \includegraphics[scale=0.45]{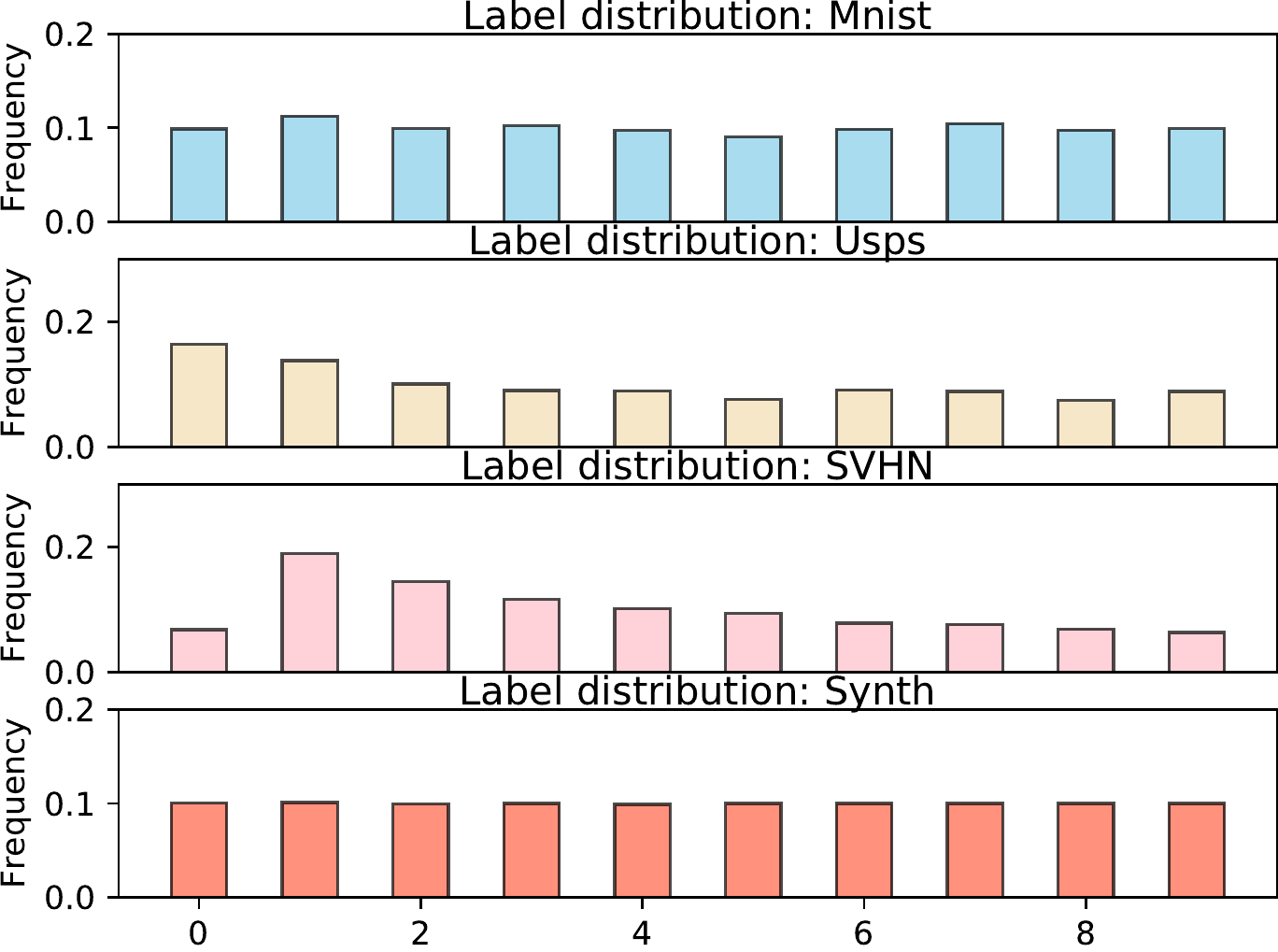}
     \caption{}
  \end{subfigure}
  \begin{subfigure}{0.45\textwidth}
  \centering
     \includegraphics[scale=0.45]{figure/digits_label_shifted.pdf}
     \caption{}
  \end{subfigure}
  \caption{One example in Digits dataset with Sources: MNIST, USPS, SVHN and Target Synth. We randomly drop $50\%$ data on digits 5-9 in all sources while keeping target label distribution unchanged.}
  \label{fig:digits_label}
\end{figure}

\subsection{Office-Home dataset}
To show the dataset in the complex scenarios, we use the challenging Office-Home dataset \citep{venkateswara2017deep}. It contains images of 65 objects such as a spoon, sink, mug, and pen from four different domains: Art (paintings, sketches, and/or artistic depictions), Clipart (clipart images), Product (images without background), and Real-World (regular images captured with a camera). One of the four datasets is chosen as an unlabelled target domain and the other three datasets are used as labeled source domains. 

The dataset size is 2427 (Art), 4365 (Clipart), 4439 (Product), 4357 (Real-World). We follow the same training/test procedure as \citep{wen2019domain}.  We additionally visualize the label distribution $\D(y)$ in four domains in Fig.\ref{fig:office-home-label}, which illustrated the inherent different label distributions. We did not re-sample the source label distribution to uniform distribution in the data pre-processing step. All the baselines are evaluated under the same setting.

We use the ResNet50 \citep{he2016deep} pretrained from the ImageNet in PyTorch as the base network for feature learning and put an MLP with the network structure shown in Fig. \ref{network_str_office}.

\paragraph{Experimental Settings} We use the original Office-Home dataset for two transfer learning scenarios (unsupervised DA and label-partial unsupervised DA). We use SGD optimizer with learning rate 0.005, momentum 0.9 and weight\_decay value 1e-3. It is trained for 100 epochs and the mini-batch size is 32 per domain.  As for the baselines, MDAN use $\gamma$ = 1.0 while DARN use $\gamma$ = 0.5. We select $C_0=0.01$ and $C_1$ as the maximum prediction loss $C_1= \max_{t} {R^{\alpha_t}(h)}$ as the hyper-parameters across these two scenarios.

In the multi-source unsupervised partial DA, we randomly select 35 classes from the target (by repeating 3 samplings), then at each sampling we run 5 times. The final result is based on these $3\times 5 = 15$ repetitions. 

%\begin{figure}[h]
%    \centering
%    \includegraphics[scale=0.15]{}
%    \caption{Samples Images From Office-Home dataset \citep{venkateswara2017deep}, which consists four domains with non-uniform label distribution.}
%    \label{fig:my_label}
%\end{figure}

\begin{figure}[h]
    \centering
    \begin{enumerate}
     \item Feature extractor: ResNet50 \citep{he2016deep},
    
    \item Task prediction: with 3 fully connected layers.
    
    'layer1': 'fc': [*, 256], 'batch\_normalization', 'act\_fn': 'Leaky\_relu',
    
    'layer2': 'fc': [256, 256], 'batch\_normalization', 'act\_fn': 'Leaky\_relu',
    
    'layer3': 'fc': [256, 65], 
    
    \item Domain Discriminator:  with 3 fully connected layers.
    
    \emph{reverse\_gradient}()
    
    'layer1': 'fc': [*, 256], 'batch\_normalization', 'act\_fn': 'Leaky\_relu',
    
    'layer2': 'fc': [256, 256], 'batch\_normalization', 'act\_fn': 'Leaky\_relu', 
    
    'layer3': 'fc': [256, 1], 'Sigmoid', 
\end{enumerate}
    \caption{Neural Network Structure in the Office-Home}
    \label{network_str_office}
\end{figure}

\section{Analysis in Unsupervised DA}\label{sec:appendx_uda_additional}

\subsection{Ablation Study: Different Dropping Rate}
To show the effectiveness of our proposed approach, we change the drop rate of the source domains, showing in Fig.(\ref{fig:appendix_amazon_drop_task}). We observe that in task Book, DVD, Electronic, and Kitchen, the results are significantly better under a large label-shift. In the initialization with almost no label shift, the state-of-the-art DARN illustrates a slightly better ($<1\%$) result.  

\begin{figure}[h]
    \centering
\begin{subfigure}{0.2\textwidth}
     \centering
     \includegraphics[scale=0.35]{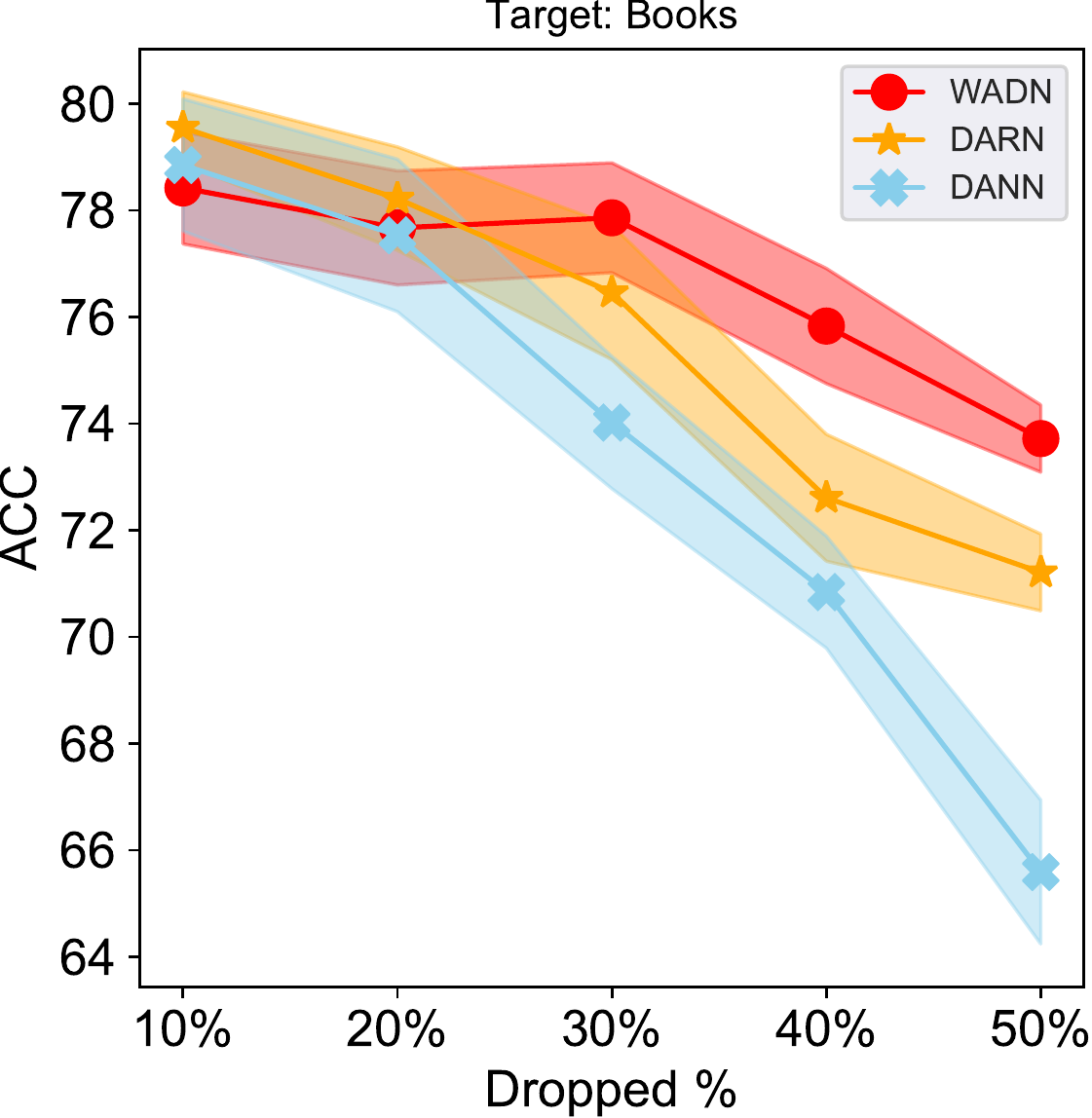}
     \caption{Target: Book}
  \end{subfigure}
  \hfill
  \begin{subfigure}{0.2\textwidth}
  \centering
     \includegraphics[scale=0.35]{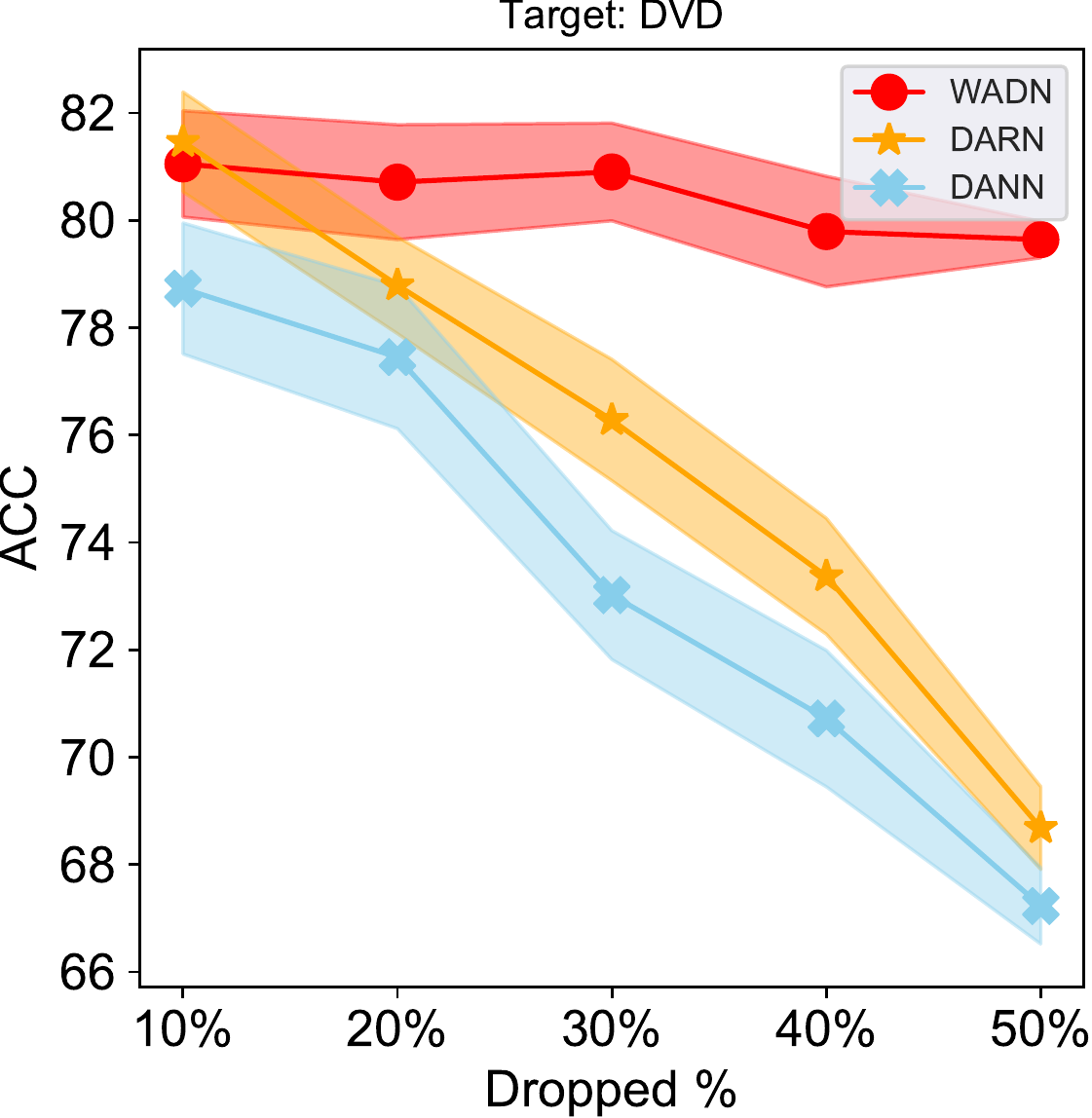}
     \caption{Target: DVD}
  \end{subfigure}
  \hfill
  \begin{subfigure}{0.2\textwidth}
     \centering
     \includegraphics[scale=0.35]{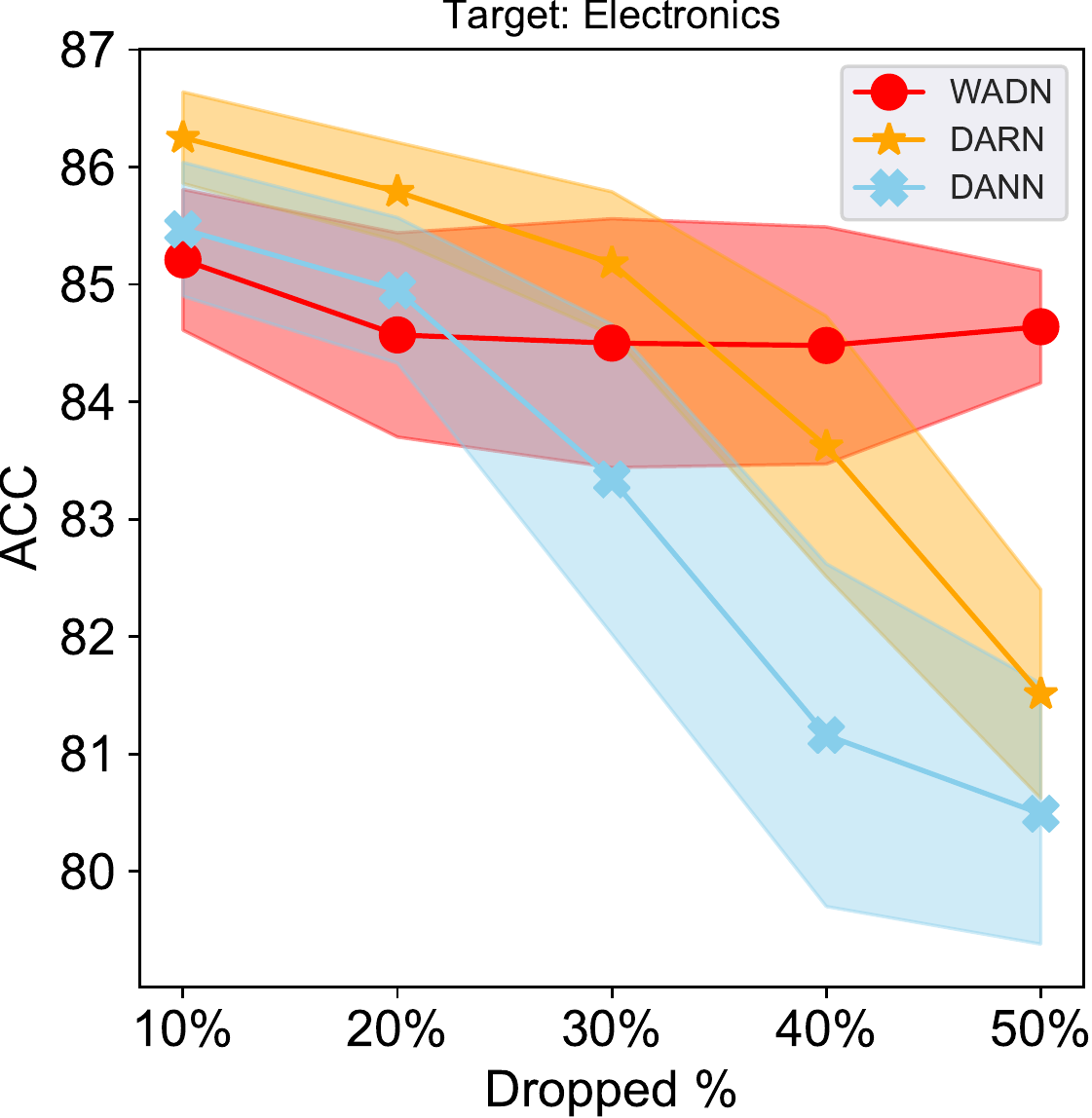}
     \caption{Target: Electronics}
  \end{subfigure}
  \hfill
  \begin{subfigure}{0.2\textwidth}
  \centering
     \includegraphics[scale=0.35]{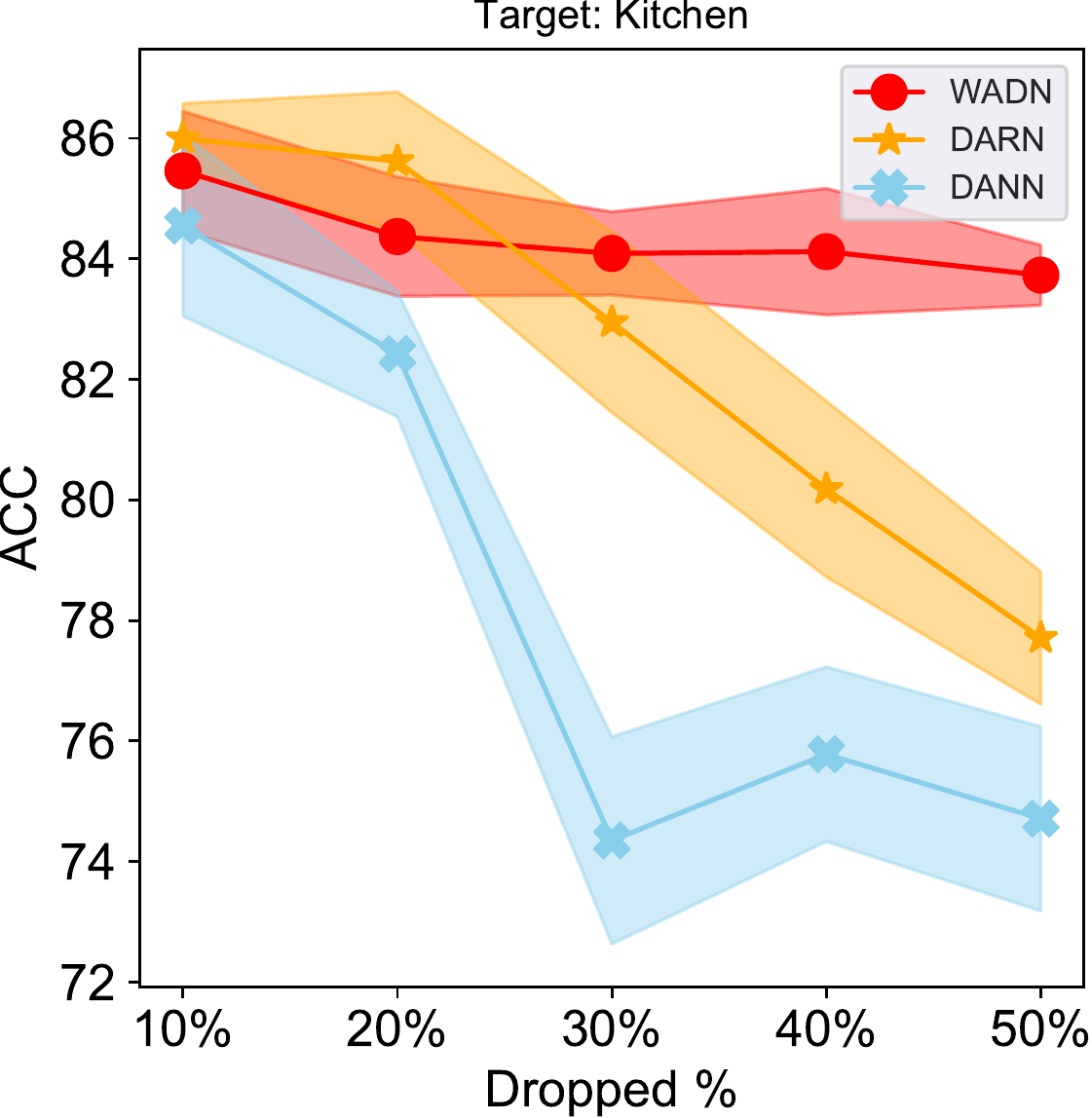}
     \caption{Target: Kitchen}
  \end{subfigure}
  \caption{Different label drift levels on Amazon Dataset. Larger dropping rate means higher label shift.}
  \label{fig:appendix_amazon_drop_task}
\end{figure}

\subsection{Additional Analysis on Amazon Dataset}
We present two additional results to illustrate the working principles of WADN, showing in Fig. (\ref{fig:appendix_amazon_alpha}). 
% \begin{figure}
% \hspace*{0cm}
%     \centering
% \begin{subfigure}{0.4\textwidth}
%      \centering
%      \includegraphics[scale=0.4]{figure/amazon_darn_lambda_uda.pdf}
%      \caption{DARN}
%   \end{subfigure}
%   \hfill
%   \begin{subfigure}{0.4\textwidth}
%   \centering
%      \includegraphics[scale=0.4]{figure/amazon_warn_lambda_uda.pdf}
%      \caption{WADN}
%   \end{subfigure}
%   \caption{Source Shifted Amazon Dataset. Evolution of $\blambda$ during the training. B=Books, D=DVD, E=Electronics, K=Kitchen.}
%   \label{fig:appendix_amazon_lambda}
% \end{figure}

\begin{figure}
    \centering
\begin{subfigure}{0.4\textwidth}
     \centering
     \includegraphics[scale=0.4]{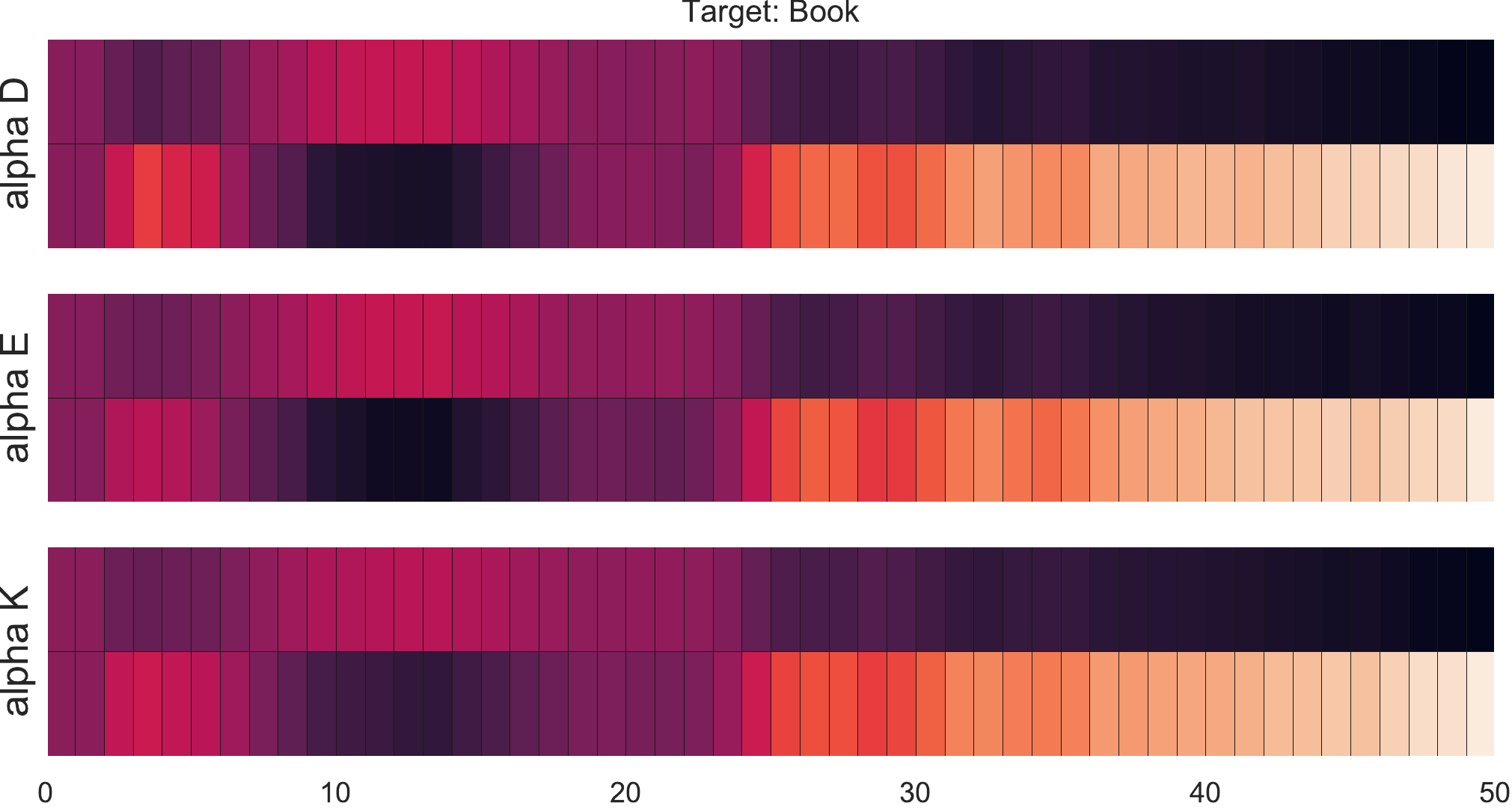}
     \caption{Target: Book}
  \end{subfigure}
  \hfill
  \begin{subfigure}{0.4\textwidth}
  \centering
     \includegraphics[scale=0.4]{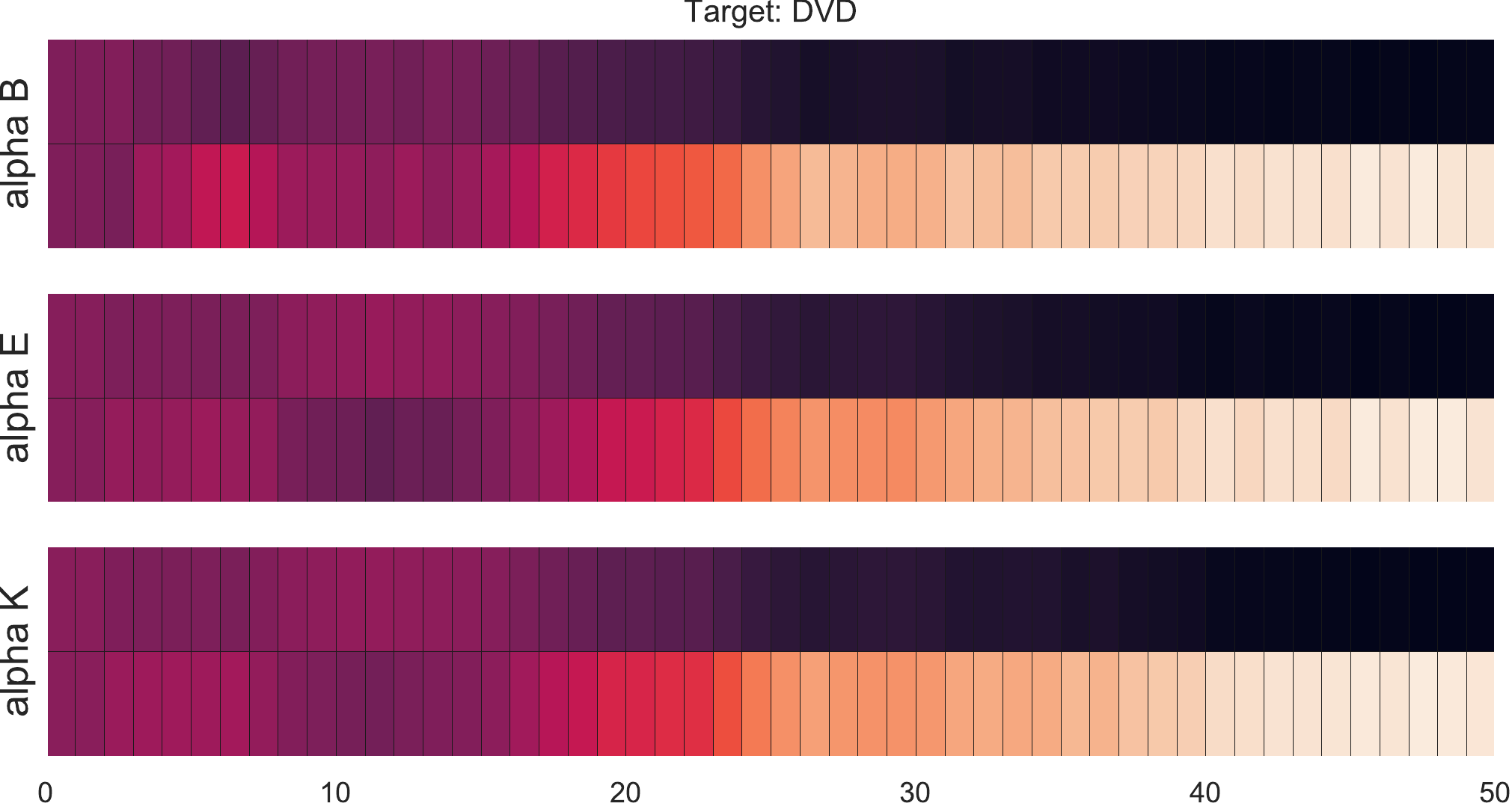}
     \caption{Target: DVD}
  \end{subfigure}
  \begin{subfigure}{0.4\textwidth}
     \centering
     \includegraphics[scale=0.4]{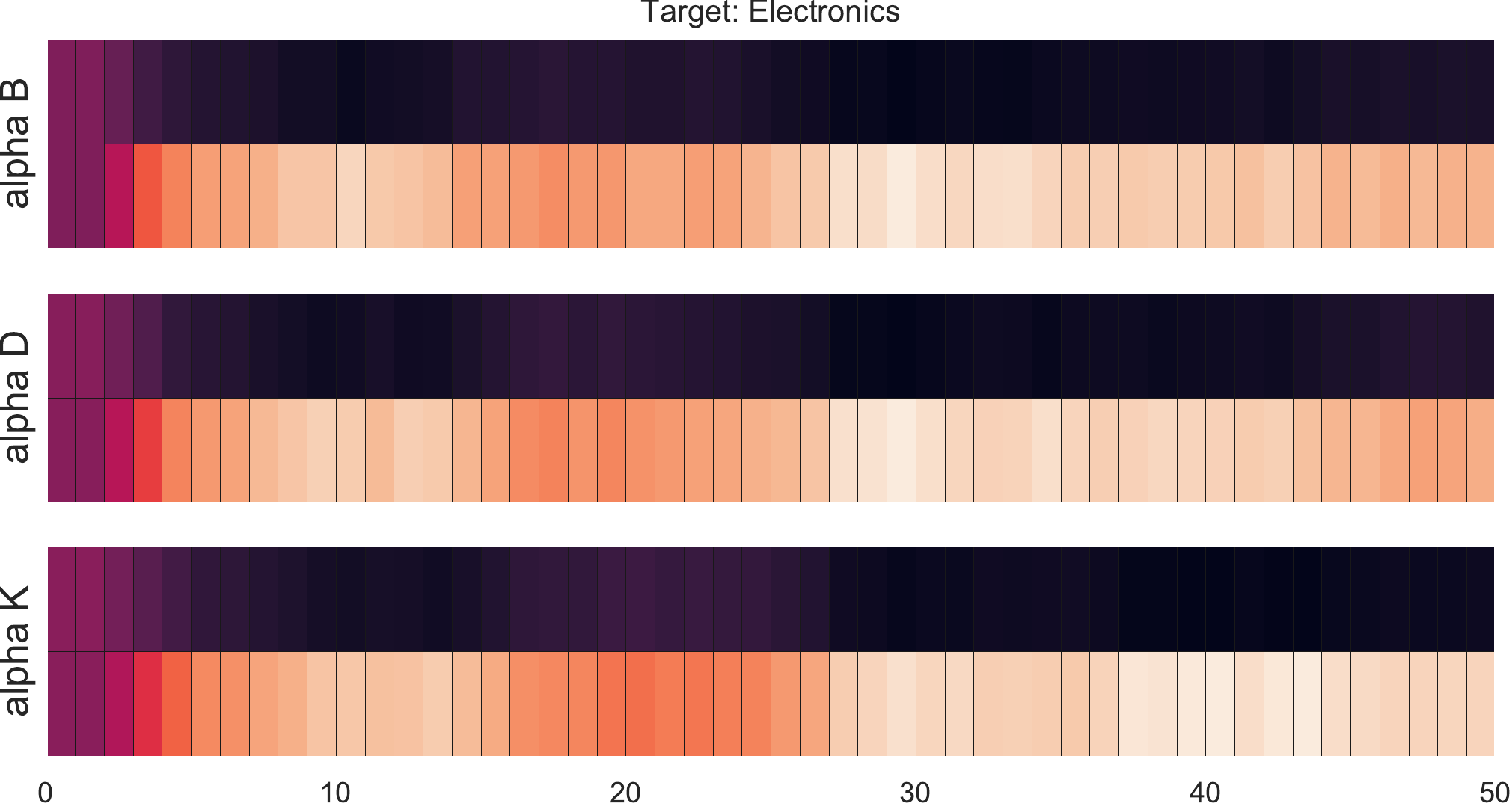}
     \caption{Target: Electronics}
  \end{subfigure}
  \hfill
  \begin{subfigure}{0.4\textwidth}
  \centering
     \includegraphics[scale=0.4]{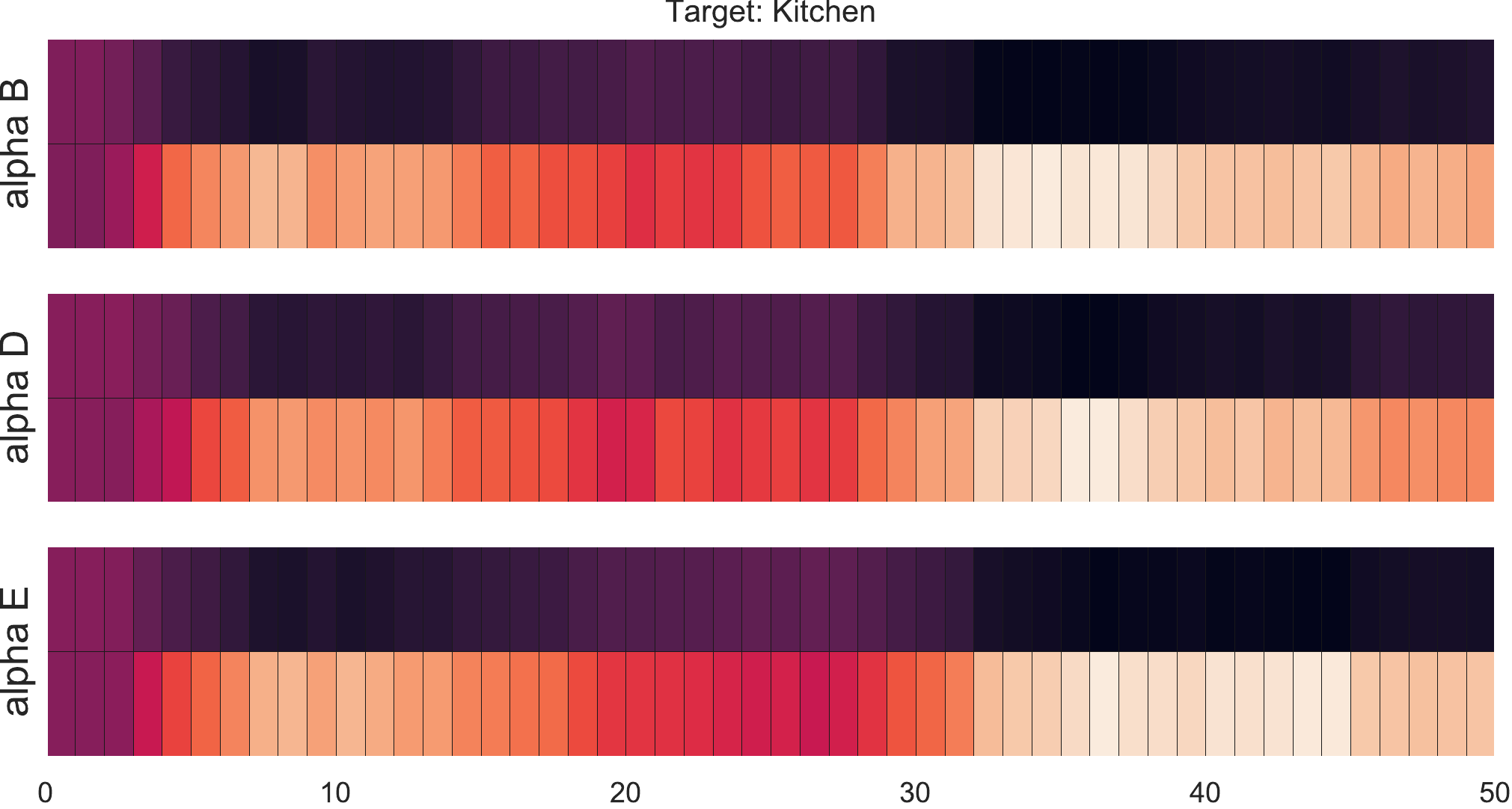}
     \caption{Target: Kitchen}
  \end{subfigure}
  \caption{Amazon Dataset. WADN approach: evolution of $\hat{\alpha}_t$ during the training. Darker indicates higher Value. Since we drop $y=0$ in the sources, then the true $\alpha_t(0) > 1$ will be assigned with higher value.}
  \label{fig:appendix_amazon_alpha}
\end{figure}

We visualize the evolution of $\blambda$ between DARN and WADN, which both used theoretical principled approach to estimate $\blambda$. We observe that in the source shifted data, DARN shows an inconsistent estimator of $\blambda$. This is different from the observation of \cite{wen2019domain}. We think it may in the conditional and label distribution shift problem, using $\hat{R}_{\calS}(h(z)) + \text{Discrepancy}(\calS(z),\calT(z))$ to update $\blambda$ is unstable. In contrast, WADN illustrates a relative consistent estimator of $\blambda$ under the source shifted data. 

In addition, WARN gradually and correctly estimates the unbalanced source data and assign higher wights  $\alpha_t$ for label $y=0$ (first row of Fig.(\ref{fig:appendix_amazon_alpha})). These principles in WADN jointly promote significantly better results.  

\subsection{Additional Analysis on Digits Dataset}
We show the evolution of $\hat{\alpha}_t$ on WADN, which verifies the correctness of our proposed principle. Since we drop digits 5-9 in the source domains, the results in Fig. (\ref{fig:appendix_digits_alpha}) illustrate a higher $\hat{\alpha}_t$ on these digits. 
\begin{figure}[h]
    \centering
\begin{subfigure}{0.4\textwidth}
     \centering
     \includegraphics[scale=0.4]{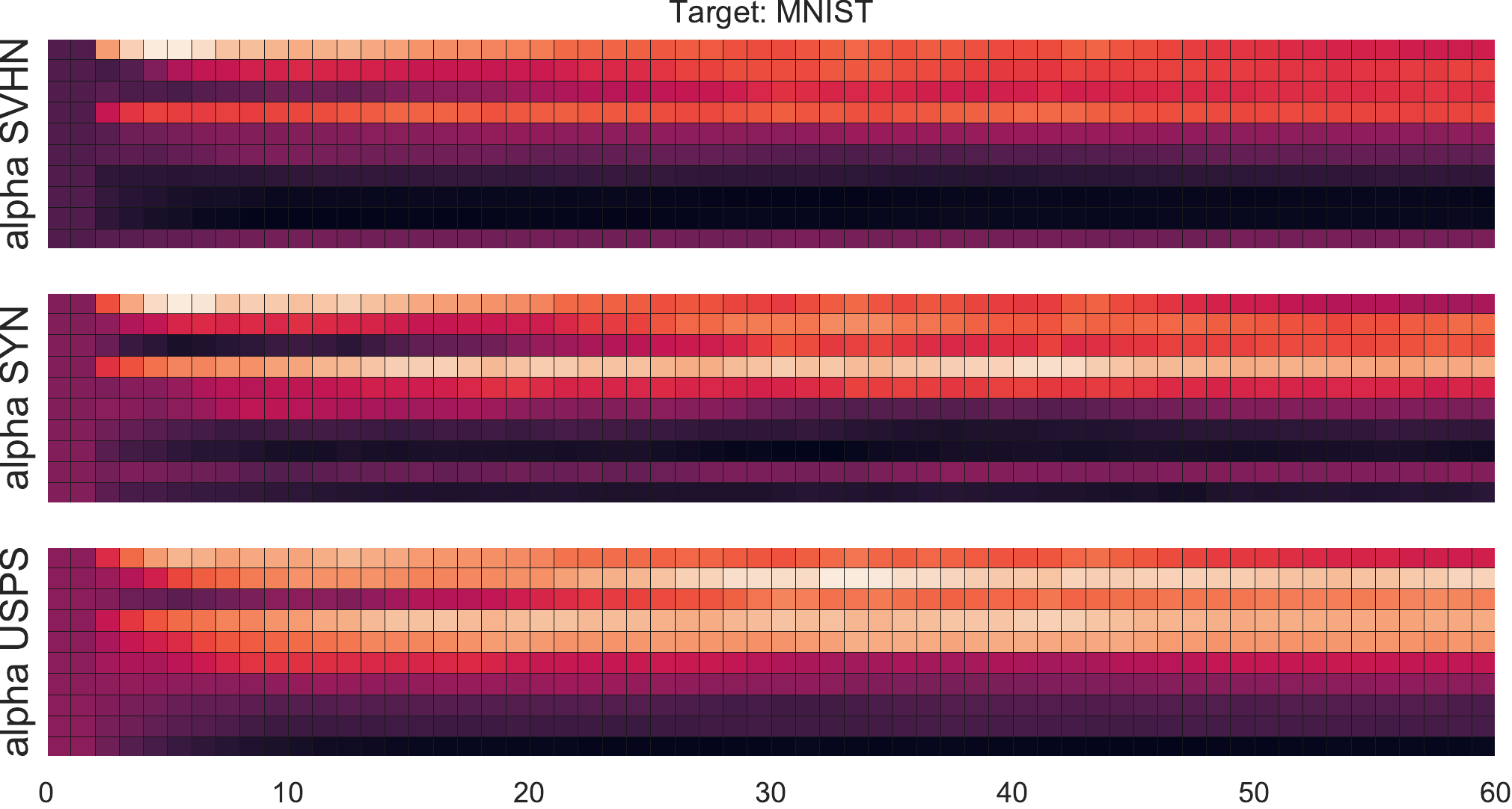}
     \caption{Target: MNIST}
  \end{subfigure}
  \hfill
  \begin{subfigure}{0.4\textwidth}
  \centering
     \includegraphics[scale=0.4]{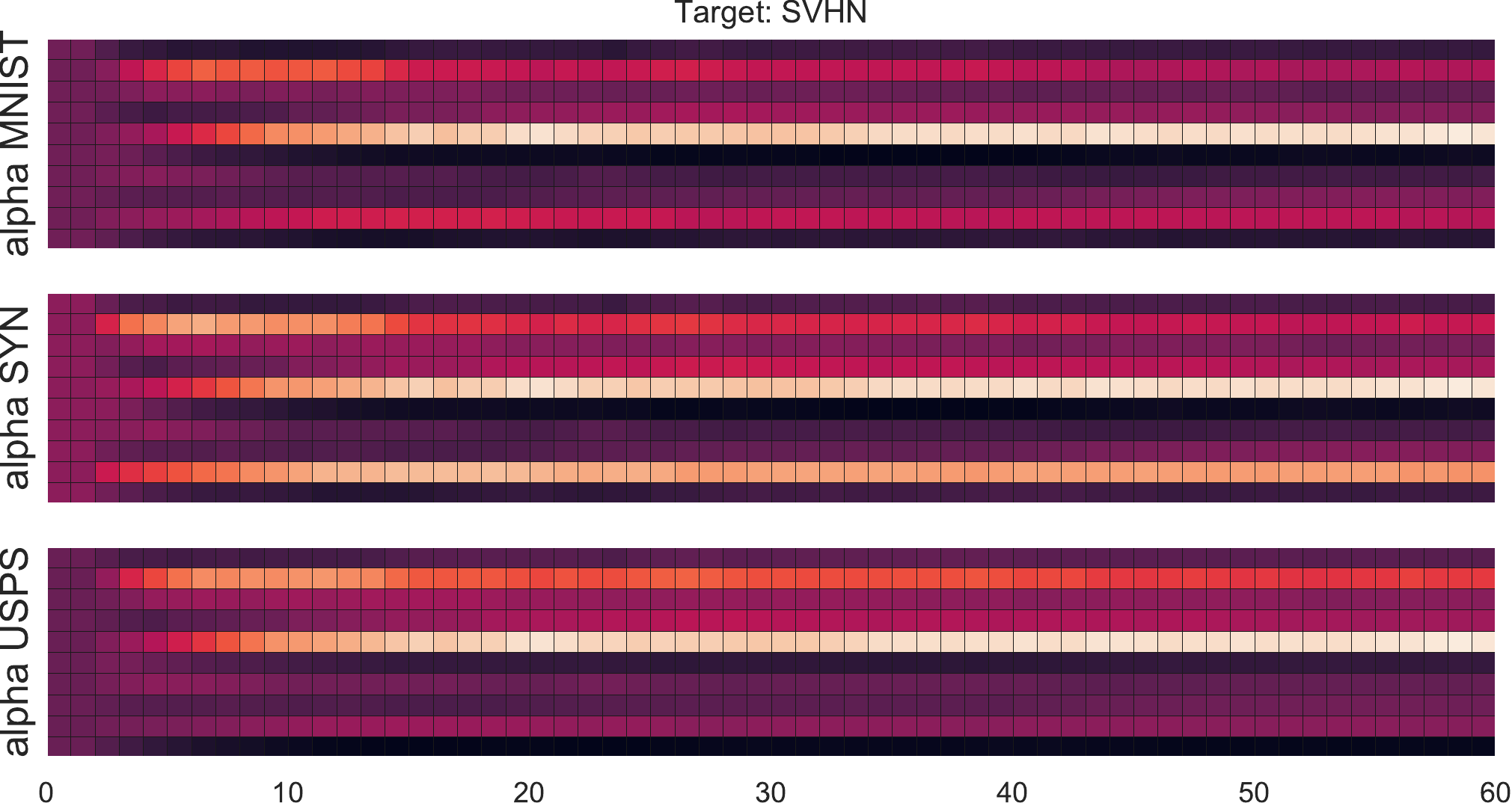}
     \caption{Target: SVHN}
  \end{subfigure}
  \begin{subfigure}{0.4\textwidth}
     \centering
     \includegraphics[scale=0.4]{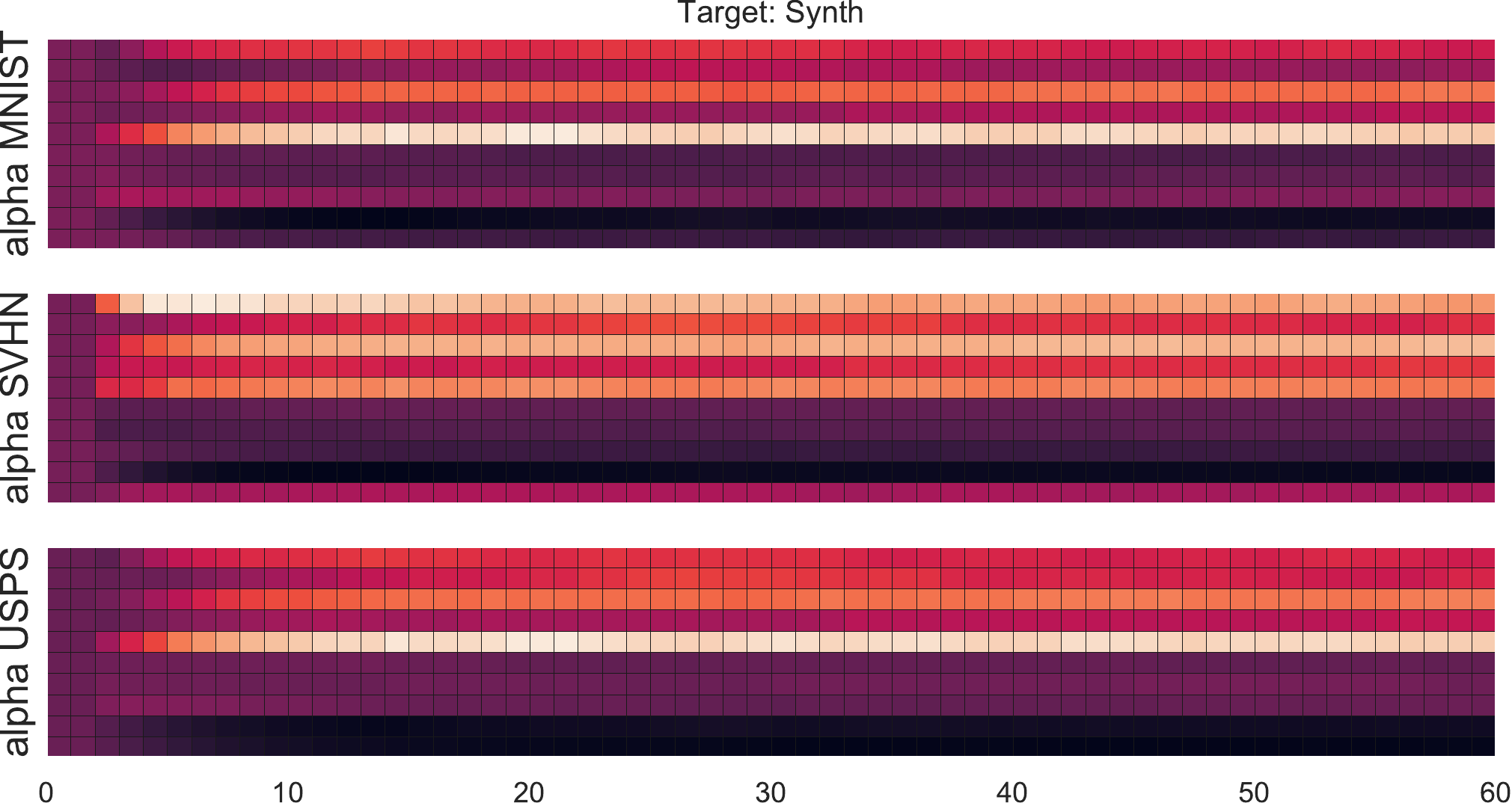}
     \caption{Target: Synth}
  \end{subfigure}
  \hfill
  \begin{subfigure}{0.4\textwidth}
  \centering
     \includegraphics[scale=0.4]{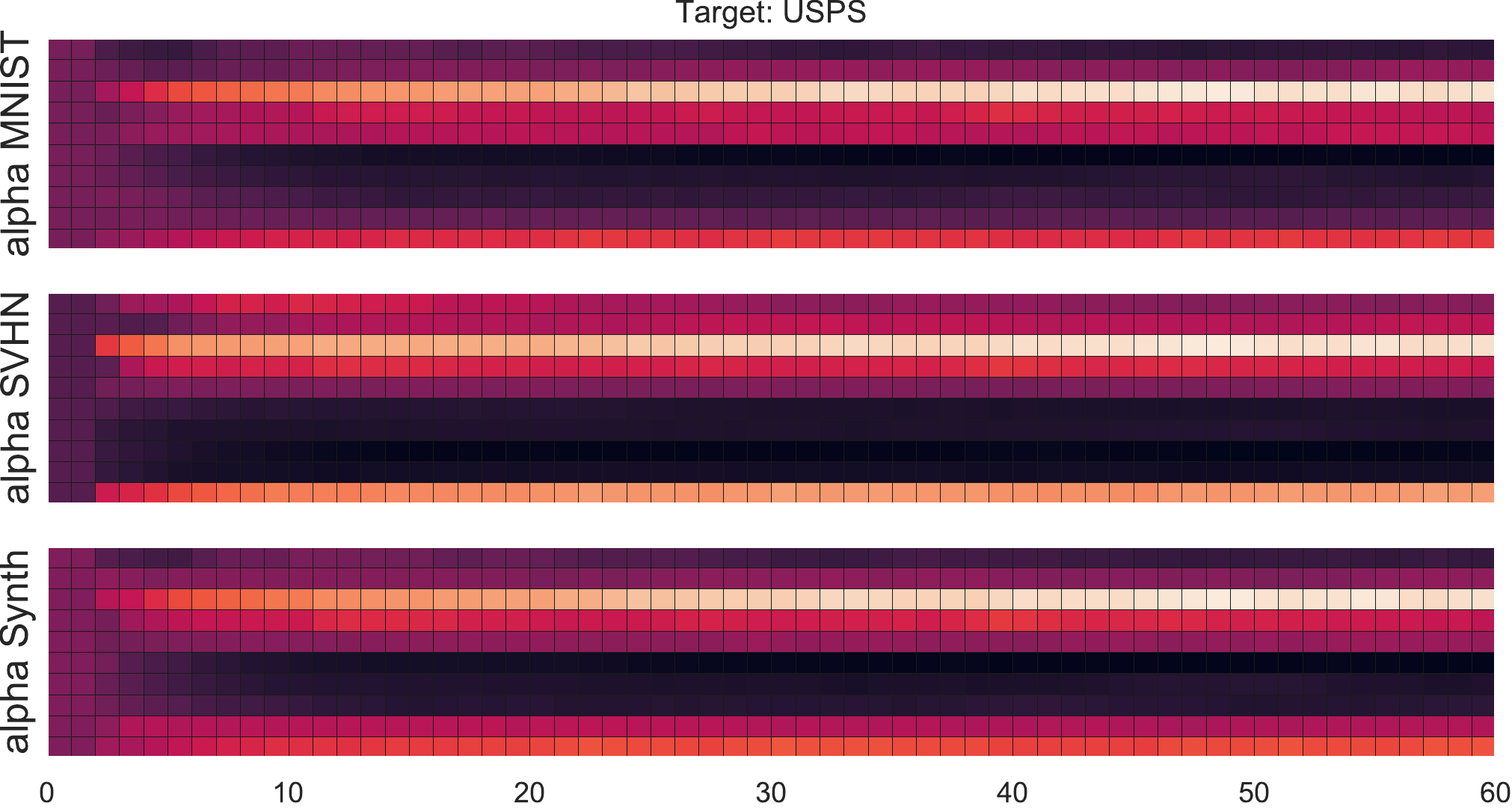}
     \caption{Target: USPS}
  \end{subfigure}
  \caption{Digits Dataset. WADN approach: evolution of $\hat{\alpha}_t$ during the training. Darker indicates higher value. Since we drop digits $5-9$ on source domain, therefore, $\alpha_t(y)$, $y\in[5,9]$ will be assigned with a relative higher value.}
  \label{fig:appendix_digits_alpha}
\end{figure}

\begin{figure}[!t]
  \centering
  \begin{subfigure}{0.3\textwidth}
  \centering
     \includegraphics[scale=0.3]{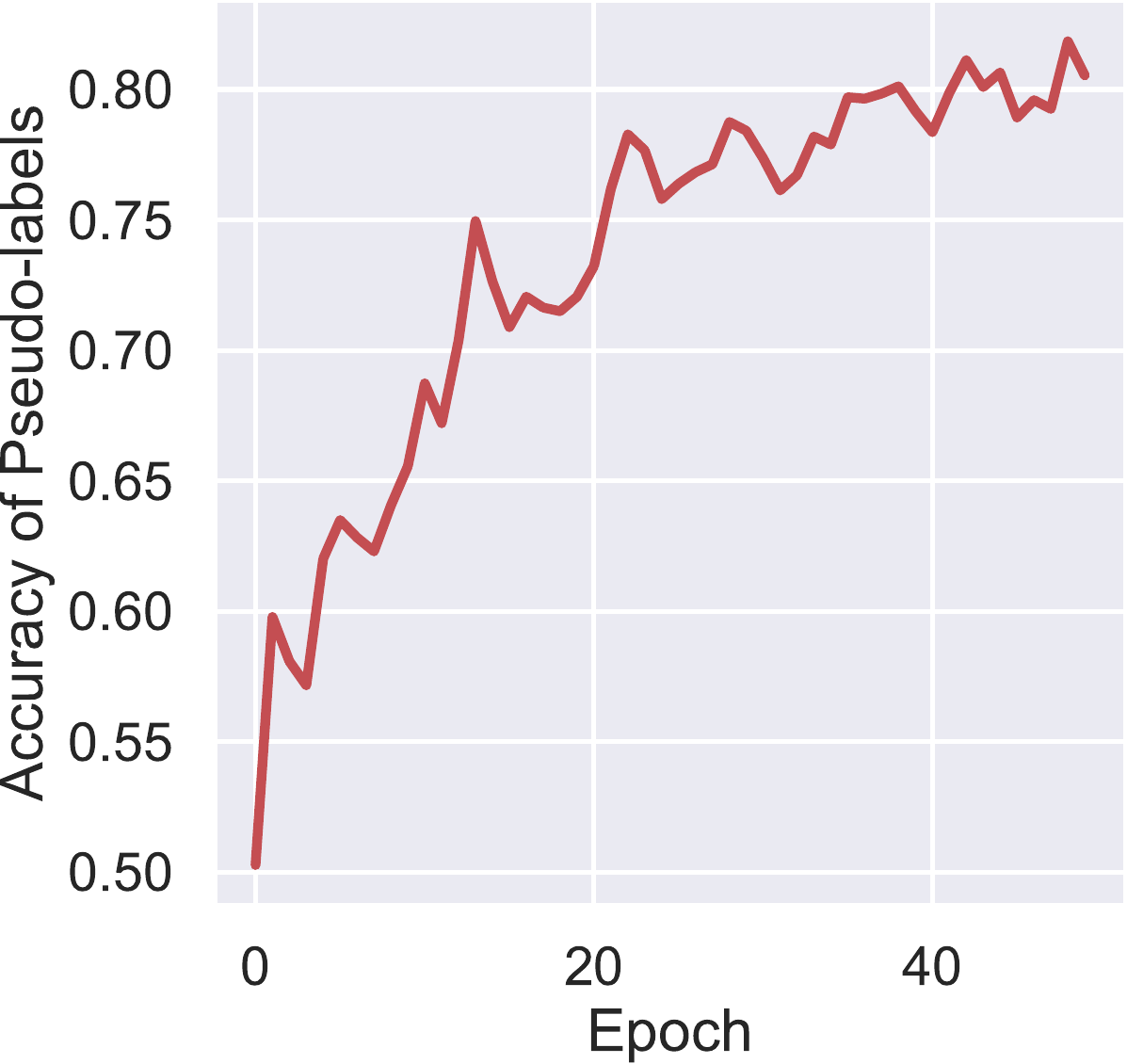}
     \caption{Amazon Review: target DVDs}
  \end{subfigure}
  \begin{subfigure}{0.3\textwidth}
  \centering
     \includegraphics[scale=0.3]{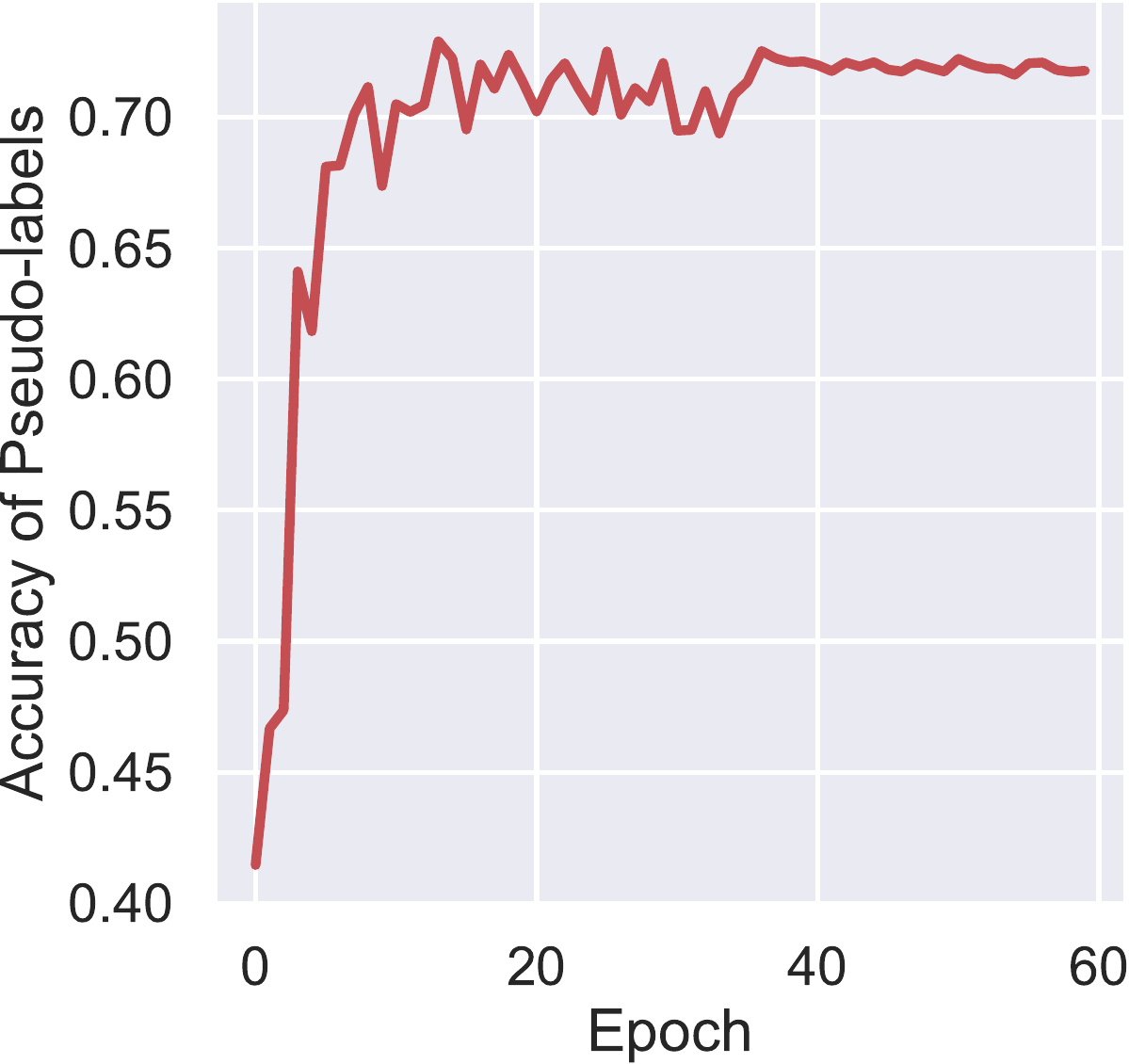}
     \caption{Digits: target SVHN}
  \end{subfigure}
  \begin{subfigure}{0.3\textwidth}
  \centering
     \includegraphics[scale=0.3]{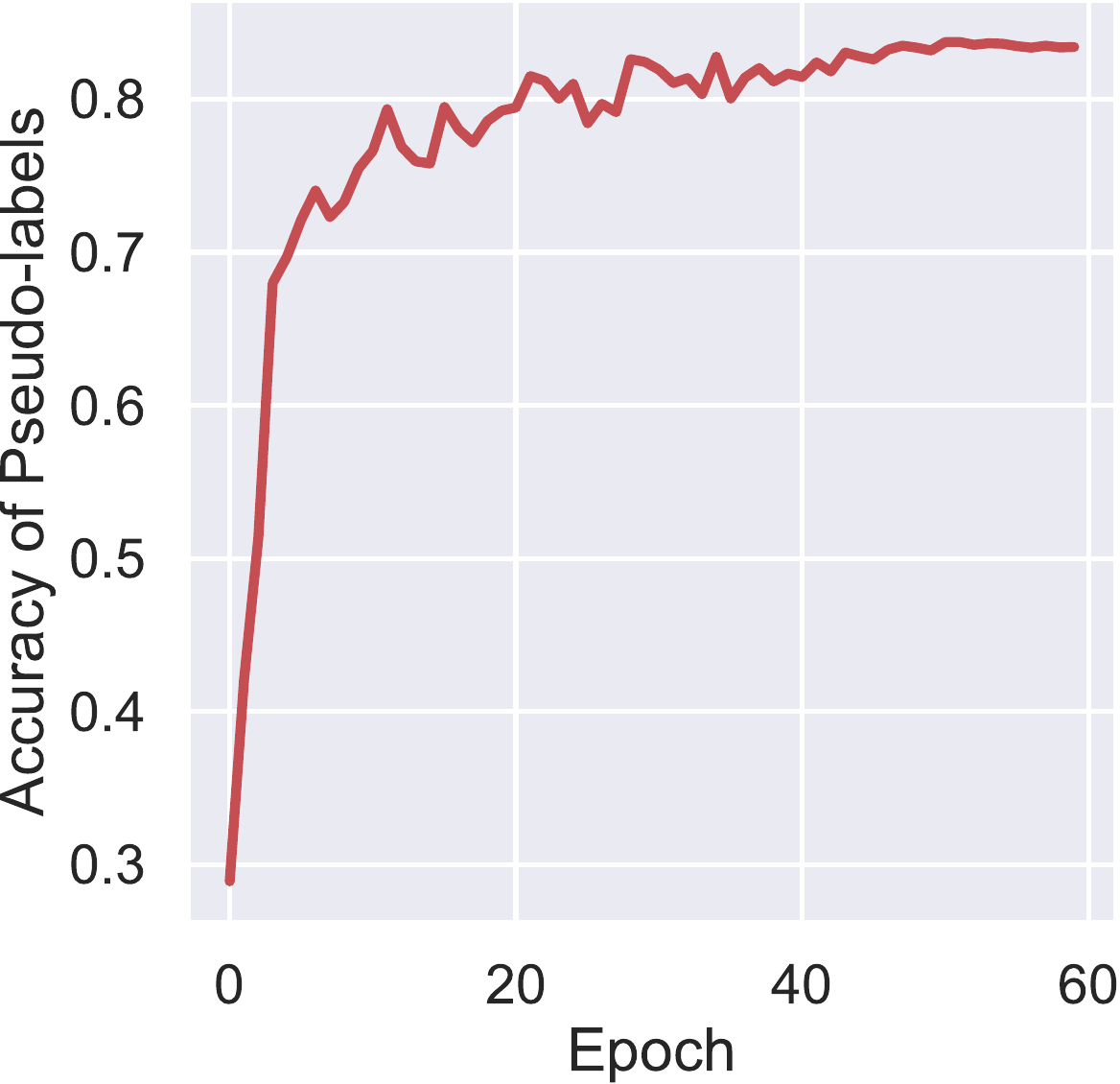}
     \caption{Digits: target Synth}
  \end{subfigure}
  \caption{Evolution of accuracy w.r.t. the predicted target pseudo-labels in different tasks in unsupervised DA.}
  \label{fig:pseudo_labels}
\end{figure}

\section{Partial multi-source Unsupervised DA}\label{sec:appendx_label_partial}
From Fig.~(\ref{fig:appendix_office_home_abl1}), WADN is consistently better than other baselines, given different selected classes.

Besides, when fewer classes are selected, the accuracy in DANN, PADA, and DARN is not drastically dropping but maintaining a relatively stable result. We think the following possible reasons:
\begin{itemize}
   \item The reported performances are based on the \textbf{average of different selected sub-classes rather than one sub-class selection.} From the statistical perspective, if we take a close look at the \textbf{variance}, the results in DANN are \emph{much more unstable} (higher std) inducing by the different samplings. Therefore, the conventional domain adversarial training is improper for handling the partial transfer since it is not reliable and negative transfer still occurs.
    \item In multi-source DA, it is equally important to detect the non-overlapping classes and find the most similar sources. Comparing the baselines that only focus on one or two principles shows the importance of unified principles in multi-source partial DA.
    \item We also observe that in the Real-World dataset, the DANN improves the performance by a relatively large value. This is due to the inherent difficultly of the learning task itself. In fact, the Real-World domain illustrates a much higher performance compared with other domains. According to the Fano lower bound, \emph{a task with smaller classes is generally easy to learn}. It is possible the vanilla approach showed improvement but still with a much higher variance. 
\end{itemize}

\begin{figure}[!t]
  \centering
  \begin{subfigure}{0.32\textwidth}
  \centering
     \includegraphics[scale=0.32]{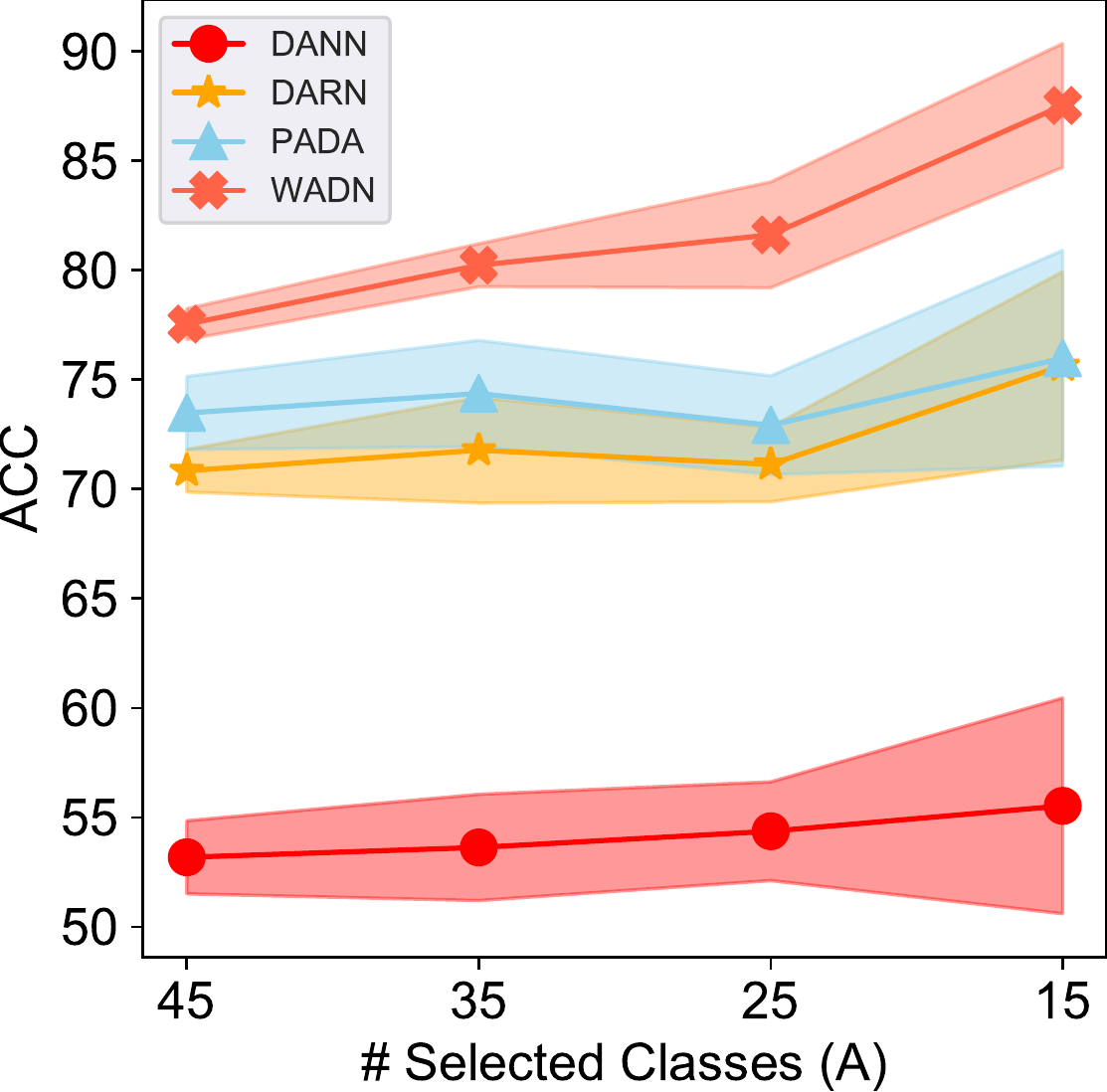}
     \caption{Target: Art}
  \end{subfigure}
  \begin{subfigure}{0.32\textwidth}
  \centering
     \includegraphics[scale=0.32]{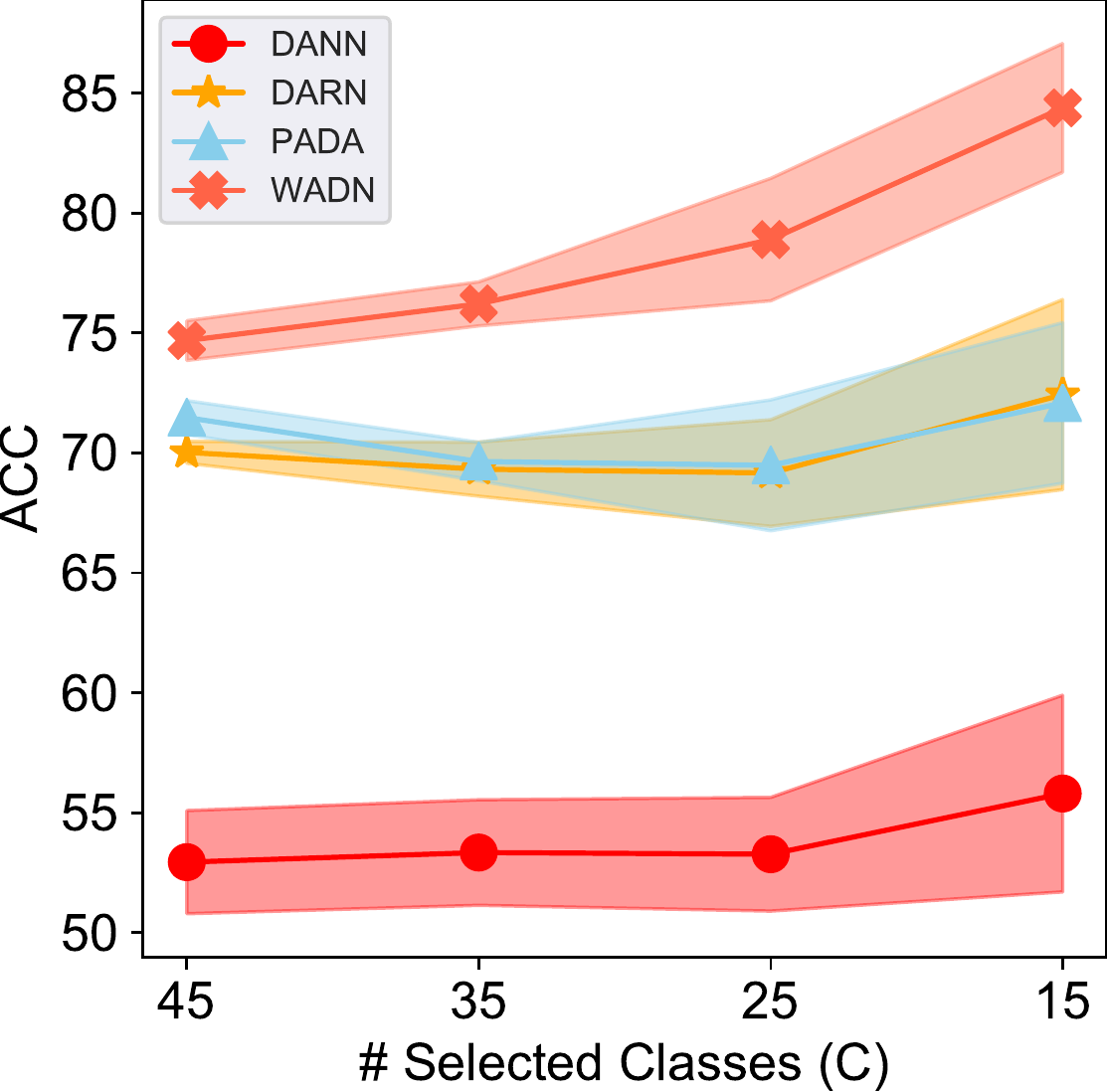}
     \caption{Target: Clipart}
  \end{subfigure}
  \begin{subfigure}{0.32\textwidth}
  \centering
     \includegraphics[scale=0.32]{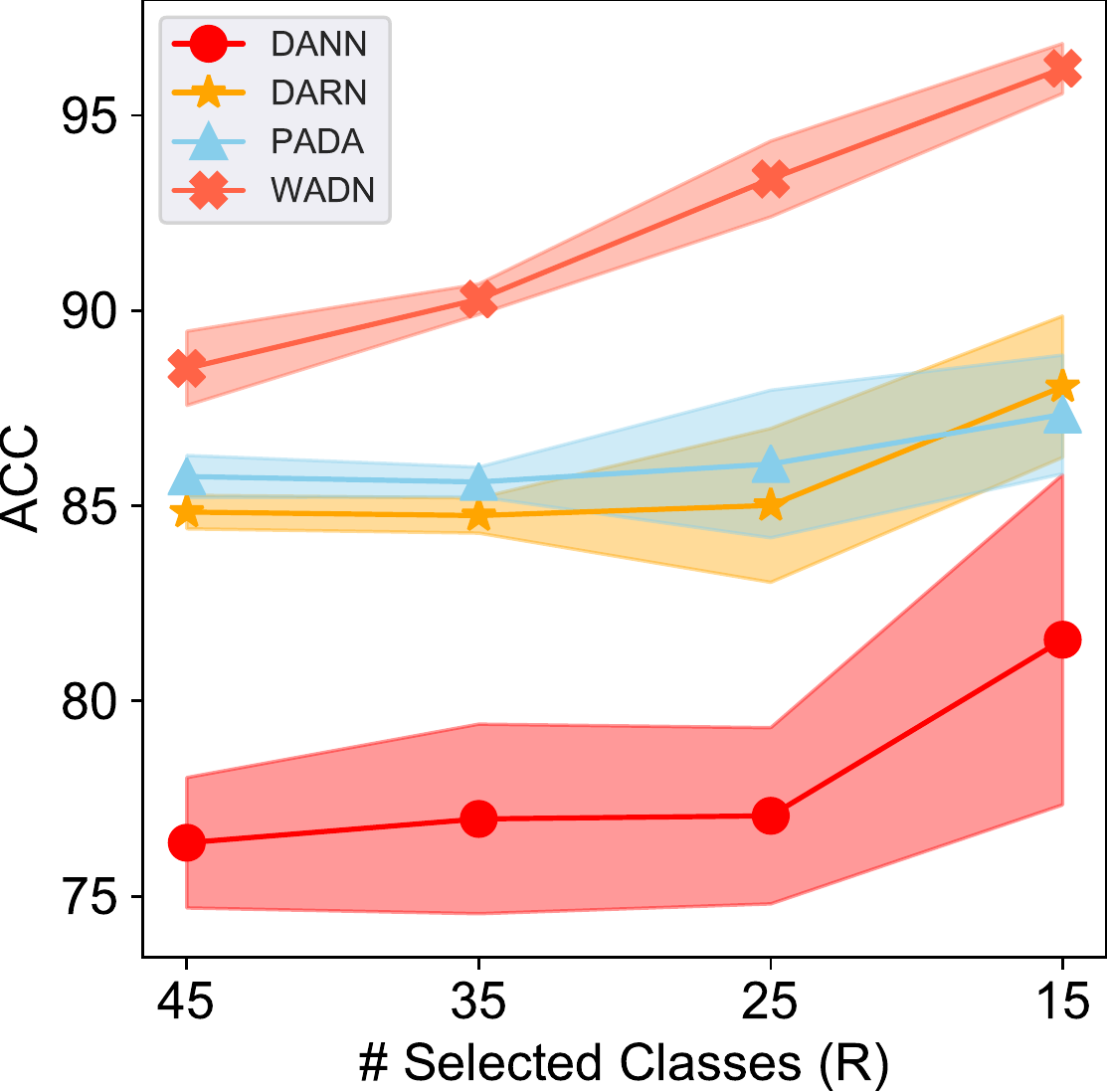}
     \caption{Target: Real-World}
  \end{subfigure}
  \caption{Multi-source Label Partial DA: Performance with different target selected classes.}
  \label{fig:appendix_office_home_abl1}
\end{figure}

Fig (\ref{fig:appendix_office_home_abl2}), (\ref{fig:appendix_office_home_abl3}) showed the estimated $\hat{\alpha}_t$ with different selected classes. The results validate the correctness of WADN in estimating the label distribution ratio.

\begin{figure}[t]
\centering
\begin{subfigure}{0.45\textwidth}
\centering
     \includegraphics[scale=0.45]{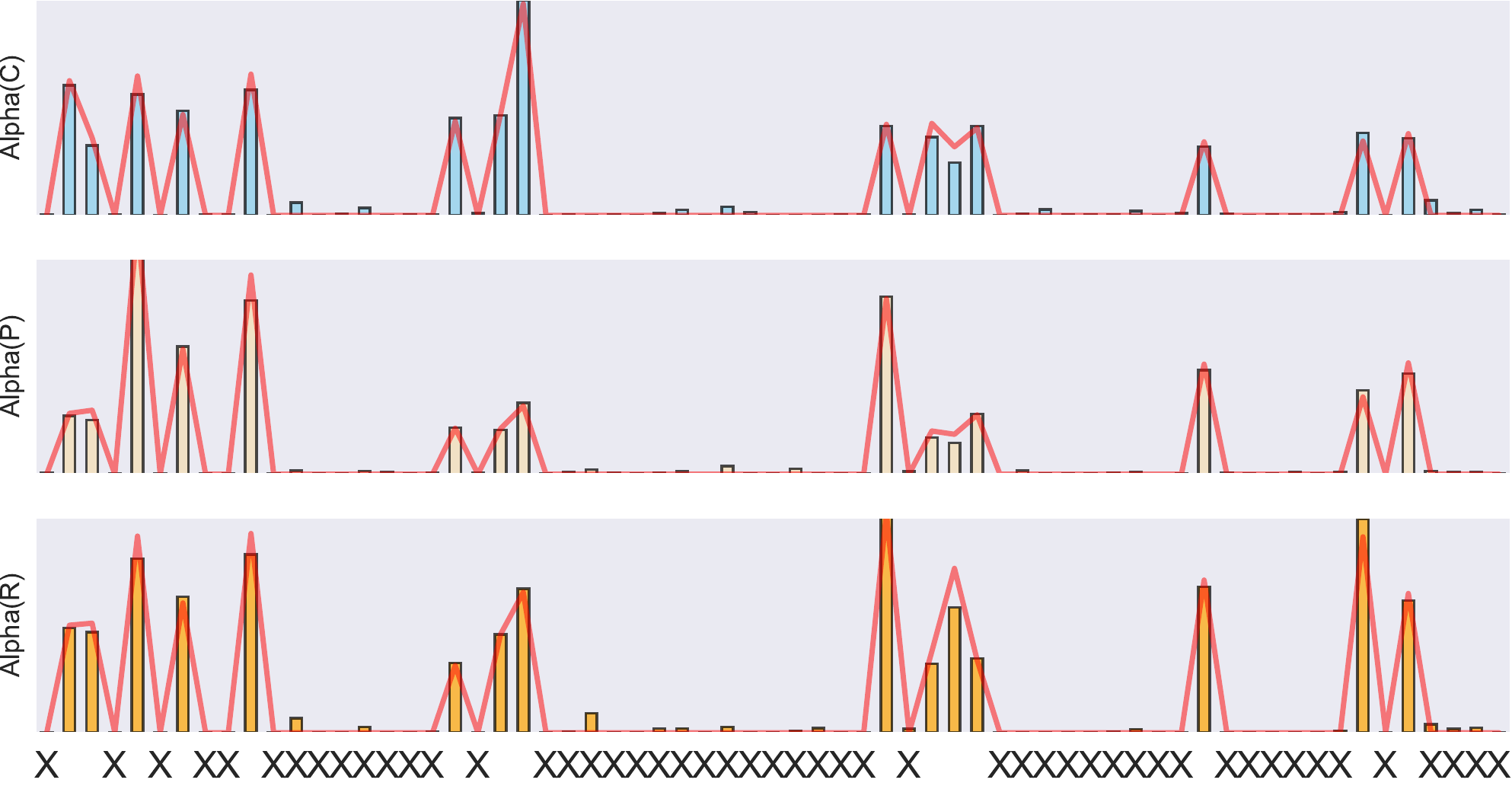}
     \caption{Target: Art}
  \end{subfigure}
 
  \begin{subfigure}{0.45\textwidth}
  \centering
     \includegraphics[scale=0.45]{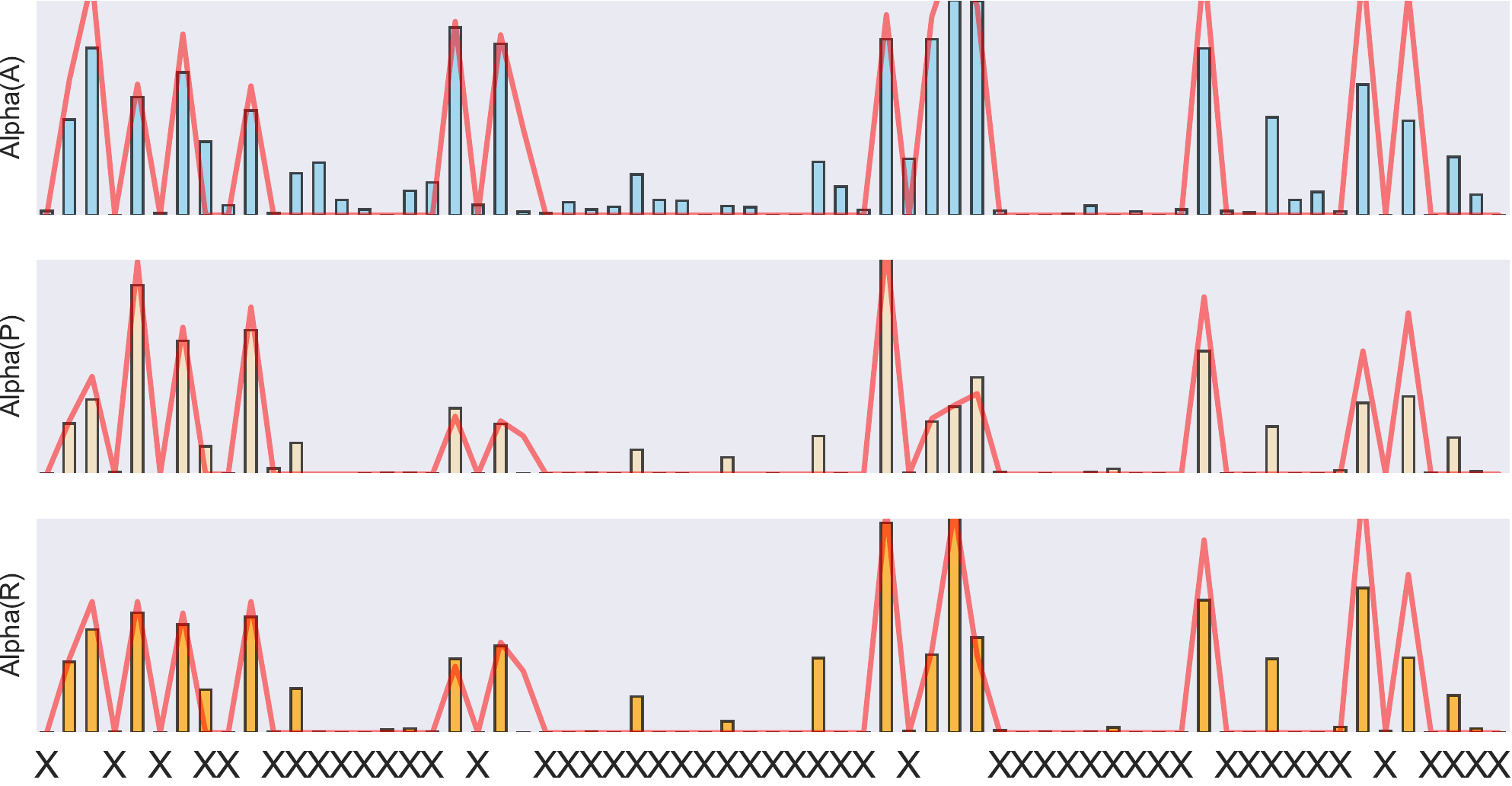}
     \caption{Target: Clipart}
  \end{subfigure}

  \begin{subfigure}{0.45\textwidth}
  \centering
     \includegraphics[scale=0.45]{figure/office_home_P50_11.pdf}
     \caption{Target: Product}
  \end{subfigure}
  
  \begin{subfigure}{0.45\textwidth}
  \centering
     \includegraphics[scale=0.45]{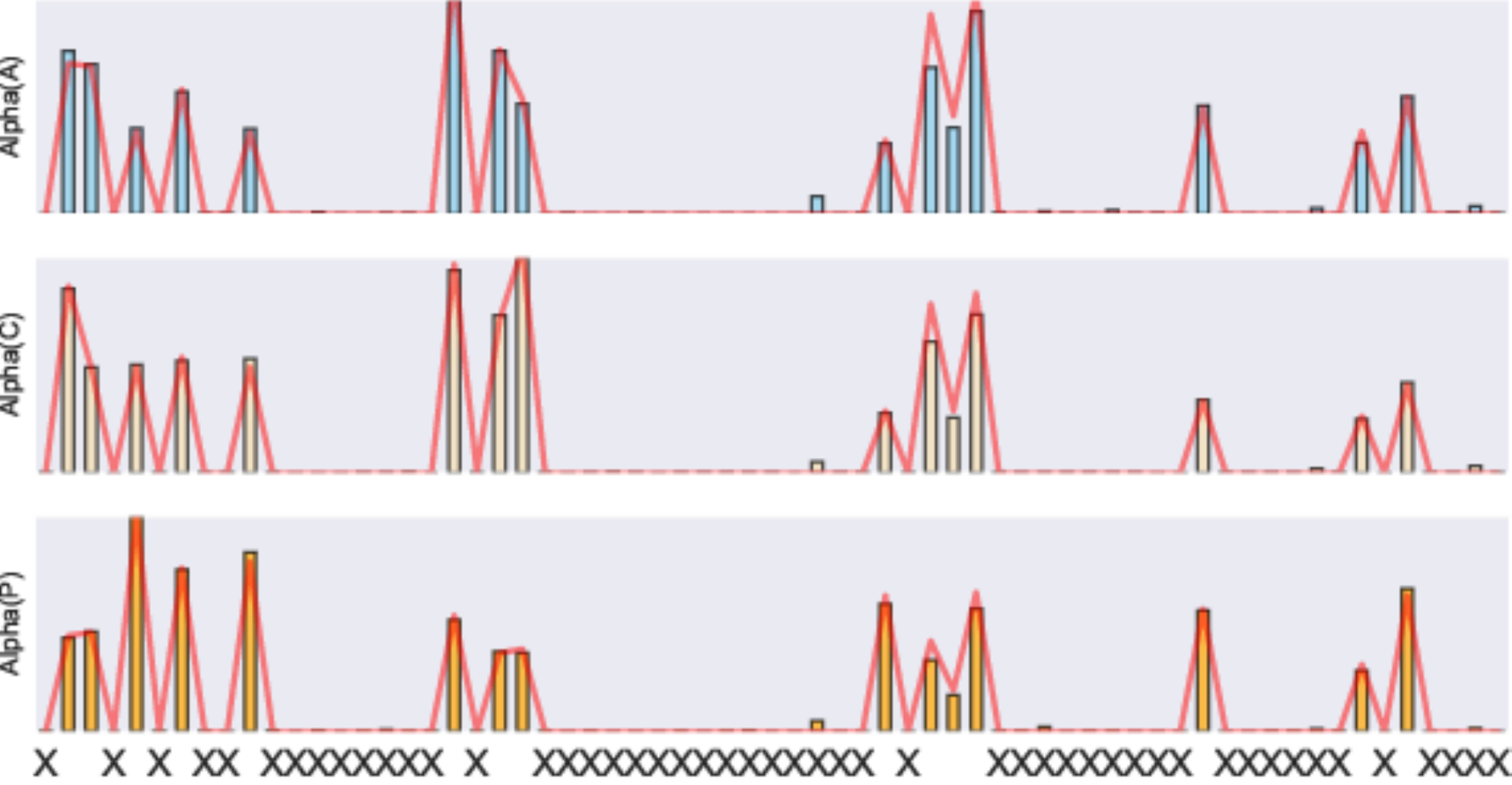}
     \caption{Target: Real-World}
  \end{subfigure}
  \caption{We select 15 classes and visualize estimated $\hat{\alpha}_t$ (the bar plot). The "X" along the x-axis represents the index of \textbf{dropped} 50 classes. The red curves are the ground-truth label distribution ratio.}
  \label{fig:appendix_office_home_abl2}
\end{figure}

\begin{figure}[t]
\centering
\begin{subfigure}{0.45\textwidth}
     \includegraphics[scale=0.45]{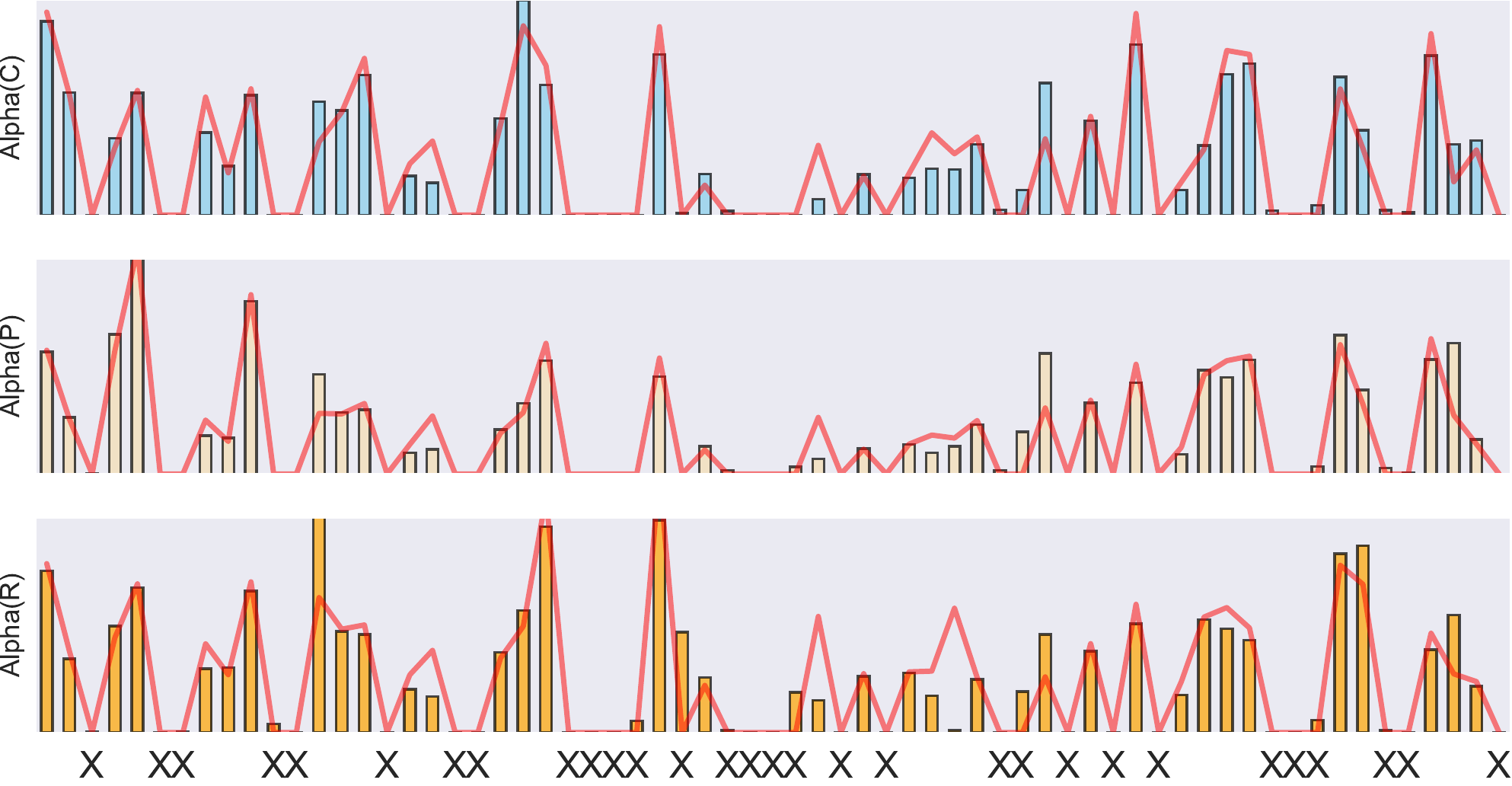}
     \caption{Target: Art}
  \end{subfigure}
  
  \begin{subfigure}{0.45\textwidth}
     \includegraphics[scale=0.45]{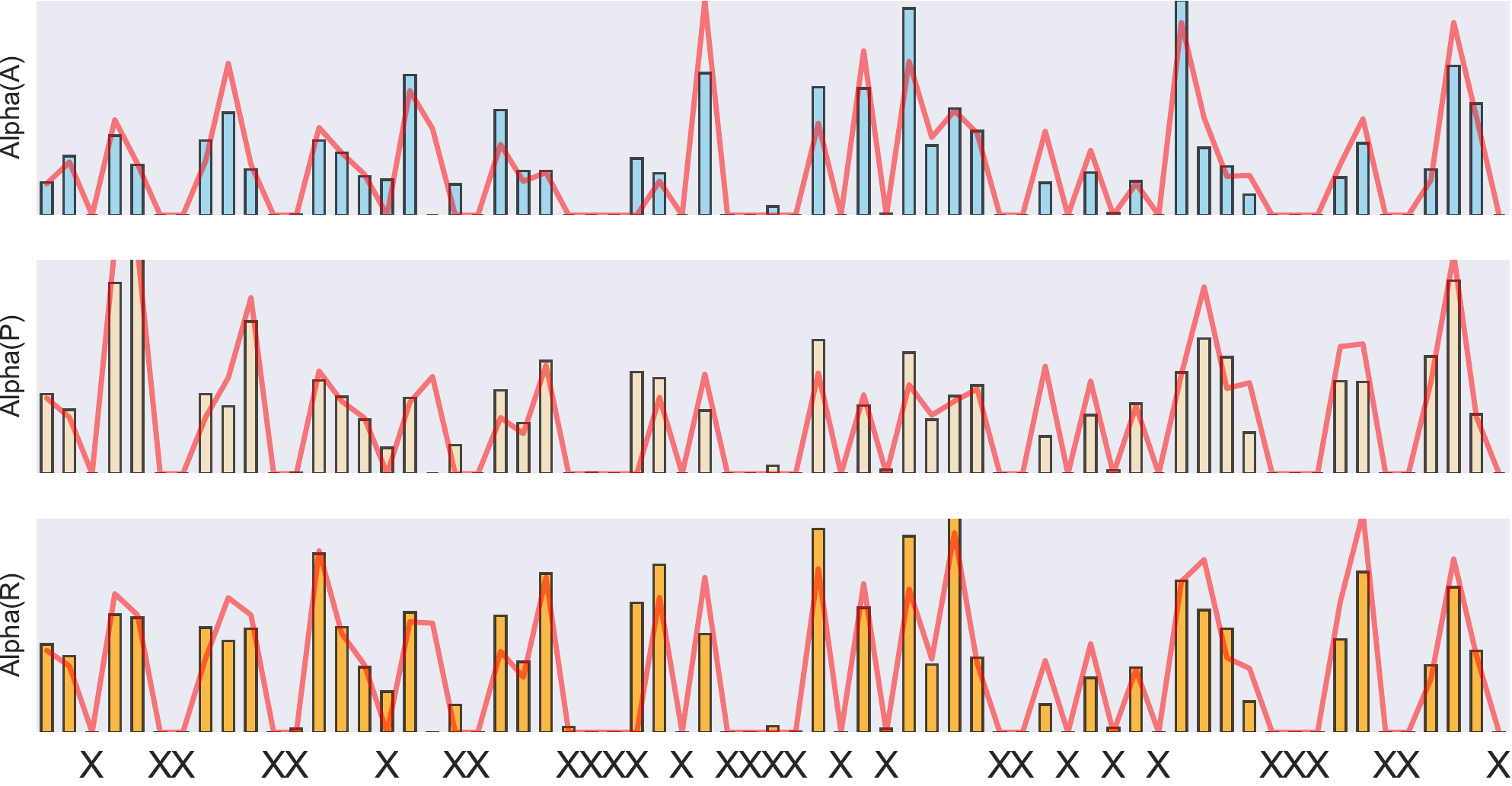}
     \caption{Target: Clipart}
  \end{subfigure}

  \begin{subfigure}{0.45\textwidth}
     \includegraphics[scale=0.45]{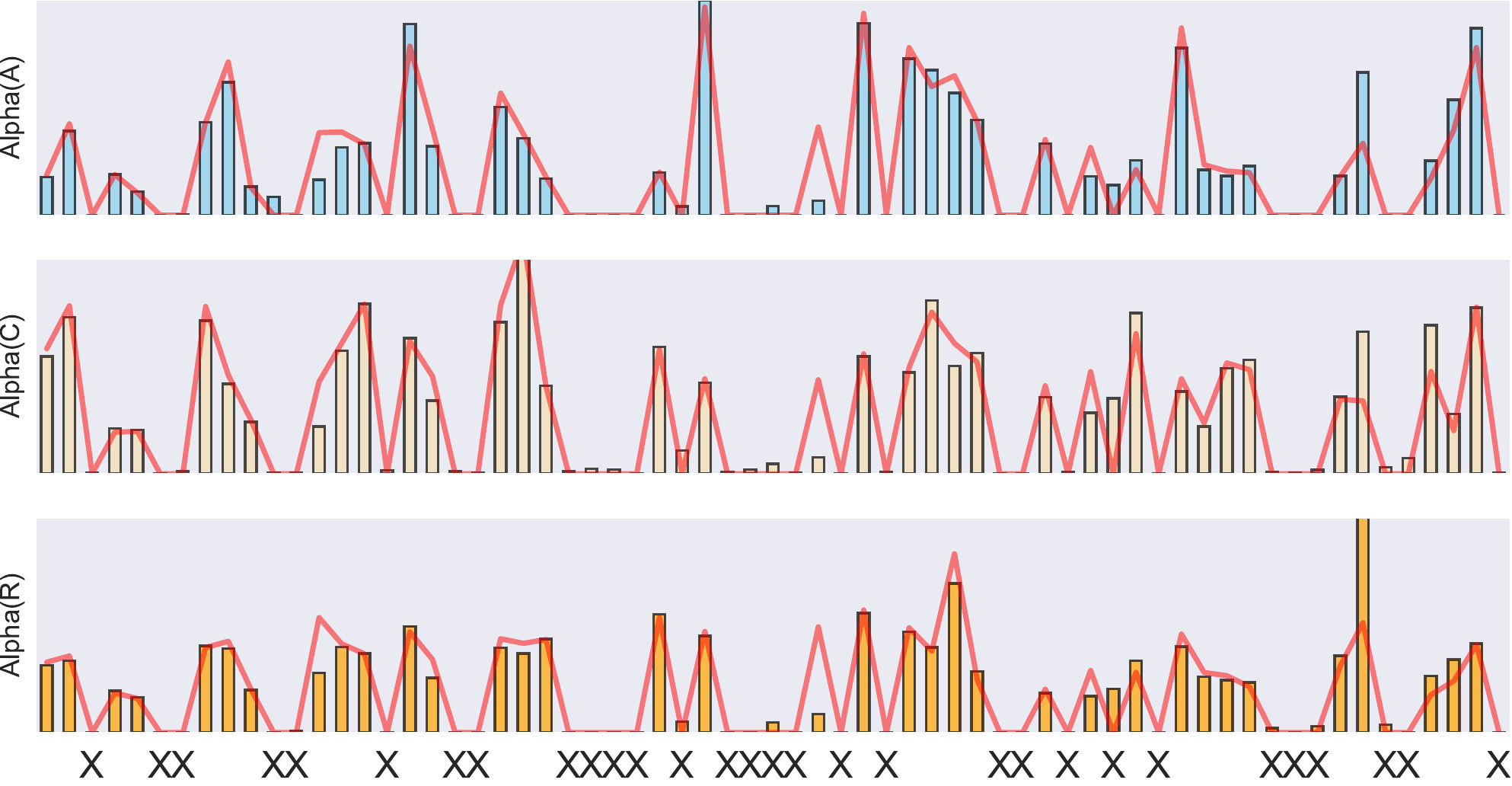}
     \caption{Target: Product}
  \end{subfigure}
  
  \begin{subfigure}{0.45\textwidth}
     \includegraphics[scale=0.45]{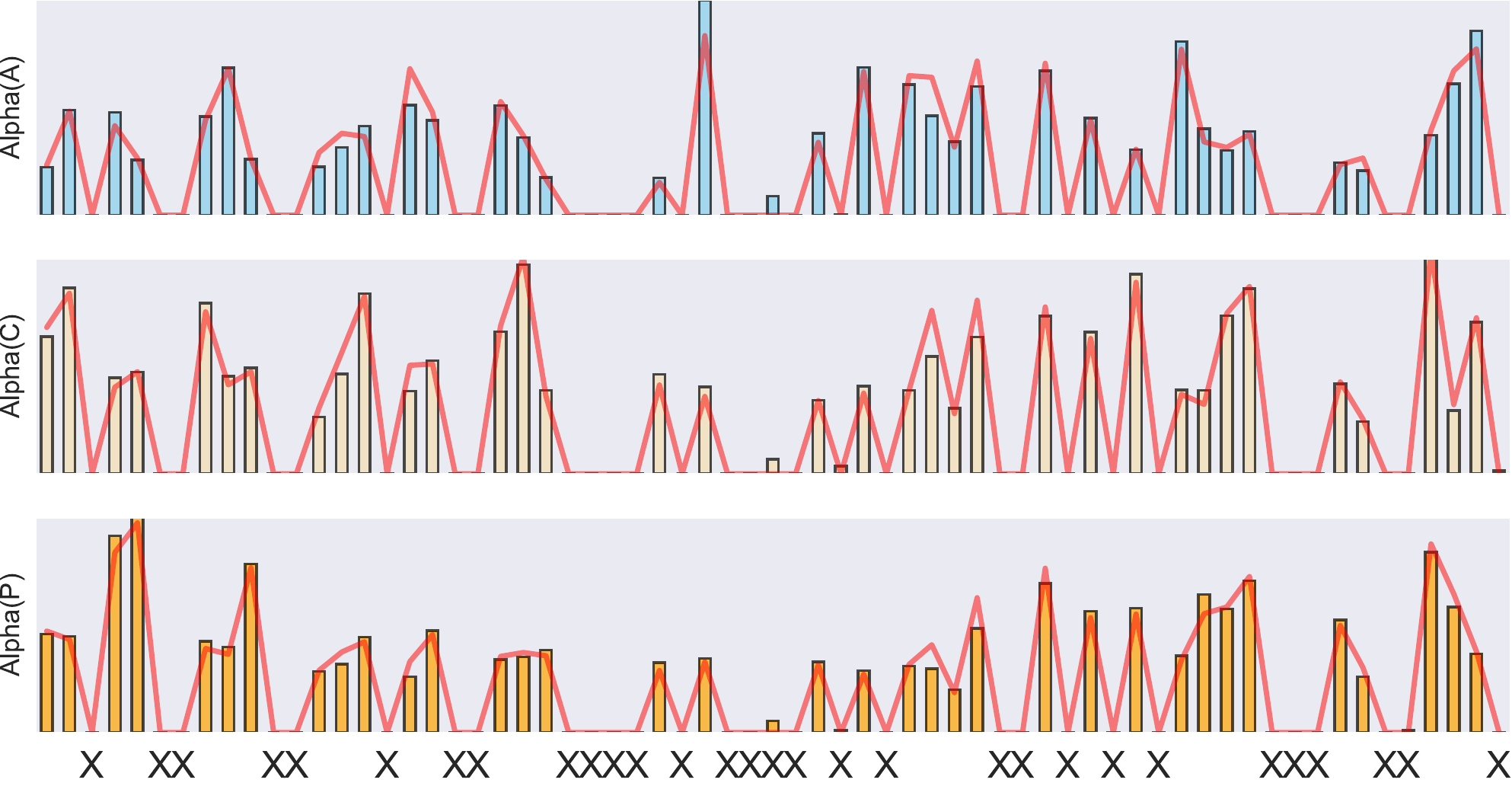}
     \caption{Target: Real-World}
  \end{subfigure}
  \caption{We select 35 classes and visualize estimated $\hat{\alpha}_t$ (the bar plot). The "X" along the x-axis represents the index of \textbf{dropped} 30 classes. The red curves are the ground-truth label distribution ratio.}
  \label{fig:appendix_office_home_abl3}
\end{figure}

%\newpage
%\bibliography{icml}
%\bibliographystyle{icml2021}

\end{document}